\providecommand{\tabularnewline}{\\}
\providecommand{\algorithmname}{Algorithm}
\theoremstyle{remark}
\newtheorem{rem}{\protect\remarkname}
\theoremstyle{plain}
\newtheorem{prop}{\protect\propositionname}
\theoremstyle{plain}
\newtheorem{lem}{\protect\lemmaname}
\theoremstyle{plain}
\newtheorem{thm}{\protect\theoremname}
\author{
	Zheng~Xing,~\IEEEmembership{Student Member,~IEEE},
	and Junting~Chen,~\IEEEmembership{Member,~IEEE}
	
\thanks{Manuscript submitted June 27, 2023, revised November 16, 2023, and accepted February 5, 2023. (Corresponding author: Junting Chen.)}
\thanks{The work was supported in part by the National Science Foundation of China (NSFC) under Grant No. 62171398, by the Basic Research Project No. HZQB-KCZYZ-2021067 of Hetao Shenzhen-HK S\&T Cooperation Zone, by NSFC Grant No. 62293482, by the Shenzhen Science and Technology Program under Grant No. JCYJ20210324134612033 and No. KQTD20200909114730003, by Guangdong Research Projects No. 2019QN01X895, No. 2017ZT07X152, and No. 2019CX01X104, by the Shenzhen Outstanding Talents Training Fund 202002, by the Guangdong Provincial Key Laboratory of Future Networks of Intelligence (Grant No. 2022B1212010001), by the National Key R\&D Program of China with grant No. 2018YFB1800800, and by the Key Area R\&D Program of Guangdong Province with grant No. 2018B030338001.}
\thanks{Z.~Xing and J.~Chen are with the School of Science and Engineering, 
	and the Future Network of Intelligence Institute (FNii), The Chinese University of Hong Kong, Shenzhen, Guangdong 518172, China.}



}
\newcommand{\newac}{\newacronym}
\newcommand{\ac}{\gls}
\newcommand{\Ac}{\Gls}
\providecommand{\lemmaname}{Lemma}
\providecommand{\propositionname}{Proposition}
\providecommand{\remarkname}{Remark}
\providecommand{\theoremname}{Theorem}
\begin{document}
\title{Constructing Indoor Region-based Radio Map without Location Labels}
\maketitle
\begin{abstract}
Radio map construction requires a large amount of radio measurement
data with location labels, which imposes a high deployment cost. This
paper develops a region-based radio map from \ac{rss} measurements
without location labels. The construction is based on a set of blindly
collected \ac{rss} measurement data from a device that visits each
region in an indoor area exactly once, where the footprints and timestamps
are not recorded. The main challenge is to cluster the \ac{rss} data
and match clusters with the physical regions. Classical clustering
algorithms fail to work as the \ac{rss} data naturally appears as
non-clustered due to multipaths and noise. In this paper, a signal
subspace model with a sequential prior is constructed for the \ac{rss}
data, and an integrated segmentation and clustering algorithm is developed,
which is shown to find the globally optimal solution in a special
case. Furthermore, the clustered data is matched with the physical
regions using a graph-based approach. Based on real measurements from
an office space, the proposed scheme reduces the region localization
error by roughly 50\% compared to a \ac{wcl} baseline, and it even
outperforms some supervised localization schemes, including \ac{knn},
\ac{svm}, and \ac{dnn}, which require labeled data for training.
\end{abstract}

\begin{IEEEkeywords}
Localization, blind calibration, radio map, subspace clustering, segmentation.
\end{IEEEkeywords}

\glsresetall

\section{Introduction}

Location-based services have gained significant attention in the industry
and research community due to the proliferation of mobile devices
\cite{Zaf:J19,Chen:J22}. While several advanced triangulation-based
approaches, such as \ac{toa}\cite{GuoNi:J22}, \ac{tdoa}\cite{Haj:J22},
or \ac{aoa}\cite{SunYi:J22}, can achieve sub-meter level localization
performance under \ac{los} conditions, they require specialized and
complex hardware to enable. In many indoor applications, rough accuracy
is acceptable, but hardware cost is a primary concern. For instance,
monitoring the locations of numerous equipment in a factory necessitates
a large number of low-cost, battery-powered devices, while, in this
application, meter-level accuracy suffices, {\em e.g.}, it suffices
to determine whether the equipment is in room A or B. Consequently,
\ac{rss}-based localization may be found as the most cost-effective
solution for indoor localization in these scenarios, as it does not
require complicated hardware or a sophisticated localization protocol.

Traditional \ac{rss}-based indoor localization algorithms can be
roughly categorized into model-based, model-free, and data-driven
approaches. Model-based approaches \cite{Pand:J22,PraSur:20} first
estimate a path loss model to describe how the \ac{rss} varies with
propagation distance, and then estimate the target location by measuring
the propagation distance from sensors with known locations based on
the measured \ac{rss}. However, these approaches require calibration
for the path loss model and are also highly sensitive to signal blockage.
Model-free approaches employ an empirical formula to estimate the
target location. For example, \ac{wcl} approaches \cite{PhoSon:J18,Pand:J22}
estimate the target location as the weighted average of sensor locations
using an empirical formula, where the \ac{rss} values can be used
as weights. However, the choice of the empirical formula significantly
affects the localization accuracy of \ac{wcl}. Data-driven approaches
mostly require an offline measurement campaign to collect {\em location-labeled}
\ac{rss} measurements at numerous spots in the target area to build
a {\em fingerprint} database \cite{ShrSag:J22,DuXia:J21,WangLuo:J21,SunChen:C22,WangLuo:J18,WanXia:J20,GaoZha:C22,WuGao:C23,SunChen:J22}.
The target is localized by comparing the \ac{rss} measurements with
the fingerprints in the database. However, such fingerprinting approaches
not only require extensive labor to collect a large amount of \ac{rss}
measurements tagged with location labels, but also require significant
calibration effort after the system is deployed, because the \ac{rss}
signature may vary due to changes in the environment, such as of the
change of furniture. An outdated fingerprint database may degrade
the localization performance. Therefore, reducing the construction
and calibration costs is a critical issue for \ac{rss}-based localization.

There are some works on reducing the construction and calibration
effort required for fingerprint localization. For example, the work
\cite{CheKev:C21} employed \ac{gan} to augment the fingerprint training
dataset. Moreover, the work \cite{GaoWu:J23} proposed a meta-learning
approach to train the fingerprint with a few labeled samples. In addition,
some works utilize interpolation, such as Kriging spatial interpolation\cite{LiyNis:C19},
to recover a significant amount of unlabeled data based on a small
amount of labeled data. Furthermore, the work \cite{FerRau:J21} proposed
a spatial \ac{csi} process (channel chart) based on \ac{csi} data
without location labels, and calibrated the geographic coordinates
of the channel chart based on a small amount of location-labeled \ac{csi}
data. These approaches still require a certain amount of location-labeled
\ac{rss} measurement data, resulting in non-negligible construction
and calibration cost. Finally, there are some works on constructing
fingerprint database using unsupervised approaches. For example, the
work \cite{JunMoo:J15} recovered location labels by estimating the
\ac{rss} measurement trajectory using a \ac{hmm}-based method based
on a number of revisiting trajectories. Other attempts \cite{WuYan:J12,GaoHar:J17,ChoJeo:J22}
required mobility information obtained from additional sensors, including
inertial sensors, accelerometers, gyroscopes, and magnetometers, etc.

This paper proposes a {\em region-based radio map} for coarse indoor
localization, where the radio map is constructed via {\em unsupervised}
learning from \ac{rss} measurements {\em without} location labels.
For coarse localization, the indoor area is divided into several regions
of interest, and the target is localized to one of these regions.
It is important to note that such a localization problem arises in
various application scenarios, such as tracking equipment in a factory
and monitoring visitors in restricted areas, where having a coarse
accuracy is sufficient, but hardware cost and calibration cost are
the primary concerns. The region-based radio map consists of \ac{rss}
features associated with each region. Therefore, the fundamental question
is how to construct the radio map using unlabeled \ac{rss} data to
reduce the calibration cost.

To tackle this problem, we construct and explore some characteristics
for the \ac{rss} data. First, we assume that the data is mostly collected
sequentially from each region. This corresponds to a scenario where
a mobile device visits each region once, while the sensor network
collects the \ac{rss} of the signal emitted from the mobile. It is
important to note that the trajectory of the mobile and the timing
the mobile entering or leaving the region are not necessarily known
by the network. Therefore, such an assumption induces almost no calibration
cost. Second, we assume a subspace model for the \ac{rss} data, where
the \ac{rss} vector lies in a low-dimensional affine subspace that
varies across different regions.

With these two assumptions, the construction of a region-based radio
map can be formulated as a clustering problem using sequential data.
However, classical subspace clustering algorithms \cite{vidal:J11,ColAnt:J21,FanJic:J21}
were not optimized for \ac{rss} sequential data. Some works in a
broader domain investigated sequential data clustering for video segmentation
applications. For example, the work \cite{DuWan:C22} partitioned
the entire sequence into parts and grouped segments together. The
works \cite{TieGao:C14,Xia:J22,zhou:J22} learned the similarity between
samples with sequential priors and adopted graph-based clustering
methods. Furthermore, the work \cite{ZhoFu:C20} leveraged knowledge
from relevant labeled source sequential data to enhance the similarity
graph of unlabeled target sequential data. However, these methods
cannot be extended to clustering \ac{rss} data. This is because the
\ac{rss} data naturally appears as non-clustered due to multipaths
and noise, and even adjacent \ac{rss} data collected in the same
room can be divided into two clusters, despite the usage of the sequential
prior. Moreover, it is challenging to associate clusters with physical
regions using the irregular clustering results generated by these
methods. Thus, we need to address the following two major challenges:
\begin{itemize}
\item \textbf{How to cluster the unlabeled \ac{rss} measurement data?}
In practice, it is observed that the measured \ac{rss} fluctuates
significantly even within a small area. Therefore, clustering them
into groups using classical clustering approaches poses a challenge.
\item \textbf{How to match the clustered, yet unlabeled data to the physical
regions?} Since the sensor network cannot observe any location labels,
extracting location information remains a challenge, even if the \ac{rss}
data is perfectly clustered.
\end{itemize}
$\quad$In this paper, we formulate a maximum-likelihood estimation
problem with a sequential prior to cluster the \ac{rss} data. Consequently,
the clustering problem is transformed into a sequence segmentation
problem. Our preliminary work \cite{Xing:C22} attempted to solve
the segmentation problem using a gradient-type algorithm, but the
solution is prone to getting trapped at a poor local optimum. Here,
we introduce a merge-and-split algorithm that is to converge to a
globally optimal solution for a special case. Global convergence in
a general case is also observed in our numerical experiments. To match
the clusters with the physical regions, we construct a graph model
for a set of possible routes that may form the \ac{rss} sequential
data; such a model leads to a Viterbi algorithm for the region matching,
which can achieve a matching error of less than 1\%. To compare with
some existing unsupervised learning approaches, such as \ac{hmm}-based
approaches \cite{JunMoo:J15}, the proposed method does not require
a parametric propagation model for the \ac{rss}, which is usually
needed in a conventional \ac{hmm}, and hence, the proposed work is
algorithmically more stable as compared to \ac{hmm} which needs to
alternatively learn the model parameters and recovering the physical
trajectories.

To summarize, we make the following contributions:
\begin{itemize}
\item We develop an unsupervised learning framework to construct a region-based
radio map without location labels. The approach is based on solving
a subspace clustering problem with sequential prior.
\item We transform the clustering problem to a segmentation problem which
is solved by a novel merge-and-split algorithm. We establish optimality
guarantees for the algorithm under a special case.
\item We conduct numerical experiments using real measurements from an office
space and a larger area. It is found that the proposed unsupervised
scheme even achieves a better localization performance than several
supervised learning schemes which use location labels during the training,
including \ac{knn}, \ac{svm}, and \ac{dnn}.
\end{itemize}
$\quad$The remaining part of the paper is organized as follows. Section
\ref{sec:System-Model} introduces a signal subspace model for the
region-based radio map, a sequential data collection model, and a
probability model with a sequential prior. Section \ref{sec:Subspace-Feature}
develops the subspace feature solution, and the solution for matching
clusters to physical regions. Section \ref{sec:Clustering-via-Segmentation}
focuses on the development of the clustering algorithm and the potential
optimality guarantees. Experimental results are reported in Section
\ref{sec:Experiments} and the paper is concluded in Section \ref{sec:Conclusion}.

\section{System Model}

\label{sec:System-Model}

\subsection{Signal Subspace Model for the Region-based Radio Map}

Suppose that there are $D$ sensors with locations $\mathbf{z}_{j}\in\mathbb{R}^{2}$,
$j=1,2,\dots,D$, deployed in an indoor area. The sensors, such as
WiFi sensors, are capable of measuring the \ac{rss} of the signal
emitted by a wireless device, forming an \ac{rss} measurement vector
$\mathbf{x}\in\mathbb{R}^{D}$, although they may not be able to decode
the message of the device. Consider partitioning the indoor area into
$K$ non-overlapping regions. It is assumed that signals emitted from
the same region to share common feature due to the proximity of the
transmission location and the similarity of the propagation environment.
In practice, a room or a semi-closed space separated by large furniture
or walls can be naturally considered as a region, where the intuition
is that walls and furniture may shape a common feature for signals
emitted from a neighborhood surrounded by these objects. Given the
region partition, this paper focuses on extracting the large-scale
{\em feature} of each of the regions and building a region-based
radio map from the \ac{rss} measurements $\{\mathbf{x}\}$ without
location labels.

Assume that the \ac{rss} measurements $\mathbf{x}_{i}$ in decibel
scale taken in region $k$ satisfy
\begin{equation}
\mathbf{x}_{i}=\mathbf{U}_{k}\bm{{\theta}}_{i}+\mathbf{{\rm \bm{\mu}}}_{k}+\bm{\epsilon}_{i},\quad\forall i\in\mathcal{C}_{k}\label{eq:x1}
\end{equation}
where $\mathbf{U}_{k}\in\mathbb{R}^{D\times d_{k}}$ is a semi-unitary
matrix with $\mathbf{U}_{k}^{\text{T}}\mathbf{U}_{k}=\mathbf{I}$,
$\bm{\theta}_{i}\sim\mathcal{N}(\bm{0},\bm{\Sigma}_{k})$ is an independent
variable that models the uncertainty due to the actual measurement
location when taking the measurement sample $\mathbf{x}_{i}$ in region
$k$, $\bm{\Sigma}_{k}$ is assumed as a full-rank diagonal matrix
with non-negative diagonal elements, $\bm{\mu}_{k}\in\mathbb{R}^{D}$
captures the offset of the signal subspace, $\bm{\epsilon}_{i}\sim\mathcal{N}(\bm{0},s_{k}^{2}\mathbf{I}_{D\times D})$
models the independent measurement noise, and $\mathcal{C}_{k}$ is
the index set of the measurements $\mathbf{x}_{i}$ taken within the
$k$th region. As such, the parameters $\{\mathbf{U}_{k},\bm{\mu}_{k}\}$
specify an affine subspace with dimension $d_{k}$ for the noisy measurement
$\mathbf{x}_{i}$, and they are the \emph{feature} to be extracted
from the measurement data $\{\mathbf{x}_{i}\}$. Thus, region-based
radio map is a database that maps the $k$th physical region to the
signal subspace feature $\{\mathbf{U}_{k},\bm{\mu}_{k}\}$.
\begin{rem}
(Interpretation of the Subspace Model): As the user has $2$ spatial
degrees of freedom to move around in the region, the \ac{rss} vector
$\mathbf{x}$ may be modeled as a point moving on a two-dimensional
hyper-surface $\mathcal{S}$ embedded in $\mathbb{R}^{D}$. Therefore,
for a sufficiently small area, an affine subspace with dimension $d_{k}$$=2$
or $3$ can locally be a good approximation of $\mathcal{S}$.
\end{rem}

\subsection{Sequential Data Collection and the Graph Model}

\label{subsec:Sequential-Data-Collection-graph-model}

When the measurement location label sets are \emph{not} available,
it is very difficult to obtain the subspace feature $\{\mathbf{U}_{k},\bm{\mu}_{k}\}$.
While conventional subspace clustering algorithms, such as the \ac{em}
approach, are designed for recovering both the location label sets
and the subspace feature $\{\mathbf{U}_{k},\bm{\mu}_{k}\}$, they
may not work for the large noise case, which is a typical scenario
here as the \ac{rss} data has a large fluctuation due to the multipath
effect. To tackle this challenge, we consider a type of measurement
that provide some implicit structural information without substantially
increasing the effort on data collection.

We assume that the sequence of measurements $\mathbf{x}_{1},\mathbf{x}_{2},\dots,\mathbf{x}_{N}$
are taken along an arbitrary route that visits all the $K$ regions
without repetition.\footnote{Note that, our model and algorithm can accommodate some repetition, i.e., visiting a region multiple times, as long as the repetition is limited. This is accounted for by the noise term  in the subspace model (1).}
Recall that $\mathcal{C}_{k}$ is the collection of the measurements
collected from the $k$th region along the route, and therefore, for
any $i\in\mathcal{C}_{k}$ and $j\in\mathcal{C}_{k+1}$, we must have
$i<j$. Note that the exact route, the locations of the measurements,
the sojourn time that the mobile device spends in each region, and
the association between the sequential measurement set $\mathcal{C}_{k}$
and the $k$th physical region are \emph{unknown} to the system.

Nevertheless, we can model the eligibility of a specific route. A
route $\bm{\pi}$ is modeled as a permutation sequence with the first
$K$ natural numbers, where the $k$th element $\pi(k)$ refers to
the location label of the $k$th region along the route. Define a
graph $\mathcal{G}=(\mathcal{V},\mathcal{E})$ where each node in
$\mathcal{V}=\{1,2,\dots,K\}$ represents one of the $K$ regions,
and each edge $(k,j)\in\mathcal{E}$ represents that it is possible
to directly travel from region $k$ to region $j$ without entering
the other region. Therefore, a route $\bm{\pi}$ is eligible only
if there is an edge between adjacent nodes along the route, \emph{i.e.},
$(\pi(j),\pi(j+1))\in\mathcal{E}$, for $\forall1\leq j\leq K-1$.

\subsection{Probability Model with a Sequential Prior}

\label{subsec:Probability-Model-sequential-prior}

From the signal subspace model (\ref{eq:x1}), the conditional distribution
of $\mathbf{x}_{i}$, given that it belongs to the $k$th region,
is given by
\begin{align}
p_{k}(\mathbf{x};\bm{\Theta}) & =\frac{1}{(2\pi)^{D/2}|\mathbf{C}_{k}|^{1/2}}\nonumber \\
 & \qquad\times\mathrm{exp}\left(-\frac{1}{2}(\mathbf{x}-\mathbf{{\rm \bm{\mu}}}_{k})^{\text{{T}}}\mathbf{C}_{k}^{-1}(\mathbf{x}-\mathbf{{\rm \bm{\mu}}}_{k})\right)\label{eq:pkxi}
\end{align}
where $\mathbf{C}_{k}=\mathbf{U}_{k}\bm{\Sigma}_{k}\mathbf{U}_{k}^{\text{{T}}}+s_{k}^{2}\mathbf{I}$
is the conditional covariance matrix for the $k$th cluster, and $\bm{\Theta}=\{\mathbf{U}_{k},\bm{\Sigma}_{k},\mathbf{{\rm \bm{\mu}}}_{k},s_{k}^{2}\}_{k=1}^{K}$
is a shorthand notation for the collection of parameters.

Let $t_{k}$ be the last index of $\mathbf{x}_{i}$ before the device
leaves the $k$th region and enters the $(k+1)$th region. Consequently,
we have $t_{0}=0<t_{1}<t_{2}<\cdots<t_{K-1}<t_{K}=N$, and for each
$k=1,2,\dots,K$, all elements $i\in\mathcal{C}_{k}$ satisfy $t_{k-1}<i\leq t_{k}$.
For a pair of parameters $a<b$, an indicator function is defined
as
\begin{equation}
z_{i}(a,b)=\begin{cases}
\begin{array}{l}
1,\\
0,
\end{array} & \begin{array}{l}
a<i\leq b\\
\text{otherwise.}
\end{array}\end{cases}\label{eq:rectangle-window}
\end{equation}
As a result, the probability density function of measurement $\mathbf{x}_{i}$
can be given by 
\begin{equation}
p(\mathbf{x}_{i};\bm{\Theta},\bm{t})=\prod_{k=1}^{K}p_{k}(\mathbf{x}_{i};\bm{\Theta})^{z_{i}(t_{k-1},t_{k})}\label{eq:pdf}
\end{equation}
where $\bm{t}=(t_{1},t_{2},\dots,t_{K-1})$ is a collection of the
time indices of the segment boundaries. Note that, for each $i$,
$\sum_{k}z_{i}(t_{k-1},t_{k})=1$, and $z_{i}(t_{k-1},t_{k})=1$ only
under $t_{k-1}<i\leq t_{k}$. Therefore, given $i\in\mathcal{C}_{k}$,
equation (\ref{eq:pdf}) reduces to $p(\mathbf{x}_{i})=p_{k}(\mathbf{x}_{i})$.

Consider a log-likelihood cost function $\log\prod_{i=1}^{N}p(\mathbf{x}_{i};\bm{\Theta},\bm{\tau})$
which can be equivalently written as 
\begin{equation}
\mathcal{J}(\bm{\Theta},\bm{\tau})=\frac{1}{N}\sum_{i=1}^{N}\sum_{k=1}^{K}z_{i}(\tau_{k-1},\tau_{k})\log p_{k}(\mathbf{x}_{i};\bm{\Theta},\bm{\tau})\label{eq:J}
\end{equation}
where $\bm{\tau}=(\tau_{1},\tau_{2},\dots,\tau_{K-1})$ denotes an
estimator for $\bm{t}$. Throughout the paper, we implicitly define
$\tau_{0}=0,\tau_{K}=N$ for mathematical convenience.

It follows that $z_{i}(\tau_{k-1},\tau_{k})$ represents a rectangle
window that selects only the terms $\log p_{k}(\mathbf{x}_{i};\bm{\Theta},\bm{\tau})$
for $\tau_{k-1}<i\leq\tau_{k}$ and suppresses all the other terms.
Thus, it acts as a \emph{sequential prior} that selects a subset of
$\{\mathbf{x}_{i}\}$ in a row for $\mathcal{C}_{k}$.

While we have made an assumption from model (\ref{eq:x1}) that the
measurements are statistically independent, in practice, the measurements
$\mathbf{x}_{i}$ taken in the transient phase from one region to
the other may substantially deviate from both subspaces $\{\mathbf{U}_{k-1},\bm{\mu}_{k-1}\}$
and $\{\mathbf{U}_{k},\bm{\mu}_{k}\}$, leading to large modeling
noise $\bm{\epsilon}_{i}$. To down-weight the data possibly taken
in the transient phase, we extend the rectangle window model $z_{i}(a,b)$
in (\ref{eq:rectangle-window}) for the sequential prior to a smooth
window 
\begin{equation}
z_{i}(a,b)=\sigma_{\beta}\left(i-a\right)-\sigma_{\beta}\left(i-b\right)\label{eq:sigmoid_2}
\end{equation}
where $\sigma_{\beta}(x)$ is a sigmoid function that maps $\mathbb{R}$
to $[0,1]$ with a property that $\sigma_{\beta}(x)\to0$ as $x\to-\infty$
and $\sigma_{\beta}(x)\to1$ as $x\to+\infty$. The parameter $\beta$
controls the slope of the transition from $0$ to $1$. A choice of
a specific sigmoid function is $\sigma_{\beta}(x)=(1+e^{-(x-1/2)/\beta})^{-1}$.
For such a choice of $\sigma_{\beta}(x)$, the rectangle window (\ref{eq:rectangle-window})
is a special case of (\ref{eq:sigmoid_2}) for $\beta\to0$, where
one can easily verify that, for all $k$, we have $z_{i}(t_{k-1},t_{k})\to1$
for all $i\in\mathcal{C}_{k}$ and $z_{i}(t_{k-1},t_{k})\to0$ for
$i\notin\mathcal{C}_{k}$.

The subspace clustering problem with a sequential prior is formulated
as follows
\begin{align}
\underset{\bm{\Theta},\bm{\tau}}{\text{maximize}} & \quad\mathcal{J}(\bm{\Theta},\bm{\tau})\label{eq:P0}\\
\text{subject to} & \quad0<\tau_{1}<\tau_{2}<...<\tau_{K-1}<N.\label{eq:P01-constraint}
\end{align}

Note that an \ac{em}-type algorithm cannot solve (\ref{eq:P0}) due
to the sequential structure imposed by $z_{i}(\tau_{k-1},\tau_{k})$.
Our prior work \cite{Xing:C22} investigated a gradient approach,
but the iteration easily gets trapped at a bad local optimum even
under the simplest form of the model at $d_{k}=0$, as shown later
in the experiment section. In the rest of the paper, we establish
a new solution framework to solve (\ref{eq:P0}) and derive the conditions
for achieving the optimality.

\section{Subspace Feature Extraction and\\ Region Matching}

\label{sec:Subspace-Feature}

In this section, we focus on the solution $\bm{\Theta}$ when the
partition variable $\bm{\tau}$ in (\ref{eq:P0}) is fixed, and develop
a maximum likelihood estimator $\hat{\bm{\Theta}}(\bm{\tau})$ as
a function of $\bm{\tau}$ and the data. Then, we develop a method
to map the subspace feature $\{\mathbf{U}_{k},\bm{\mu}_{k}\}$ to
the physical region.

\subsection{Subspace Feature via Maximum-Likelihood \Ac{pca}}

\label{subsec:Subspace-Feature-via-ML}

The solution $\bm{\Theta}$ for a given $\bm{\tau}$ can be derived
as follows. First, from the conditional probability model (\ref{eq:pkxi}),
the maximizer of $\bm{\mu}_{k}$ to $\mathcal{J}$ in (\ref{eq:P0})
can be obtained by setting the derivative of $\mathcal{J}$ with respect
to $\bm{\mu}_{k}$ to zero, leading to the unique solution
\begin{equation}
\hat{\mathbf{{\rm \bm{\mu}}}}_{k}=\frac{1}{\sum_{i=1}^{N}z_{i}(\tau_{k-1},\tau_{k})}\sum_{i=1}^{N}z_{i}(\tau_{k-1},\tau_{k})\mathbf{x}_{i}.\label{eq:hat-mu-k}
\end{equation}

Then, denote $\mathbf{W}_{k}=\mathbf{U}_{k}\bm{\Sigma}_{k}^{1/2}$
for notational convenience. The critical point of $\mathcal{J}$ with
respect to the variable $\mathbf{W}_{k}$ is obtained by setting the
derivative 
\begin{equation}
\frac{\partial\mathcal{J}}{\partial\mathbf{W}_{k}}=\mathbf{C}_{k}^{-1}\mathbf{S}_{k}\mathbf{C}_{k}^{-1}\mathbf{W}_{k}-\mathbf{C}_{k}^{-1}\mathbf{W}_{k}\label{eq:deriv_Wk}
\end{equation}
to zero, leading to the equation 
\begin{equation}
\mathbf{S}_{k}\mathbf{C}_{k}^{-1}\mathbf{W}_{k}=\mathbf{W}_{k}\label{eq:equation-for-W}
\end{equation}
where 
\begin{equation}
\mathbf{S}_{k}=\frac{1}{\sum_{i=1}^{N}z_{i}(\tau_{k-1},\tau_{k})}\sum_{i=1}^{N}z_{i}(\tau_{k-1},\tau_{k})(\mathbf{x}_{i}-\hat{\bm{\mu}}_{k})(\mathbf{x}_{i}-\hat{\bm{\mu}}_{k})^{\text{{T}}}\label{eq:Sk}
\end{equation}
is the sample covariance of the data $\{\mathbf{x}_{i}\}$ that are
weighted by the $k$th sequential prior $z_{i}(\tau_{k-1},\tau_{k})$
for the $k$th subspace.

Recall that $\mathbf{C}_{k}=\mathbf{W}_{k}\mathbf{W}_{k}^{\text{T}}+s_{k}^{2}\mathbf{I}$,
and $\mathbf{W}_{k}^{\text{T}}\mathbf{W}_{k}=\bm{\Sigma}_{k}^{1/2}\mathbf{U}_{k}^{\text{T}}\mathbf{U}_{k}\bm{\Sigma}_{k}^{1/2}=\bm{\Sigma}_{k}$,
since $\mathbf{U}_{k}$ is semi-orthogonal and $\bm{\Sigma}_{k}$
is diagonal. We have the identity $\mathbf{C}_{k}^{-1}\mathbf{W}_{k}=\mathbf{W}_{k}(\bm{\Sigma}_{k}+s_{k}^{2}\mathbf{I})^{-1}$,
which can be easily verified by the relation $\mathbf{W}_{k}=\mathbf{U}_{k}\bm{\Sigma}_{k}^{1/2}$.
Hence, 
\begin{align}
\mathbf{S}_{k}\mathbf{C}_{k}^{-1}\mathbf{W}_{k} & =\mathbf{S}_{k}\mathbf{W}_{k}(\bm{\Sigma}_{k}+s_{k}^{2}\mathbf{I})^{-1}.\label{eq:SCW}
\end{align}
Using (\ref{eq:SCW}) and $\mathbf{W}_{k}=\mathbf{U}_{k}\bm{\Sigma}_{k}^{1/2}$,
equation (\ref{eq:equation-for-W}) becomes $\mathbf{S}_{k}\mathbf{U}_{k}\bm{\Sigma}_{k}^{1/2}(\bm{\Sigma}_{k}+s_{k}^{2}\mathbf{I})^{-1}=\mathbf{U}_{k}\bm{\Sigma}_{k}^{1/2}$,
which can be simplified to
\begin{equation}
\mathbf{S}_{k}\mathbf{U}_{k}=\mathbf{U}_{k}(\bm{\Sigma}_{k}+s_{k}^{2}\mathbf{I}).\label{eq:SU}
\end{equation}

It follows that (\ref{eq:SU}) is an eigenvalue problem, which can
be solved by constructing $\mathbf{U}_{k}$ as a collection of $d_{k}$
eigenvectors of $\mathbf{S}_{k}$. In addition, let $\lambda_{k,j}^{2}$
be the corresponding eigenvalue of the $j$th selected eigenvector
of\textbf{ }$\mathbf{S}_{k}$ for the construction of $\mathbf{U}_{k}$.
Then, the $j$th diagonal element of the diagonal matrix $\bm{\Sigma}_{k}$
can be set as $\sigma_{k,j}^{2}=\lambda_{k,j}^{2}-s_{k}^{2}$. It
will become clear as explained below that the best choice for constructing
$\mathbf{U}_{k}$ is to select the eigenvectors of $\mathbf{S}_{k}$
corresponding to the $d_{k}$-largest eigenvalues, and the best estimate
of $s_{k}^{2}$ is given by
\begin{align}
s_{k}^{2}=\frac{\sum_{j=d_{k}+1}^{D}\lambda_{k,j}^{2}}{D-d_{k}}.\label{eq:sk}
\end{align}

To see this, substituting $\hat{\bm{\mu}}_{k}$ from (\ref{eq:hat-mu-k})
and the solution $\mathbf{U}_{k}$ and $\bm{\Sigma}_{k}$ obtained
from (\ref{eq:SU}) to the log-likelihood function $\mathcal{J}(\bm{\Theta},\bm{\tau})$
in (\ref{eq:J}), we obtain
\begin{align}
\mathcal{J} & =-\frac{1}{2N}\sum_{k=1}^{K}\sum_{i=1}^{N}z_{i}(\tau_{k-1},\tau_{k})\Big\{ D\log(2\pi)+\log\prod_{j=1}^{d_{k}}\lambda_{k,j}^{2}\nonumber \\
 & \qquad\qquad+\log(s_{k}^{2(D-d_{k})})+\frac{1}{s_{k}^{2}}\sum_{j=d_{k}+1}^{D}\lambda_{k,j}^{2}+d_{k}\Big\}.\label{eq:sk-1}
\end{align}

\begin{figure}
	\begin{centering}
		\includegraphics[width=1\columnwidth]{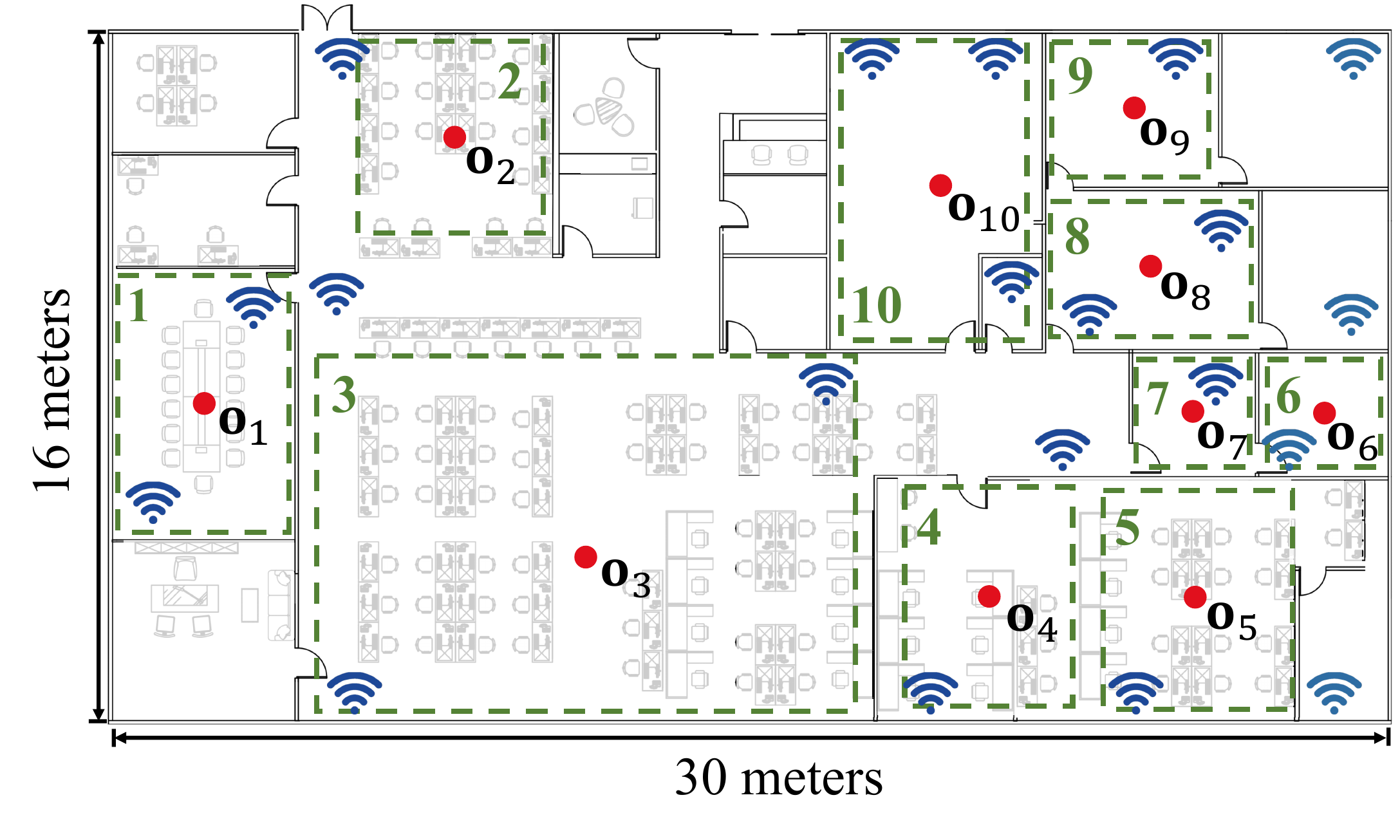}
		\par\end{centering}
	\caption{A $30\times16$ $\text{\text{m}}^{2}$\textbf{ }indoor area deployed
		with 21 sensors (blue icons) partitioned into 10 non-overlapping regions
		(green dashed rectangles), where the red circles refers to the reference
		region center.\label{fig:Ten-indoor-zones}}
\end{figure}

It has been shown in \cite{Tip:J99} that the maximizer to (\ref{eq:sk-1})
is obtained as the solution (\ref{eq:sk}) with $\lambda_{k,j}$,
$j=1,2,\dots,d_{k}$, being chosen as the $d_{k}$-largest eigenvalues
of $\mathbf{S}_{k}$. As such, we have obtained the solution $\bm{\Theta}$
from (\ref{eq:hat-mu-k}), (\ref{eq:SU})--(\ref{eq:sk}).

\subsection{Matching Clusters to Physical Regions}

\label{subsec:regionMatch}

Denote $\mathcal{D}_{k}\subseteq\mathbb{R}^{2}$ as the area of the
$k$th physical region. Our goal here is to map the subspace feature
$\{\mathbf{U}_{k},\bm{\mu}_{k}\}$ or the segment $(\tau_{k-1},\tau_{k})$
for the $k$th cluster of $\{\mathbf{x}_{i}\}$ to the physical region
$\mathcal{D}_{k}$. Mathematically, recall that the route $\bm{\pi}=(\pi(1),\pi(2),\dots,\pi(K))$
is modeled as a permutation sequence of the first $K$ natural numbers.
As a result, $\bm{\pi}$ is a function that matches the $k$th subspace
$\{\mathbf{U}_{k},\bm{\mu}_{k}\}$ to the physical region $\mathcal{D}_{\pi(k)}$,
for $1\leq k\leq K$.

Define $\mathbf{o}_{k}$ as the reference center location of the $k$th
region $\mathcal{D}_{k}$ as indicated by the red dots in Fig.~\ref{fig:Ten-indoor-zones}.
Thus, one essential idea is to link the clustered \ac{rss} measurements
$\mathbf{x}_{i}=(x_{i,1},x_{i,2},\dots,x_{i,D})$ with the location
topology of the sensors. Specifically, we employ the \ac{wcl} approach
to compute a reference location $\hat{\mathbf{o}}_{k}$ for the measurements
in the $k$th cluster $\mathcal{C}_{k}$:
\begin{equation}
\hat{\mathbf{o}}_{k}=\frac{1}{\left|\mathcal{C}_{k}\right|}\sum_{i\in\mathcal{C}_{k}}\frac{\sum_{j=1}^{D}w_{i,j}\mathbf{z}_{j}}{\sum_{j=1}^{D}w_{i,j}}\label{eq:ref-cent}
\end{equation}
where $w_{i,j}=(10^{x_{i,j}/10})^{\alpha}$ is the weight on the location
$\mathbf{z}_{j}$ of the $j$th sensor, $x_{i,j}$ is the \ac{rss}
of the $j$th sensor in the $i$th measurement $\mathbf{x}_{i}$,
and $\alpha$ is an empirical parameter typical chosen as \cite{PhoSon:J18,Pand:J22,TohEhs:C20,WanUrr:LJ11}.

Then, we exploit the {\em consistency property} that adjacent features
$\{\mathbf{U}_{k},\bm{\mu}_{k}\}$ and $\{\mathbf{U}_{k+1},\bm{\mu}_{k+1}\}$
should be mapped to physically adjacent regions $\mathcal{D}_{\pi(k)}$
and $\mathcal{D}_{\pi(k+1)}$, where $(\pi(k),\pi(k+1))\in\mathcal{E}$.
In Section \ref{subsec:Sequential-Data-Collection-graph-model}, we
have modeled the adjacency of any two physical regions using the graph
model $\mathcal{G}=(\mathcal{V},\mathcal{E})$ based on the layout
of the target area. With such a graph model, a graph-constrained matching
problem can be formulated as 
\begin{align}
\underset{\bm{\pi}}{\text{minimize}} & \quad\sum_{k=1}^{K}c(\hat{\mathbf{o}}_{\pi(k)},\mathbf{o}_{k})\label{eq:segmentation-based-matching}\\
\text{subject to} & \quad(\pi(j),\pi(j+1))\in\mathcal{E},\,\forall j=1,2,\dots,K-1\label{eq:constraint1}
\end{align}
where $c(\mathbf{p}_{1},\mathbf{p}_{2})$ is a cost function that
quantifies the difference between the two locations $\mathbf{p}_{1}$
and $\mathbf{p}_{2}$, for example, $c(\mathbf{p}_{1},\mathbf{p}_{2})=\|\mathbf{p}_{1}-\mathbf{p}_{2}\|_{2}$.
The constraint is to ensure that an eligible route only travels along
the edges of the graph $\mathcal{G}$ and the chosen region is non-repetitive.

Problem (\ref{eq:segmentation-based-matching}) can be solved by the
dynamic programming-based path searching strategy. For example, the
Viterbi algorithm can be applied to optimally solve (\ref{eq:segmentation-based-matching})
with a complexity of $\mathcal{O}(K^{3}N)$. To see this, any feasible
route $\bm{\pi}$ consists of $K$ moves. At each region, there are
at most $K-1$ candidate regions for the next move, and therefore,
there are at most $K(K-1)$ moves to evaluate in the Viterbi algorithm.
To evaluate the utility of each move, one computes (\ref{eq:segmentation-based-matching})
with a complexity of $\mathcal{O}(KN)$. This leads to the overall
complexity of $\mathcal{O}(K^{3}N)$ to solve (\ref{eq:segmentation-based-matching})
using the Viterbi algorithm.

\section{Clustering via Segmentation for\\ Sequential Data}

\label{sec:Clustering-via-Segmentation}

Based on the solution $\hat{\bm{\Theta}}(\bm{\tau})$ developed in
Section \ref{subsec:Subspace-Feature-via-ML}, it remains to maximize
$\mathcal{J}(\hat{\bm{\Theta}}(\bm{\tau}),\bm{\tau})$ over $\bm{\tau}$
subject to (\ref{eq:P01-constraint}), and the subspace clustering
problem (\ref{eq:P0}) becomes a segmentation problem. The main challenge
is the existence of multiple local maxima of $\mathcal{J}(\hat{\bm{\Theta}}(\bm{\tau}),\bm{\tau})$.

In this section, we first establish the optimality for a special case
of $d_{k}=0$ and $s_{k}^{2}=s^{2}$ for some $s>0$, which corresponds
to $K$-means clustering with a sequential prior. Based on this theoretical
insight, we then develop a robust algorithm to solve for the general
case of $d_{k}\geq0$.

For the special case of $d_{k}=0$ and $s_{k}^{2}=s^{2}$, the subspace
model (\ref{eq:x1}) degenerates to $\mathbf{x}_{i}=\bm{\mu}_{k}+\bm{\epsilon}_{i}$,
where $\bm{\mu}_{k}$ is the offset of the $k$th 0-dimensional subspace
and $\mathbf{U}_{k}=0$. The conditional probability (\ref{eq:pkxi})
becomes $p_{k}(\mathbf{x}_{i};\bm{\Theta})=\frac{1}{(2\pi)^{D/2}s}\exp(-\frac{1}{2s^{2}}\|\mathbf{x}_{i}-\bm{\mu}_{k}\|_{2}^{2})$.
Then, problem (\ref{eq:P0}) is equivalent to the following minimization
problem
\begin{align}
\underset{\bm{\Theta},\bm{\tau}}{\text{minimize}} & \quad\frac{1}{N}\sum_{i=1}^{N}\sum_{k=1}^{K}z_{i}(\tau_{k-1},\tau_{k})\|\mathbf{x}_{i}-\mathbf{{\rm \bm{\mu}}}_{k}\|_{2}^{2}\label{eq:P0-1}\\
\text{subject to} & \quad0<\tau_{1}<\tau_{2}<...<\tau_{K-1}<N\nonumber 
\end{align}
where the variable $\bm{\Theta}$ degenerates to a matrix $\bm{\Theta}=[\bm{\mu}_{1}\;\bm{\mu}_{2}\;\cdots\bm{\mu}_{K}]$
that captures the centers of the clusters. Substituting the solution
$\hat{\bm{\mu}}_{k}(\tau_{k-1},\tau_{k})$ in (\ref{eq:hat-mu-k})
to (\ref{eq:P0-1}), the cost function (\ref{eq:P0-1}) simplifies
to
\[
\tilde{f}(\bm{\tau})\triangleq\frac{1}{N}\sum_{i=1}^{N}\sum_{k=1}^{K}z_{i}(\tau_{k-1},\tau_{k})\big\|\mathbf{x}_{i}-\hat{\mathbf{{\rm \bm{\mu}}}}_{k}(\tau_{k-1},\tau_{k})\big\|_{2}^{2}
\]
which is a function \ac{wrt} $\bm{\tau}$.

\subsection{Asymptotic Property for Small $\beta$}

\label{subsec:Asymptotic-Property-for}

For each $\bm{\tau}$, it is observed from the solution $\hat{\bm{\mu}}_{k}(\tau_{k-1},\tau_{k})$
in (\ref{eq:hat-mu-k}) and the asymptotic property of the window
function $z_{i}(\cdot)$ in (\ref{eq:sigmoid_2}) that, as $\beta\to0$,
\begin{equation}
\hat{\mathbf{{\rm \bm{\mu}}}}_{k}(\tau_{k-1},\tau_{k})\to\tilde{\bm{\mu}}(\tau_{k-1},\tau_{k})\triangleq\frac{1}{\tau_{k}-\tau_{k-1}}\sum_{j=\tau_{k-1}+1}^{\tau_{k}}\mathbf{x}_{j}\label{eq:tilde-mu-k}
\end{equation}
uniformly for each $\tau_{k-1}$ and $\tau_{k}$ that satisfy $\tau_{k-1}+1\leq\tau_{k}$.

Based on $\tilde{\bm{\mu}}(\tau_{k-1},\tau_{k})$ in (\ref{eq:tilde-mu-k}),
define
\begin{align}
f_{k}(\tau_{k};\bm{\tau}_{-k}) & \triangleq\frac{1}{N}\sum_{i=\tau_{k-1}+1}^{\tau_{k+1}}\Big[\Big(1-\sigma_{\beta}(i-\tau_{k})\Big)\nonumber \\
 & \qquad\times\big\|\mathbf{x}_{i}-\tilde{\bm{\mu}}(\tau_{k-1},\tau_{k})\big\|_{2}^{2}\nonumber \\
 & \qquad+\sigma_{\beta}(i-\tau_{k})\big\|\mathbf{x}_{i}-\tilde{\bm{\mu}}(\tau_{k},\tau_{k+1})\big\|_{2}^{2}\Big]\label{eq:f_k}
\end{align}
for $k=1,2,\dots,K-1$, where $\bm{\tau}_{-k}\triangleq(\tau_{k-1},\tau_{k+1})$.
In addition, define
\begin{align}
f_{0}(\tau_{0}) & =\frac{1}{N}\sum_{i=1}^{\tau_{1}}\sigma_{\beta}(i-\tau_{0})\big\|\mathbf{x}_{i}-\tilde{\bm{\mu}}(\tau_{0},\tau_{1})\big\|_{2}^{2}\label{eq:f_0}\\
f_{K}(\tau_{K}) & =\frac{1}{N}\sum_{i=\tau_{K-1}+1}^{N}\big(1-\sigma_{\beta}(i-\tau_{K})\big)\nonumber \\
 & \qquad\times\big\|\mathbf{x}_{i}-\tilde{\bm{\mu}}(\tau_{K-1},\tau_{K})\big\|_{2}^{2}\label{eq:f_K}
\end{align}
for mathematical convenience. The cost function $\tilde{f}(\bm{\tau})$
can be asymptotically approximated as $\frac{1}{2}\sum_{k=0}^{K}f_{k}(\tau_{k};\bm{\tau}_{-k})$
as formally stated in the following result.
\begin{prop}
[Asymptotic Hardening]\label{prop:Asymptotic-Hardening} As $\beta\to0$,
we have 
\[
\tilde{f}(\bm{\tau})\to\frac{1}{2}\sum_{k=0}^{K}f_{k}(\tau_{k};\bm{\tau}_{-k})
\]
uniformly for every sequence $\bm{\tau}$ satisfying the constraint
(\ref{eq:P01-constraint}).
\end{prop}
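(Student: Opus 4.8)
The plan is to establish the identity in two stages: first replace the weighted center $\hat{\bm\mu}_k$ by the plain segment average $\tilde{\bm\mu}$, and then show that the resulting objective and $\frac12\sum_k f_k$ share a common $\beta\to0$ limit, namely the hard within-segment cost. I would begin by invoking the convergence $\hat{\bm\mu}_k(\tau_{k-1},\tau_k)\to\tilde{\bm\mu}(\tau_{k-1},\tau_k)$ from (\ref{eq:tilde-mu-k}). Since the weights $z_i(\cdot,\cdot)$ lie in $[0,1]$, only $N$ terms are involved, and each $\|\mathbf x_i\|$ is bounded, replacing $\hat{\bm\mu}_k$ by $\tilde{\bm\mu}$ inside $\tilde f$ perturbs the objective by a quantity that vanishes uniformly in $\bm\tau$; this follows by expanding $\|\mathbf x_i-\hat{\bm\mu}_k\|_2^2-\|\mathbf x_i-\tilde{\bm\mu}\|_2^2=(\hat{\bm\mu}_k-\tilde{\bm\mu})^{\text{T}}(\hat{\bm\mu}_k+\tilde{\bm\mu}-2\mathbf x_i)$ and bounding each factor. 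Hence it suffices to analyze $\bar f(\bm\tau)\triangleq\frac1N\sum_{i=1}^N\sum_{k=1}^K z_i(\tau_{k-1},\tau_k)\|\mathbf x_i-\tilde{\bm\mu}(\tau_{k-1},\tau_k)\|_2^2$.

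Next I would take the limit $\beta\to0$. Writing $g_i^k\triangleq\|\mathbf x_i-\tilde{\bm\mu}(\tau_{k-1},\tau_k)\|_2^2$ and using $\sigma_\beta(x)=(1+e^{-(x-1/2)/\beta})^{-1}$, whose threshold sits at $x=1/2$, every integer argument satisfies $\sigma_\beta(m)\to\mathbb{1}[m\ge1]$, so that $z_i(\tau_{k-1},\tau_k)=\sigma_\beta(i-\tau_{k-1})-\sigma_\beta(i-\tau_k)\to\mathbb{1}[\tau_{k-1}<i\le\tau_k]$. Because the double sum is finite with bounded summands, this gives $\bar f(\bm\tau)\to L(\bm\tau)\triangleq\sum_{k=1}^K T_k(\bm\tau)$, where $T_k(\bm\tau)\triangleq\frac1N\sum_{i=\tau_{k-1}+1}^{\tau_k}g_i^k$ is the hard cost of segment $k$. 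The same limiting weights show directly that $f_0\to T_1$, that $f_K\to T_K$, and that for $1\le k\le K-1$ one has $f_k\to T_k+T_{k+1}$, since within the window $(\tau_{k-1},\tau_{k+1}]$ the factor $1-\sigma_\beta(i-\tau_k)$ retains the indices $i\le\tau_k$ (yielding $T_k$) while $\sigma_\beta(i-\tau_k)$ retains the indices $i>\tau_k$ (yielding $T_{k+1}$).

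It then remains to assemble the telescoping count. Each segment cost $T_k$ appears in exactly two of the limiting terms: for interior $k$ it is the right piece of $f_{k-1}$ and the left piece of $f_k$, while $T_1$ appears in both $f_0$ and $f_1$, and $T_K$ in both $f_{K-1}$ and $f_K$. Consequently $\sum_{k=0}^K f_k\to2\sum_{k=1}^K T_k=2L$, so $\frac12\sum_{k=0}^K f_k\to L$, and combining with $\tilde f\to\bar f\to L$ gives $\tilde f(\bm\tau)-\frac12\sum_{k=0}^K f_k(\tau_k;\bm\tau_{-k})\to0$. The factor $\frac12$ is thus exactly the correction for the double counting of each segment across two adjacent local costs.

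The part I expect to require the most care is the uniformity, i.e.\ exhibiting a single threshold on $\beta$ controlling the error simultaneously for all admissible $\bm\tau$ (precisely what the merge-and-split algorithm needs, since it sweeps each $\tau_k$ across many positions). Two effects must be bounded uniformly: the sigmoid tails, where the full-range weights in $\bar f$ are nonzero but the truncated ranges of the $f_k$ discard them, which is harmless because the limiting indicator weight is exactly zero on the discarded region; and the boundary transitions. Using that $\tilde{\bm\mu}$, and hence the whole objective, is evaluated only at integer $\bm\tau$, every argument $i-\tau_k$ is an integer and so never hits the half-integer threshold, whence $|\sigma_\beta(i-\tau_k)-\mathbb{1}[i>\tau_k]|\le e^{-1/(2\beta)}$; since the index set $\{1,\dots,N\}$ is finite, this furnishes an error bound independent of $\bm\tau$ that decays to $0$, which is exactly the uniform convergence claimed.
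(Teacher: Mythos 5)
Your proposal is correct and takes essentially the same route as the paper's own proof: replace the soft-weighted centers $\hat{\bm{\mu}}_k$ with the hard segment averages $\tilde{\bm{\mu}}$ via the uniform convergence in (\ref{eq:tilde-mu-k}), harden the sigmoid windows into indicators as $\beta\to0$, and recognize that the factor $\tfrac{1}{2}$ arises because each segment's hard cost is double-counted across two adjacent local costs $f_{k-1}$ and $f_k$ (with $f_0$ and $f_K$ covering the end segments). Your explicit uniformity bound $|\sigma_{\beta}(m)-\mathbb{1}[m\geq1]|\leq e^{-1/(2\beta)}$ at integer arguments is a welcome sharpening of a step the paper only asserts, but it does not change the substance of the argument.
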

\begin{proof}
See Appendix \ref{appendix:Prop1}.
\end{proof}
Proposition \ref{prop:Asymptotic-Hardening} decomposes $\tilde{f}(\bm{\tau})$
into sub-functions $f_{k}(\tau_{k};\bm{\tau}_{-k})$ that only depend
on a subset of data $\mathbf{x}_{i}$ from $i=\tau_{k-1}+1$ to $i=\tau_{k+1}$.

\subsection{Asymptotic Consistency for Large $N$}

While the function $f_{k}(\cdot)$ in (\ref{eq:f_k}) has been much
simplified from $\tilde{f}(\bm{\tau})$, it is a stochastic function
affected by the measurement noise in (\ref{eq:x1}). Consider a scenario
of large $N$ for asymptotically dense measurement in each region
$k$. Specifically, the total number of the sequential measurements
grows in such a way that the segment boundary indices $t_{1},t_{2},\dots,t_{K-1}$
grow with a constant ratio $t_{k}/N=\bar{\gamma}_{k}$ with respect
to $N$ as $N$ grows, and the measurements in each region are independent.
In practice, this may correspond to a random walk in each of the region
for a fix, but \emph{unknown}, portion of time $\bar{\gamma}_{k}$
as $N$ grows.

We consider a deterministic proxy for $f_{k}(\cdot)$ under large
$N$ defined as 
\begin{equation}
F_{k}(\tau_{k};\bm{\tau}_{-k})=\mathbb{E}\{f_{k}(\tau_{k};\bm{\tau}_{-k})\}\label{eq:Fk-def}
\end{equation}
where the expectation is over the randomness of the measurement noise
$\bm{\epsilon}_{i}$ in (\ref{eq:x1}). As a result, $F_{k}(\cdot)$
represents the cost in the noiseless case. In the remaining part of
the paper, we may omit the argument $\bm{\tau}_{-k}$ and write $f_{k}(\tau_{k})$
and $F_{k}(\tau_{k})$ for simplicity, as long as it is clear from
the context.

To investigate the asymptotic property for large $N$, we replace
$\tau_{k}$ with $\gamma_{k}N$ in (\ref{eq:f_k}) and (\ref{eq:Fk-def}).
We define $\bar{f}_{k}(\gamma_{k})=\underset{\beta\rightarrow0}{\mathrm{lim}}\enskip f_{k}(\gamma_{k}N)$,
and $\bar{F}_{k}(\gamma_{k})=\underset{\beta\rightarrow0}{\mathrm{lim}}\enskip F_{k}(\gamma_{k}N)$
for $\gamma_{k}\in\Gamma=\{i/N:i=1,2,...,N\}$. Denote $\gamma_{k}^{*}$
as the minimizer of $\bar{F}_{k}(\gamma_{k})$, and $\hat{\gamma}_{k}$
as the minimizer of $\bar{f}_{k}(\gamma_{k})$. Then, we have the
following result.
\begin{prop}
[Asymptotic Consistency]\label{prop:Asymptotic-Consistency} Suppose
that, for some $k$, there exists only one index $t_{j}\in\{t_{1},t_{2},\dots,t_{K-1}\}$
within the interval $(\tau_{k-1},\tau_{k+1})$. Then, it holds that
$\hat{\gamma}_{k}\rightarrow\gamma_{k}^{*}$ in probability as $N\to\infty$.
\end{prop}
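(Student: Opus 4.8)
The plan is to read the statement as an \emph{argmin-consistency} (M-estimation) result for a single change point and to prove it through the two classical ingredients: a uniform law of large numbers together with a well-separated population minimizer. By Proposition~\ref{prop:Asymptotic-Hardening} and the definition of $\bar{f}_k$, the hardened objective is the two-segment within-cluster sum of squares over the fixed window, split at $\tau_k=\gamma_k N$,
\begin{equation}
\begin{split}
\bar{f}_k(\gamma_k)={}&\frac{1}{N}\sum_{i=\tau_{k-1}+1}^{\tau_k}\big\|\mathbf{x}_i-\tilde{\bm{\mu}}(\tau_{k-1},\tau_k)\big\|_2^2\\
&+\frac{1}{N}\sum_{i=\tau_k+1}^{\tau_{k+1}}\big\|\mathbf{x}_i-\tilde{\bm{\mu}}(\tau_k,\tau_{k+1})\big\|_2^2,
\end{split}
\end{equation}
with $\tau_{k-1}=\gamma_-N$ and $\tau_{k+1}=\gamma_+N$ held fixed. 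Writing $\alpha=t_j/N$ for the unique true boundary inside $(\gamma_-,\gamma_+)$, I would first show that $\bar{F}_k$ has a unique, well-separated minimizer at $\gamma_k^\star=\alpha$, and then show $\sup_{\gamma_k\in\Gamma}|\bar{f}_k(\gamma_k)-\bar{F}_k(\gamma_k)|\to 0$ in probability; the standard argmin theorem then delivers $\hat{\gamma}_k\to\gamma_k^\star$ in probability.

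For the population objective I would invoke the ANOVA identity ``total $=$ within $+$ between'' together with $\mathbb{E}\|\tilde{\bm{\mu}}(a,b)\|_2^2=\|\bar{\bm{\mu}}(a,b)\|_2^2+Ds^2/(b-a)$, where $\bar{\bm{\mu}}$ denotes the noiseless segment mean. Taking the expectation in \eqref{eq:Fk-def} (the interchange with the $\beta\to0$ limit being justified by dominated convergence, since all Gaussian moments are finite) collapses every $\gamma_k$-independent term, including the two $Ds^2$ contributions, into a constant and leaves $\bar{F}_k(\gamma_k)=\mathrm{const}+\tfrac{1}{N}W(\gamma_k)$, where $W(\gamma_k)$ is the \emph{noiseless} within-cluster scatter of the two distinct region centers $\bm{\mu}_a\neq\bm{\mu}_b$ flanking $t_j$. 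Because exactly one true boundary lies in the window, only one of the two segments is contaminated, and the two-group scatter formula gives the closed form $W(\gamma_k)=N\frac{(\alpha-\gamma_-)(\gamma_k-\alpha)}{\gamma_k-\gamma_-}\|\bm{\mu}_a-\bm{\mu}_b\|_2^2$ for $\gamma_k>\alpha$ (and the mirror expression for $\gamma_k<\alpha$). This vanishes exactly at $\gamma_k=\alpha$, is strictly positive elsewhere, and is bounded below by a positive constant on $\{|\gamma_k-\alpha|\ge\delta\}$ for every $\delta>0$; hence $\gamma_k^\star=\alpha=t_j/N$ is the unique, well-separated minimizer, which also exposes the ``consistency'' content of the statement, namely that the recovered split locks onto the true boundary. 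The only structural requirement is $\bm{\mu}_a\neq\bm{\mu}_b$, the identifiability assumption implicit in the subspace model.

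The uniform convergence is where the real work lies. I would express $\bar{f}_k$ through the cumulative statistics $\mathbf{S}_m=\sum_{i=\tau_{k-1}+1}^{m}\mathbf{x}_i$ and $Q_m=\sum_{i=\tau_{k-1}+1}^{m}\|\mathbf{x}_i\|_2^2$, so that $\bar{f}_k(\gamma_k)$ becomes a fixed continuous function of the normalized partial-sum processes $\tfrac1N\mathbf{S}_{\gamma_kN}$, $\tfrac1N Q_{\gamma_kN}$ and the fixed endpoint totals. A functional law of large numbers for partial sums of independent sub-Gaussian vectors, obtained from maximal inequalities (e.g.\ Kolmogorov's inequality) rather than pointwise moment bounds, gives $\sup_{\gamma_k}\|\tfrac1N\mathbf{S}_{\gamma_kN}-\text{(deterministic limit)}\|_2\to0$ and the analogue for $\tfrac1N Q_{\gamma_kN}$, in probability; continuity on the compact subinterval $[\gamma_-+\eta,\gamma_+-\eta]$ then transfers this to $\sup_{\gamma_k}|\bar{f}_k-\bar{F}_k|\to0$. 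Candidates within $\eta$ of the endpoints are ruled out separately: there the shrinking segment contributes negligible scatter while the large segment is forced to straddle $t_j$, so $\bar{f}_k$ stays uniformly above $\bar{f}_k(\alpha)$ with high probability and cannot host the minimizer.

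The hard part will be precisely this uniform step. Because $\hat{\gamma}_k$ ranges over the growing grid $\Gamma$, a pointwise law of large numbers is insufficient and one must control the entire partial-sum trajectory at once, while simultaneously taming the degeneracy of the empirical means $\tilde{\bm{\mu}}$ when a segment holds few samples near the window endpoints. The interior location of $\gamma_k^\star$ is what permits confining the delicate estimate to a compact subinterval, reducing the endpoint regime to a coarse ``large-objective'' bound.
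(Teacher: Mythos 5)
Your proposal is correct and follows essentially the same route as the paper's proof: show that $\bar{f}_{k}$ converges to $\bar{F}_{k}$ uniformly in probability over the grid by controlling the noise partial sums with Kolmogorov-type maximal inequalities, establish that the population objective has a unique, well-separated minimizer at the true boundary $t_{j}/N$, and conclude via the standard argmin-consistency theorem for M-estimators (the paper invokes van der Vaart, Theorem 5.7). The only local difference is the well-separation step, which you obtain from a self-contained closed-form ANOVA computation of the noiseless scatter, whereas the paper imports it from the separately proven unimodality property (Proposition \ref{prop:Unimodality}); both arguments deliver the same structural fact, so this is a variation in bookkeeping rather than a different method.
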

\begin{proof}
See Appendix \ref{appendix:Prop2}.
\end{proof}
Proposition \ref{prop:Asymptotic-Consistency} implies that if $\hat{\tau}_{k}$
minimizes $f_{k}(\tau_{k})$, and $\tau_{k}^{*}$ minimizes $F_{k}(\tau_{k})$
as $\beta\rightarrow0$, then, we have $\hat{\tau}_{k}/N\rightarrow\tau_{k}^{*}/N$
as $N\rightarrow\infty$ and $\beta\rightarrow0$. Thus, the estimator
$\hat{\tau}_{k}$ obtained from $f_{k}(\cdot)$ and the solution $\tau_{k}^{*}$
obtained from $F_{k}(\cdot)$ are asymptotically consistent in a wide
sense. Therefore, we use $F_{k}(\tau_{k})$ as a deterministic proxy
for the stochastic function $f_{k}(\tau_{k};\bm{\tau}_{-k})$ and
we thus study the property of $F_{k}(\tau_{k})$.

\subsection{The Property of the Deterministic Proxy $F_{k}(\tau_{k})$}

We find the following properties for $F_{k}(\tau_{k})$.
\begin{prop}
[Unimodality]\label{prop:Unimodality} Suppose that, for some $k$,
there exists only one index $t_{j}\in\{t_{1},t_{2},\dots,t_{K-1}\}$,
within the interval $(\tau_{k-1},\tau_{k+1})$. Then, for any $\varepsilon>0$,
there exists a small enough $\beta$, and some finite constants $C_{1},C_{2},C_{1}',C_{2}'>0$
independent of $\varepsilon$, such that $F_{k}(\tau)-F_{k}(\tau-1)<\varepsilon C_{1}-C_{2}<0$,
for $\tau_{k-1}<\tau\leq t_{j}$, and $F_{k}(\tau)-F_{k}(\tau-1)>C_{2}'-\varepsilon C_{1}'>0$,
for $t_{j}<\tau<\tau_{k+1}$. In addition, $t_{j}$ minimizes $F_{k}(\tau)$
in $(\tau_{k-1},\tau_{k+1})$.
\end{prop}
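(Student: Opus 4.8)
The plan is to pass to the hard-threshold limit, analyze the resulting deterministic cost, and then absorb the finite-$\beta$ discrepancy into the $\varepsilon$ slack. Define $\bar{F}_k(\tau):=\lim_{\beta\to0}F_k(\tau)$. Since the sample means $\tilde{\bm{\mu}}(\cdot,\cdot)$ in \eqref{eq:f_k} do not depend on $\beta$, and the only $\beta$-dependence enters through the finitely many sigmoid weights $\sigma_\beta(i-\tau_k)$, $i=\tau_{k-1}+1,\dots,\tau_{k+1}$, each converging to the step indicator $\mathbb{1}[i>\tau_k]$ exponentially fast in $1/\beta$, the uniform-hardening argument of Proposition~\ref{prop:Asymptotic-Hardening} gives $F_k(\tau)\to\bar{F}_k(\tau)$ uniformly in $\tau$. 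Consequently the discrete-difference error $\big|[F_k(\tau)-F_k(\tau-1)]-[\bar{F}_k(\tau)-\bar{F}_k(\tau-1)]\big|$ can be made smaller than $\varepsilon C_1$ by choosing $\beta$ small, so it suffices to establish strict monotonicity of $\bar{F}_k$ with the per-step decrease (resp.\ increase) bounded away from zero by a constant $C_2$ (resp.\ $C_2'$).

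The core computation is the expectation of the within-segment sum of squares. By hypothesis $(\tau_{k-1},\tau_{k+1}]$ contains exactly one true boundary $t_j$, so its data split into a block of mean $\bm{\mu}_a$ for $i\le t_j$ and a block of mean $\bm{\mu}_b$ for $i>t_j$, both with per-coordinate variance $s^2$; set $\Delta=\|\bm{\mu}_a-\bm{\mu}_b\|_2^2$, which is strictly positive because adjacent regions have distinct offsets. For a segment holding $n_a$ points of mean $\bm{\mu}_a$ and $n_b$ of mean $\bm{\mu}_b$ ($n=n_a+n_b$), the exact identity $\sum_i\|\mathbf{x}_i-\tilde{\bm{\mu}}\|_2^2=\sum_{a}\|\mathbf{x}_i-\bar{\mathbf{x}}_a\|_2^2+\sum_{b}\|\mathbf{x}_i-\bar{\mathbf{x}}_b\|_2^2+\tfrac{n_an_b}{n}\|\bar{\mathbf{x}}_a-\bar{\mathbf{x}}_b\|_2^2$, followed by taking expectations and using $\mathbb{E}\|\bar{\mathbf{x}}_a-\bar{\mathbf{x}}_b\|_2^2=\Delta+(n_a^{-1}+n_b^{-1})Ds^2$, yields $\mathbb{E}[\mathrm{SS}]=(n-1)Ds^2+\tfrac{n_an_b}{n}\Delta$. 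Summing over the two segments induced by $\tau_k$, the noise contribution is $(\tau_{k+1}-\tau_{k-1}-2)Ds^2$, a constant in $\tau_k$; hence, up to this constant, $N\bar{F}_k(\tau)$ equals a single bias term $B(\tau)=\tfrac{(t_j-\tau)(\tau_{k+1}-t_j)}{\tau_{k+1}-\tau}\Delta$ for $\tau\le t_j$ (the left segment is pure) and $B(\tau)=\tfrac{(t_j-\tau_{k-1})(\tau-t_j)}{\tau-\tau_{k-1}}\Delta$ for $\tau>t_j$ (the right segment is pure). In particular $B(t_j)=0$ and $B(\tau)>0$ otherwise.

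Next I would extract strict monotonicity from the discrete difference of $B$. A short telescoping calculation gives, for $\tau_{k-1}<\tau\le t_j$, $B(\tau)-B(\tau-1)=-\tfrac{(\tau_{k+1}-t_j)^2}{(\tau_{k+1}-\tau)(\tau_{k+1}-\tau+1)}\Delta<0$, and for $t_j<\tau<\tau_{k+1}$, $B(\tau)-B(\tau-1)=\tfrac{(t_j-\tau_{k-1})^2}{(\tau-\tau_{k-1})(\tau-\tau_{k-1}-1)}\Delta>0$. Each denominator is at most $(\tau_{k+1}-\tau_{k-1})^2$ and each numerator is at least $\Delta$ (since $\tau_{k+1}-t_j\ge1$ and $t_j-\tau_{k-1}\ge1$), so the differences of $\bar{F}_k$ are bounded away from zero by $C_2=\Delta/[N(\tau_{k+1}-\tau_{k-1})^2]$ and a matching $C_2'$. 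Thus $\bar{F}_k$ strictly decreases on $(\tau_{k-1},t_j]$ and strictly increases on $[t_j,\tau_{k+1})$, making $t_j$ its unique minimizer. Combining with the first paragraph, $F_k(\tau)-F_k(\tau-1)\le\bar{F}_k(\tau)-\bar{F}_k(\tau-1)+\varepsilon C_1\le\varepsilon C_1-C_2$ on $(\tau_{k-1},t_j]$ and $\ge C_2'-\varepsilon C_1'$ on $(t_j,\tau_{k+1})$, which are the claimed inequalities; for $\varepsilon$ (hence $\beta$) small enough they render $F_k$ unimodal with minimizer $t_j$.

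The main obstacle I anticipate is the bookkeeping of the finite-$\beta$ error: making the uniform-hardening claim $F_k\to\bar{F}_k$ quantitative enough to isolate a clean $\varepsilon C_1$ bound on the \emph{discrete differences} (not merely on $F_k$ itself), which requires a uniform-in-$\tau$ Gaussian-moment bound on each summand $\big|\|\mathbf{x}_i-\tilde{\bm{\mu}}(\tau_k,\tau_{k+1})\|_2^2-\|\mathbf{x}_i-\tilde{\bm{\mu}}(\tau_{k-1},\tau_k)\|_2^2\big|$. By contrast, the algebraic identity and the sign analysis of $B(\tau)-B(\tau-1)$ are routine. The only structural hypothesis that must be made explicit is $\Delta>0$, i.e.\ the two adjacent regions have distinct subspace offsets, without which the leading constants $C_2,C_2'$ degenerate and the minimizer is no longer unique.
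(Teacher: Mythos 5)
Your proposal is correct, and it lands on exactly the same leading constants as the paper (your per-step decrease $\frac{(\tau_{k+1}-t_{j})^{2}}{(\tau_{k+1}-\tau)(\tau_{k+1}-\tau+1)}\Delta$ is precisely the paper's $\|\bm{\mu}_{j}-\bm{\mu}_{j+1}\|_{2}^{2}B_{2}$ with $B_{2}=(\tau_{k+1}-t_{j})^{2}\big(\tfrac{1}{\tau_{k+1}-\tau}-\tfrac{1}{\tau_{k+1}-\tau+1}\big)$, and likewise for Case 2), but the route is organized differently. The paper never passes to the hard-threshold limit: it computes $F_{k}(\tau)=\mathbb{E}\{f_{k}(\tau)\}$ exactly at finite $\beta$ by expanding four cross terms (the $g_{11},g_{12},g_{21},g_{22}$ expectations in Appendix \ref{app:proof-prop-uni}), writes $\triangle F_{k}(\tau)=\tfrac{1}{N}\|\bm{\mu}_{j}-\bm{\mu}_{j+1}\|_{2}^{2}\,u(\tau,t_{j},\beta)+\tfrac{1}{N}s_{0}^{2}\gamma(\tau,\beta)$ as in (\ref{eq:dF-1}), and then bounds the sigmoid-laden functions $u$ and $\gamma$ termwise via (\ref{eq:bound_signoid_beta}), so the hard-limit difference $-B_{2}$ emerges inside those bounds. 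You instead take $\beta\to0$ first, use the two-group ANOVA identity to show $N\bar{F}_{k}(\tau)$ is a $\tau$-independent noise constant plus the bias $B(\tau)$, telescope $B$, and transfer back to finite $\beta$ by uniform convergence of finitely many bounded-expectation summands. Your decomposition buys transparency: the noise contribution is manifestly constant in $\tau$ (so monotonicity is carried entirely by $B$), and you correctly surface the hypothesis $\|\bm{\mu}_{j}-\bm{\mu}_{j+1}\|_{2}^{2}>0$, which the paper uses but leaves implicit here (it appears only in Proposition \ref{prop:monotonicity-near-boundary} as linear independence). The paper's brute-force route buys explicit admissible thresholds on $\beta$ in terms of $s^{2}/\|\bm{\mu}_{j}-\bm{\mu}_{j+1}\|_{2}^{2}$ and the segment lengths, which your argument leaves qualitative unless the Gaussian-moment bound you flag is made explicit — though, as you note, that bound is routine since each summand has expectation at most $\|\bm{\mu}_{j}-\bm{\mu}_{j+1}\|_{2}^{2}+cDs^{2}$ for an absolute constant $c$. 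One small point to tidy: at the first difference $\tau=\tau_{k-1}+1$, the comparison value $\bar{F}_{k}(\tau_{k-1})$ has an empty left segment, so your "constant noise $(\tau_{k+1}-\tau_{k-1}-2)Ds^{2}$" claim needs the convention that an empty segment contributes zero; the resulting extra $-Ds^{2}/N$ in that single difference has the right sign and only strengthens the Case 1 inequality.
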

\begin{proof}
See Appendix \ref{appendix:Prop3}-1.
\end{proof}
This result implies that, once the condition is satisfied, there exists
a unique local minima $t_{j}$ of $F_{k}(\tau)$ over $(\tau_{k-1},\tau_{k+1})$
.
\begin{prop}
[Flatness]\label{prop:flatness}Suppose that, for some $k$, there
is no index $t_{j}\in\{t_{1},t_{2},\dots,t_{K-1}\}$ in the interval
$(\tau_{k-1},\tau_{k+1})$. Then, for any $\varepsilon>0$, there
exists a small enough $\beta$ and a finite constant $C_{0}>0$ independent
of $\varepsilon$, such that $|F_{k}(\tau)-F_{k}(\tau-1)|<\varepsilon s^{2}C_{0}$
for all $\tau\in(\tau_{k-1},\tau_{k+1})$.
\end{prop}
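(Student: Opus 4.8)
The plan is to contrast the hypothesis here with that of Proposition~\ref{prop:Unimodality}: now the interval $(\tau_{k-1},\tau_{k+1}]$ contains \emph{no} true boundary, so every index in it is drawn from a single region and shares one common mean $\bm{\mu}$. Writing $\mathbf{x}_i=\bm{\mu}+\bm{\epsilon}_i$ with i.i.d.\ $\bm{\epsilon}_i\sim\mathcal{N}(\bm{0},s^2\mathbf{I})$, the offset $\bm{\mu}$ cancels inside every centered term $\|\mathbf{x}_i-\tilde{\bm{\mu}}(\cdot,\cdot)\|_2^2$ of \eqref{eq:f_k}, leaving only noise fluctuations; this already accounts for the $s^2$ factor in the claimed bound. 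The strategy is to evaluate $F_k(\tau_k)=\mathbb{E}\{f_k(\tau_k)\}$ exactly, show that its $\beta\to0$ limit is a constant independent of $\tau_k$, and then bound the small residual incurred at finite $\beta$.

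First I would reduce the expectation term by term using the elementary identity that, for a window of $n$ consecutive indices with empirical mean $\bar{\bm{\epsilon}}$, one has $\mathbb{E}\|\bm{\epsilon}_i-\bar{\bm{\epsilon}}\|_2^2=Ds^2(1-1/n)$ when $i$ lies inside the window, and $\mathbb{E}\|\bm{\epsilon}_i-\bar{\bm{\epsilon}}\|_2^2=Ds^2(1+1/n)$ when $i$ lies outside it (so that $\bm{\epsilon}_i$ is independent of $\bar{\bm{\epsilon}}$). Applied to the two hard averages $\tilde{\bm{\mu}}(\tau_{k-1},\tau_k)$ and $\tilde{\bm{\mu}}(\tau_k,\tau_{k+1})$ of \eqref{eq:tilde-mu-k}, this turns $F_k(\tau_k)$ into a sigmoid-weighted sum of these elementary constants.

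Next I would take the hard-window limit $\beta\to0$, in which $\sigma_{\beta}(i-\tau_k)\to\mathbf{1}\{i>\tau_k\}$. The left window of size $n_1=\tau_k-\tau_{k-1}$ then contributes $Ds^2(n_1-1)/N$ and the right window of size $n_2=\tau_{k+1}-\tau_k$ contributes $Ds^2(n_2-1)/N$; since $n_1+n_2=\tau_{k+1}-\tau_{k-1}$ is independent of $\tau_k$, the limiting proxy $\bar{F}_k$ is \emph{exactly constant} in $\tau_k$ and its increments vanish. This is the quantitative content of ``flatness'': splitting a homogeneous block at any point yields the same total within-block variance. For finite $\beta$, writing $F_k(\tau_k)=\bar{F}_k+R(\tau_k)$, a short computation gives
\[
R(\tau_k)=\frac{Ds^2}{N}\left(\frac{1}{n_1}+\frac{1}{n_2}\right)\left[\sum_{i\le\tau_k}\sigma_{\beta}(i-\tau_k)+\sum_{i>\tau_k}\bigl(1-\sigma_{\beta}(i-\tau_k)\bigr)\right],
\]
where both bracketed sums are tails of the sigmoid evaluated away from its transition point. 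With the explicit $\sigma_{\beta}(x)=(1+e^{-(x-1/2)/\beta})^{-1}$, each tail is dominated by a convergent geometric series $S(\beta)\to0$ as $\beta\to0$, uniformly in $\tau_k$; together with $1/n_1+1/n_2\le2$ this gives $\sup_{\tau_k}|R(\tau_k)|\le C_0 s^2 S(\beta)$ for a constant $C_0$ depending only on $D$ and $N$. Since $\bar{F}_k$ is constant, $F_k(\tau)-F_k(\tau-1)=R(\tau)-R(\tau-1)$, so $|F_k(\tau)-F_k(\tau-1)|\le2C_0 s^2 S(\beta)$; choosing $\beta$ small enough that $2S(\beta)<\varepsilon$ then yields the stated bound with the $\varepsilon$-independent constant $C_0$.

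The hard part will be the bookkeeping behind the constancy of $\bar{F}_k$: as $\tau_k$ moves by one index the window sizes $n_1,n_2$ change, each point is reassigned between windows, and both sample means shift, so a direct term-by-term comparison of $F_k(\tau)$ and $F_k(\tau-1)$ is unwieldy. The clean route is exactly the one above---absorb all of this reshuffling into the hard limit, where the increment is provably zero, and then control only the sigmoid-softness residual $R$. What remains is routine: verifying that the sigmoid tail sums converge and vanish uniformly in $\tau_k$, and that the resulting $C_0$ can be taken independent of $\varepsilon$.
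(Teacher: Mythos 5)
Your proof is correct, and it reaches the stated bound by a genuinely different, arguably cleaner route than the paper's. The paper never evaluates $F_{k}$ itself: it recycles the algebra already derived for Proposition~\ref{prop:Unimodality}, writing the increment as $\triangle F_{k}(\tau)=\frac{1}{N}\|\bm{\mu}_{j}-\bm{\mu}_{j+1}\|_{2}^{2}\,u(\tau,t_{j},\beta)+\frac{1}{N}Ds^{2}\gamma(\tau,\beta)$ as in (\ref{eq:dF-1}), encoding the no-boundary hypothesis by substituting $t_{j}=\tau_{k+1}$ --- which annihilates the mean-separation term $u$ --- and then bounding the leftover noise term $\gamma(\tau,\beta)$ through the per-term sigmoid bounds (\ref{eq:bound_signoid_beta}); this yields $|\triangle F_{k}(\tau)|<\varepsilon s^{2}C_{0}$ with a constant depending on $D$ and the interval endpoints. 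You instead compute $F_{k}$ in closed form from the same elementary identity $\mathbb{E}\|\bm{\epsilon}_{i}-\bar{\bm{\epsilon}}\|_{2}^{2}=Ds^{2}(1\mp 1/n)$ and expose the structural reason for flatness: the hard-window limit equals $\frac{Ds^{2}}{N}(n_{1}+n_{2}-2)$, which depends on $\tau_{k}$ only through $n_{1}+n_{2}=\tau_{k+1}-\tau_{k-1}$ and is therefore exactly constant, while the finite-$\beta$ correction $R(\tau_{k})$ is controlled uniformly in $\tau_{k}$ by geometric tails of the sigmoid. This buys a stronger intermediate statement (uniform closeness of $F_{k}$ to a constant, not merely small increments), a constant $C_{0}$ depending only on $D$ and $N$ rather than on the interval position, and freedom from the bookkeeping in $u$ and $\gamma$; what the paper's route buys is economy, since its $u/\gamma$ decomposition was already in hand from the unimodality proof. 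One shared caveat: at $\tau=\tau_{k-1}+1$ the quantity $F_{k}(\tau-1)$ involves an empty left window ($n_{1}=0$); the paper's own bound degenerates there as well, so this edge case is not a defect specific to your argument.
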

\begin{proof}
See Appendix \ref{appendix:Prop4}-2.
\end{proof}
It follows that, when the interval $(\tau_{k-1},\tau_{k+1})$ is completely
contained in $(t_{j},t_{j+1})$ for some $j$, the function $F_{k}(\tau)$
appears as an almost flat function for $\tau\in(\tau_{k-1},\tau_{k+1})$
with only small fluctuation according to the noise variance $s^{2}$.
\begin{prop}
[Monotonicity near Boundary]\label{prop:monotonicity-near-boundary}
Suppose that, for some $k$, there are multiple partition indices
$t_{j},t_{j+1},\dots,t_{j+J}\in\{t_{1},t_{2},\dots,t_{K-1}\}$ within
the interval $(\tau_{k-1},\tau_{k+1})$. In addition, assume that
the vectors $\{\bm{\mu}_{k}\}$ are linearly independent. Then, for
any $\varepsilon>0$, there exists a small enough $\beta$, and some
finite constant $C_{3},C_{4},C_{3}',C_{4}'>0$, such that $F_{k}(\tau)-F_{k}(\tau-1)<\varepsilon C_{3}-C_{4}<0$
for $\tau_{k-1}<\tau\leq t_{j}$, and $F_{k}(\tau)-F_{k}(\tau-1)>C_{4}'-\varepsilon C_{3}'>0$
for $t_{j+J}<\tau<\tau_{k+1}$.
\end{prop}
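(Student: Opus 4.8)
The plan is to reduce the claim to the one-step increment $F_k(\tau)-F_k(\tau-1)$, which has a transparent closed form, and to observe that near each end of $(\tau_{k-1},\tau_{k+1})$ the situation is \emph{locally identical} to the single-boundary case of Proposition~\ref{prop:Unimodality}, so the computation in Appendix~\ref{appendix:Prop3}-1 can largely be reused. Let $r_0,r_1,\dots,r_{J+1}$ denote the true region indices active on the successive sub-intervals $(\tau_{k-1},t_j],(t_j,t_{j+1}],\dots,(t_{j+J},\tau_{k+1})$. As $\beta\to0$ the smooth window in (\ref{eq:f_k}) hardens into a step, so the hard-limit proxy $\bar F_k(\tau)$ is the expected two-segment within-cluster sum of squares obtained by splitting $(\tau_{k-1},\tau_{k+1}]$ at $\tau$ into $L=(\tau_{k-1},\tau]$ and $R=(\tau,\tau_{k+1}]$. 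Incrementing $\tau$ moves the single sample $\mathbf{x}_\tau$ from $R$ into $L$, and the standard variance-update identity for $K$-means gives $N[\bar F_k(\tau)-\bar F_k(\tau-1)]=\frac{n_L}{n_L+1}\mathbb{E}\|\mathbf{x}_\tau-\tilde{\bm{\mu}}(\tau_{k-1},\tau-1)\|_2^2-\frac{n_R}{n_R-1}\mathbb{E}\|\mathbf{x}_\tau-\tilde{\bm{\mu}}(\tau-1,\tau_{k+1})\|_2^2$, where $n_L=\tau-1-\tau_{k-1}$ and $n_R=\tau_{k+1}-\tau+1$ are the block sizes before the move.

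First I would treat the left sub-segment $\tau_{k-1}<\tau\le t_j$. Here both $L$ and the moved sample lie in the single region $r_0$, so writing $\mathbf{x}_i=\bm{\mu}_{r_0}+\bm{\epsilon}_i$ and using independence of $\bm{\epsilon}_\tau$ from the samples in $L$, the first expectation equals $s^2D(1+1/n_L)$ and, after its prefactor, contributes exactly $s^2D$. The right block $R$ always contains the entire foreign span $(t_j,\tau_{k+1}]$, so its sample mean has expectation a convex combination $\bar{\bm{\mu}}_R=\sum_{m\ge0}\alpha_m\bm{\mu}_{r_m}$ with $\alpha_m\ge0$, $\sum_m\alpha_m=1$; the second expectation splits into the noise part $s^2D(1-1/n_R)$ and the bias $\|\bm{\mu}_{r_0}-\bar{\bm{\mu}}_R\|_2^2$, and after its prefactor contributes $s^2D+\frac{n_R}{n_R-1}\|\bm{\mu}_{r_0}-\bar{\bm{\mu}}_R\|_2^2$. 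The $s^2D$ terms cancel, leaving $N[\bar F_k(\tau)-\bar F_k(\tau-1)]=-\frac{n_R}{n_R-1}\|\bm{\mu}_{r_0}-\bar{\bm{\mu}}_R\|_2^2$, which is strictly negative.

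The crucial step is to bound this bias away from zero uniformly in $\tau$ and $N$. Since $\bm{\mu}_{r_0}-\bar{\bm{\mu}}_R=\sum_{m\ge1}\alpha_m(\bm{\mu}_{r_0}-\bm{\mu}_{r_m})$ and the foreign span $(t_j,\tau_{k+1}]$ is a fixed $\Theta(N)$ fraction of the window, the weights $\alpha_m$, $m\ge1$, are bounded below by a positive constant; the assumed linear independence of $\{\bm{\mu}_k\}$ then forbids this nonnegative combination from vanishing, and a compactness argument over the admissible weight vectors produces a constant $C_4>0$ with $\|\bm{\mu}_{r_0}-\bar{\bm{\mu}}_R\|_2^2\ge C_4$. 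Finally I would transfer the estimate to finite $\beta$: by the same hardening bound as in Proposition~\ref{prop:Asymptotic-Hardening} there is a constant $C_3$ with $|F_k(\tau)-\bar F_k(\tau)|\le\varepsilon C_3/2$ once $\beta$ is small enough, whence $F_k(\tau)-F_k(\tau-1)\le\varepsilon C_3-C_4<0$ for $\varepsilon$ small. The right sub-segment $t_{j+J}<\tau<\tau_{k+1}$ is handled by the mirror-image argument, in which $R$ is pure region $r_{J+1}$ while $L$ carries all foreign data, yielding $F_k(\tau)-F_k(\tau-1)>C_4'-\varepsilon C_3'>0$.

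I expect the main obstacle to be the uniform positive lower bound $C_4$ on the bias. Two things must be controlled at once: the prefactors $n_R/(n_R-1)$ and the weights $\alpha_m$ must stay in a compact, non-degenerate range, which requires both blocks to carry $\Theta(N)$ samples for every admissible $\tau$; and the cancellation $\sum_{m\ge1}\alpha_m(\bm{\mu}_{r_0}-\bm{\mu}_{r_m})=\bm{0}$ must be excluded, which is exactly where linear independence of $\{\bm{\mu}_k\}$ is used, as mere distinctness of the means would not suffice. The finite-$\beta$ transfer, by contrast, is routine given Proposition~\ref{prop:Asymptotic-Hardening}.
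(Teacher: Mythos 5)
Your proposal is correct, but it reaches the conclusion by a genuinely different route than the paper. The paper works at finite $\beta$ throughout: it expands $F_{k}(\tau)=\mathbb{E}\{f_{k}(\tau)\}$ into four groups of expectation terms, writes the increment in the closed form $\triangle F_{k}(\tau)=\frac{1}{N}\phi\big(\bm{\mu}_{j},\bm{\eta}(j+1,j+J+1)\big)\,u(\tau,\{t_{k}\},\beta)+\frac{1}{N}s_{0}^{2}\gamma(\tau,\beta)$, where $\bm{\eta}(j+1,j+J+1)$ is the length-weighted mean of the foreign-region centers, and then uses the sigmoid tail bounds $\sigma_{\beta}(i-\tau)<\varepsilon$ (for $i\leq\tau$) and $\sigma_{\beta}(i-\tau)>1-\varepsilon$ (for $i\geq\tau+1$) to show $u\leq\varepsilon B_{1}-B_{2}$ and $|\gamma|\leq\varepsilon(\cdots)$, which yields the explicit admissible thresholds on $\beta$ quoted in the appendix. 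You instead pass to the hard limit $\beta\to0$ first, where the one-step increment is governed by the exact $K$-means single-point-transfer identity; there the two noise contributions cancel exactly (both equal $s^{2}D$ after their prefactors), leaving only the bias $-\frac{n_{R}}{n_{R}-1}\|\bm{\mu}_{r_{0}}-\bar{\bm{\mu}}_{R}\|_{2}^{2}$, and you then transfer back to finite $\beta$ by a uniform $O(\varepsilon)$ perturbation bound in the spirit of Proposition \ref{prop:Asymptotic-Hardening}. The two arguments use the linear-independence hypothesis identically: your $\bm{\mu}_{r_{0}}-\bar{\bm{\mu}}_{R}$ equals $\frac{\tau_{k+1}-t_{j}}{n_{R}}\big(\bm{\mu}_{j}-\bm{\eta}(j+1,j+J+1)\big)$, so your compactness lower bound on the bias is the same non-vanishing condition $\phi(\bm{\mu}_{j},\bm{\eta}(j+1,j+J+1))\neq0$ that the paper invokes, and your remark that mere distinctness of the means would not suffice is exactly right. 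What your route buys is modularity and transparency: the sign structure is evident from the exact cancellation, and all $\beta$-dependence is isolated in one perturbation step, avoiding the long bookkeeping of sigmoid sums. What the paper's route buys is explicit, quantitative constants $B_{1},B_{2},C_{3},C_{4}$ and an explicit upper bound on $\beta$ guaranteeing the sign, plus direct control of the finite-$\beta$ noise residual $\gamma(\tau,\beta)$ rather than absorbing it into a generic hardening bound. Two small points you should make explicit if you write this up: the exchange of the limit $\beta\to0$ with the expectation defining $F_{k}$ (dominated convergence, since all squared norms have finite expectation), and the fact that your perturbation bound and bias lower bound must be taken uniformly over the finitely many admissible integer values of $\tau$ (noting also the degenerate endpoint $n_{L}=0$, where the increment is still negative); neither is a real obstacle, and the same uniformity issue is implicit in the paper's own constants.
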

\begin{proof}
See Appendix \ref{appendix:Prop5}-3.
\end{proof}
This result implies that the proxy cost function $F_{k}(\tau_{k})$
monotonically decreases in $(\tau_{k-1},t_{j}]$ and increases in
$[t_{j+J},\tau_{k+1})$.

Properties in Proposition~\ref{prop:Unimodality}--\ref{prop:monotonicity-near-boundary}
lead to a useful design intuition for the algorithm. First, if $(\tau_{k-1},\tau_{k+1})$
contains only one index $t_{j}\in\{t_{1},t_{2},\dots,t_{K-1}\}$,
then $t_{j}$ can be found by minimizing $F_{k}(\tau)$ in $(\tau_{k-1},\tau_{k+1})$.
Second, if $(\tau_{k-1},\tau_{k+1})$ contains none index $t_{j}$,
then there must be another interval $(\tau_{k'-1},\tau_{k'+1})$ containing
more than one indices $t_{j},t_{j+1},\dots,t_{j+J}$, and as a result,
by minimizing $F_{k'}(\tau)$ over $(\tau_{k'-1},\tau_{k'+1})$, the
solution satisfies $\tau_{k'}\in[t_{j},t_{j+J}]$. This intuition
leads to a successive merge-and-split algorithm derived as follows.

\subsection{A Merge-and-Split Algorithm under the Proxy Cost}

\begin{algorithm}
Initialize $\tau_{k}^{(0)}=\lfloor kN/K\rfloor$ for $k=0,1,2,\dots,K-1,K$.

Loop
\begin{itemize}
\item For each $k=1,2,\dots,K-1$,
\begin{enumerate}
\item Merge the adjacent clusters $\mathcal{C}_{k}^{(m)}$ and $\mathcal{C}_{k+1}^{(m)}$
to form a new set of clusters $\tilde{\mathcal{C}}_{1}^{(m,k)},\tilde{\mathcal{C}}_{2}^{(m,k)},\text{\ensuremath{\dots}},\tilde{\mathcal{C}}_{K-1}^{(m,k)}$.
\item For each $j=1,2,\dots,K-1$,
\begin{enumerate}
\item Split the $j$th set $\tilde{\mathcal{C}}_{j}^{(m,k)}$ into two subsets
to form the new clustering $\tilde{\mathcal{C}}_{1}^{(m,k,j)}$, $\tilde{\mathcal{C}}_{2}^{(m,k,j)}$,
$\dots,\tilde{\mathcal{C}}_{K}^{(m,k,j)}$ with the corresponding
segmentation indices $\bm{\tau}^{(m,k,j)}$;
\item Compute $F_{*}^{(m,k,j)}=\text{min}\{F_{j}(\tau;\bm{\tau}_{-j}^{(m,k,j)}):\tau_{j-1}^{(m,k,j)}<\tau<\tau_{j+1}^{(m,k,j)}\}$,
and denote the minimizer as $\tau_{j}^{*}$. The merge-and-split forms
a new segmentation $\hat{\bm{\tau}}^{(m,k,j)}=(\tau_{1}^{(m,k,j)},...,\tau_{j-1}^{(m,k,j)},\tau_{j}^{*},\tau_{j+1}^{(m,k,j)},...,\tau_{K-1}^{(m,k,j)})$;
\item Compute the cost reduction $\triangle F_{*}^{(m,k,j)}$ as in (\ref{eq:cost-reduction});
\end{enumerate}
\item Pick $j^{*}(k)\triangleq\arg\max_{j}\triangle F_{*}^{(m,k,j)}$.
\end{enumerate}
\item For all $\{\hat{\bm{\tau}}^{(m,k,j^{*}(k))},k=1,2,\dots,K-1\}$, pick
the one that minimizes $\sum_{k=1}^{K-1}F_{k}(\hat{\tau}_{k}^{(m,k,j^{*}(k))};\hat{\bm{\tau}}_{-k}^{(m,k,j^{*}(k))})$,
$k\in\{1,2,\dots,K-1\}$, and assign it as $\bm{\tau}^{(m+1)}$.
\end{itemize}
Repeat until $\tilde{F}(\bm{\tau}^{(m+1)})=\tilde{F}(\bm{\tau}^{(m)})$.

\caption{The merge-and-split algorithm\label{alg:merge-and-split}}
\end{algorithm}

Denote $\bm{\tau}^{(m)}$ as the segmentation variable from the $m$th
iteration, and $\mathcal{C}_{k}^{(m)}=\{\tau_{k-1}^{(m)}+1,\tau_{k-1}^{(m)}+2,\dots,\tau_{k}^{(m)}\}$
as the corresponding index set of the $k$th cluster based on the
segmentation variable $\bm{\tau}^{(m)}$. For the $(m+1)$th iteration,
the \textbf{merge} step picks a cluster $\mathcal{C}_{k}^{(m)}$,
$k\in\{1,2,...,K-1\}$, and merges it with the adjacent cluster $\mathcal{C}_{k+1}^{(m)}$,
forming in a new set of $K-1$ clusters $\tilde{\mathcal{C}}_{1}^{(m,k)},\tilde{\mathcal{C}}_{2}^{(m,k)},\dots,\tilde{\mathcal{C}}_{K-1}^{(m,k)}$;
algebraically, it is equivalent to removing the $k$th variable $\tau_{k}^{(m)}$,
resulting in a set of $K-2$ segmentation variables, denoted as an
$(K-2)$-tuple $\bm{\tau}^{(m,k)}=(\tau_{1}^{(m,k)},\tau_{2}^{(m,k)},\dots,\tau_{K-2}^{(m,k)})$.

In the \textbf{split} step, a cluster $\tilde{\mathcal{C}}_{j}^{(m,k)}$
is selected and split it into two, resulting in a new set of $K$
clusters $\tilde{\mathcal{C}}_{1}^{(m,k,j)}$, $\tilde{\mathcal{C}}_{2}^{(m,k,j)}$,
$\dots,\tilde{\mathcal{C}}_{K}^{(m,k,j)}$. The corresponding segmentation
indices are denoted as an $(K-1)$-tuple $\bm{\tau}^{(m,k,j)}=(\tau_{1}^{(m,k,j)},\tau_{2}^{(m,k,j)},...,\tau_{K-1}^{(m,k,j)})$.
Then, we minimize $F_{j}(\tau;\bm{\tau}_{-j}^{(m,k,j)})$ subject
to $\tau\in(\tau_{j-1}^{(m,k,j)},\tau_{j+1}^{(m,k,j)})$, and denote
the minimal value as $F_{*}^{(m,k,j)}$. In addition, denote the cost
reduction as 
\begin{equation}
\triangle F_{*}^{(m,k,j)}=F_{j}(\tau_{j-1}^{(m,k,j)};\bm{\tau}_{-j}^{(m,k,j)})-F_{*}^{(m,k,j)}\label{eq:cost-reduction}
\end{equation}
where $F_{j}(\tau_{j-1};\bm{\tau}_{-j}^{(m,k,j)})$ equals to cost
of not splitting $\tilde{\mathcal{C}}_{j}^{(m,k)}$.

As a result, $\triangle F_{*}^{(m,k,j)}$ corresponds to the cost
reduction of merging clusters $\mathcal{C}_{k}^{(m)}$ and $\mathcal{C}_{k+1}^{(m)}$
followed by an optimal split of the $j$th cluster after the merge.
Then, by evaluating the cost reduction for all $(K-1)^{2}$ possible
combinations of the merge-and-split, one can find the best segmentation
variable $\bm{\tau}^{(m+1)}$ that yields the least cost for the $(m+1)$th
iteration. The overall procedure is summarized in Algorithm~\ref{alg:merge-and-split}.

The complexity can be analyzed as follows. For each iteration, there
are $(K-1)^{2}$ merge-and-split operations. An exhaustive approach
to compute Step 2b) in Algorithm~\ref{alg:merge-and-split} requires
$\mathcal{O}(N/K)$ steps to enumerate all possible integer values
$\tau$ in the interval $(\tau_{j-1}^{(m,k,j)},\tau_{j+1}^{(m,k,j)})$,
while for each step, the function $F_{j}(\tau;\bm{\tau}_{-j}^{(m,k,j)})$
can be approximately computed by its stochastic approximation $f_{j}(\tau;\bm{\tau}_{-j}^{(m,k,j)})$
in (\ref{eq:f_k}), which requires a complexity of $\mathcal{O}(DN/K)$.
As a result, for each iteration, it requires a complexity of $\mathcal{O}(DN^{2})$.
For a benchmark, an exhaustive approach to solve (\ref{eq:P0-1})
requires $\mathcal{O}(N^{(K-1)})$ iterations and each iteration requires
a complexity of $\mathcal{O}(NK)$ to evaluate (\ref{eq:P0-1}). Thus,
the proposed merge-and-split is efficient.

\subsection{Convergence and Optimality}

\label{subsec:Convergence-and-Optimality}

Algorithm~\ref{alg:merge-and-split} must converge due to the following
two properties. First, the cost function is lower bounded by $0$
since it is the sum of squares. Second, define $\tilde{F}(\bm{\tau})\triangleq\sum_{k=1}^{K-1}F_{k}(\tau_{k};\bm{\tau}_{-k})$,
if $\tilde{F}(\bm{\tau}^{(m+1)})\neq\tilde{F}(\bm{\tau}^{(m)})$,
then the $m$th iteration must \emph{strictly} decrease the\textcolor{black}{{}
cost. Specifically, for each $k$, Step 3) in Algorithm~\ref{alg:merge-and-split}
guarantees $\triangle F_{*}^{(m,k,j^{*}(k))}\geq\triangle F_{*}^{(m,k,k)}$,
implying that $\tilde{F}(\hat{\bm{\tau}}^{(m,k,j^{*}(k))})\leq\tilde{F}(\bm{\tau}^{(m)})$
for all $k$; in addition, the output of the outer loop implies that
$\tilde{F}(\bm{\tau}^{(m+1)})\leq\tilde{F}(\hat{\bm{\tau}}^{(m,k,j^{*}(k))})$
for all $k$, and there must be at least one strict inequality for
$\tilde{F}(\bm{\tau}^{(m+1)})<\tilde{F}(\bm{\tau}^{(m)})$ if $\bm{\tau}^{(m)}\neq(t_{1},t_{2},...,t_{K-1})$,
which is proved in Appendix}~\textcolor{black}{\ref{appendix:Prop7}.}

To investigate the optimality of the converged solution from Algorithm~\ref{alg:merge-and-split},
consider the clusters $\tilde{\mathcal{C}}_{j}^{(m,k)}$ constructed
in Step 1) of Algorithm~\ref{alg:merge-and-split}. Recall that the
data $\{\mathbf{x}_{i}\}$ is clustered sequentially with segment
boundaries $t_{1},t_{2},\dots,t_{K-1}$. We have the following property
on the cost reduction $\triangle F_{*}^{(m,k,j)}$ in (\ref{eq:cost-reduction}).
\begin{lem}
[Cost Reduction]\label{lem:binary partition} Consider two distinct
clusters $\tilde{\mathcal{C}}_{j}^{(m,k)}$ and $\tilde{\mathcal{C}}_{j'}^{(m,k)}$
constructed from the $m$th iteration and the $k$th loop of Step
1) in Algorithm~\ref{alg:merge-and-split}. Suppose that there exists
at least one index $t_{k'}\in\{t_{1},t_{2},\dots,t_{K-1}\}$ in $\tilde{\mathcal{C}}_{j}^{(m,k)}$,
and no such $t_{k'}$ in $\tilde{\mathcal{C}}_{j'}^{(m,k)}$. Then,
for a sufficiently small $\beta$, $\triangle F_{*}^{(m,k,j)}>\triangle F_{*}^{(m,k,j')}$.
\end{lem}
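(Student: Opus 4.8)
The plan is to exploit the dichotomy already established by Propositions~\ref{prop:Unimodality}--\ref{prop:monotonicity-near-boundary}: a split inside a cluster that straddles a true boundary produces a cost reduction bounded below by a strictly positive constant, whereas a split inside a homogeneous cluster produces a reduction that vanishes as $\beta\to 0$. The strict inequality $\triangle F_{*}^{(m,k,j)}>\triangle F_{*}^{(m,k,j')}$ then follows once $\beta$ is taken small enough. The first step is to observe that, when the split position $j$ is processed, the candidate point $\tau$ ranges exactly over the interior of the merged cluster $\tilde{\mathcal{C}}_{j}^{(m,k)}$, i.e.\ over $(\tau_{j-1},\tau_{j+1})$ (superscripts $(m,k,j)$ dropped for readability). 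Hence $F_{j}(\tau;\bm{\tau}_{-j})$ and its minimization fall precisely into the setting of Propositions~\ref{prop:Unimodality}--\ref{prop:flatness}, and the lemma's hypothesis ``$\tilde{\mathcal{C}}_{j}^{(m,k)}$ contains some $t_{k'}$ / $\tilde{\mathcal{C}}_{j'}^{(m,k)}$ contains none'' is identical to the ``number of indices $t_{k'}$ inside $(\tau_{j-1},\tau_{j+1})$'' hypothesis of those propositions.

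For the boundary-containing cluster $\tilde{\mathcal{C}}_{j}^{(m,k)}$ I would split into two cases. If the cluster contains a single index $t_{k'}$, invoke Proposition~\ref{prop:Unimodality}: for any $\varepsilon>0$ there is a small $\beta$ with $F_{j}(\tau)-F_{j}(\tau-1)<\varepsilon C_{1}-C_{2}<0$ on $(\tau_{j-1},t_{k'}]$ and with $t_{k'}$ the minimizer, so $F_{*}^{(m,k,j)}=F_{j}(t_{k'})$. Telescoping the consecutive drops gives $\triangle F_{*}^{(m,k,j)}=F_{j}(\tau_{j-1})-F_{j}(t_{k'})\ge(t_{k'}-\tau_{j-1})(C_{2}-\varepsilon C_{1})\ge C_{2}-\varepsilon C_{1}$, using $t_{k'}-\tau_{j-1}\ge 1$. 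If instead the cluster contains several indices $t_{j},\dots,t_{j+J}$, invoke Proposition~\ref{prop:monotonicity-near-boundary} (under its linear-independence assumption on $\{\bm{\mu}_{k}\}$): $F_{j}$ is strictly decreasing on $(\tau_{j-1},t_{j}]$, so $F_{*}^{(m,k,j)}\le F_{j}(t_{j})<F_{j}(\tau_{j-1})$ and $\triangle F_{*}^{(m,k,j)}\ge(t_{j}-\tau_{j-1})(C_{4}-\varepsilon C_{3})\ge C_{4}-\varepsilon C_{3}$. In both cases $\triangle F_{*}^{(m,k,j)}$ admits a lower bound that stays bounded away from $0$ as $\varepsilon\to 0$.

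For the homogeneous cluster $\tilde{\mathcal{C}}_{j'}^{(m,k)}$, Proposition~\ref{prop:flatness} gives, for small $\beta$, $|F_{j'}(\tau)-F_{j'}(\tau-1)|<\varepsilon s^{2}C_{0}$ throughout $(\tau_{j'-1},\tau_{j'+1})$. Writing $\tau^{\star}$ for the interior minimizer and telescoping, $\triangle F_{*}^{(m,k,j')}=F_{j'}(\tau_{j'-1})-F_{j'}(\tau^{\star})\le(\tau^{\star}-\tau_{j'-1})\,\varepsilon s^{2}C_{0}\le|\tilde{\mathcal{C}}_{j'}^{(m,k)}|\,\varepsilon s^{2}C_{0}$. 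Since the data instance is fixed while $\beta\to 0$, the cluster size $|\tilde{\mathcal{C}}_{j'}^{(m,k)}|$ is a fixed finite number, so this upper bound tends to $0$.

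To finish, I would fix $\varepsilon$ small enough (hence $\beta$ small enough that all three propositions hold at once) that $\varepsilon C_{1}<C_{2}/2$ and $|\tilde{\mathcal{C}}_{j'}^{(m,k)}|\,\varepsilon s^{2}C_{0}<C_{2}/2$ (and the analogous inequalities with $C_{3},C_{4}$ in the multiple-boundary case). Then $\triangle F_{*}^{(m,k,j)}\ge C_{2}/2>\triangle F_{*}^{(m,k,j')}$, proving the claim. I expect the main obstacle to be the constant bookkeeping that collapses the three separate $(\varepsilon,\beta)$ guarantees into a single threshold on $\beta$: one must confirm that $C_{0},\dots,C_{4}$ and the cluster sizes do not themselves depend on $\beta$, and that each true index lying in $\tilde{\mathcal{C}}_{j}^{(m,k)}$ is strictly interior to its span so that Propositions~\ref{prop:Unimodality} and~\ref{prop:monotonicity-near-boundary} genuinely apply; the remaining telescoping estimates are routine.
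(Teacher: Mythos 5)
Your proposal is correct and takes essentially the same route as the paper's proof: the paper likewise invokes the monotonicity/unimodality propositions to lower-bound $\triangle F_{*}^{(m,k,j)}$ by a quantity of the form $C'-\varepsilon C>0$ for the boundary-containing cluster, uses Proposition~\ref{prop:flatness} to bound $|\triangle F_{*}^{(m,k,j')}|<\varepsilon B$ for the homogeneous cluster, and concludes by taking $\beta$ (hence $\varepsilon$) small enough. Your explicit single-boundary/multi-boundary case split and the telescoping of consecutive differences merely fill in details that the paper's shorter argument asserts directly, so the two proofs coincide in substance.
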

\begin{proof}
See Appendix~\ref{appendix:Prop6}.
\end{proof}
Lemma \ref{lem:binary partition} can be intuitively understood from
Propositions \ref{prop:Unimodality} and \ref{prop:flatness}, which
suggest that $F_{j}(\tau;\bm{\tau}_{-j}^{(m,k,j)})$ is unimodal in
$(\tau_{j-1}^{(m,k,j)},\tau_{j+1}^{(m,k,j)})$, but $F_{j'}(\tau;\bm{\tau}_{-j'}^{(m,k,j')})$
is almost flat in $(\bm{\tau}_{j'-1}^{(m,k,j')},\bm{\tau}_{j'+1}^{(m,k,j')})$,
and hence, the former one has a larger potential to reduce the total
cost $\tilde{F}(\bm{\tau})=\sum_{k=1}^{K-1}F_{k}(\tau_{k};\bm{\tau}_{-k})$.
\begin{thm}
[Optimality]\label{prop:optimal} Algorithm~\ref{alg:merge-and-split}
terminates at $\bm{\tau}^{*}=(\tau_{1}^{*},\tau_{2}^{*},\dots,\tau_{K-1}^{*})$,
with $\tau_{k}^{*}=t_{k}$, $k=1,2,\dots,K-1$.
\end{thm}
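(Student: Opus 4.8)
The plan is to assemble the facts already collected in Section~\ref{subsec:Convergence-and-Optimality} --- that $\tilde{F}(\bm{\tau}^{(m)})$ is bounded below by $0$, non-increasing across iterations, and strictly decreasing whenever $\bm{\tau}^{(m)}\neq\bm{t}$ (the last of these being the content of Appendix~\ref{appendix:Prop7}) --- and to show that the only configuration at which the stopping test $\tilde{F}(\bm{\tau}^{(m+1)})=\tilde{F}(\bm{\tau}^{(m)})$ can fire is $\bm{t}=(t_1,\dots,t_{K-1})$. I would organize the argument into three steps: termination in finitely many iterations, verification that $\bm{t}$ is a stationary configuration of the update, and the impossibility of stopping anywhere else.

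For termination, I would use that the feasible set of integer segmentations $\{0<\tau_1<\cdots<\tau_{K-1}<N\}$ is finite. Since the convergence discussion gives $\tilde{F}(\bm{\tau}^{(m+1)})\le\tilde{F}(\bm{\tau}^{(m)})$ for every $m$, with strict inequality whenever $\bm{\tau}^{(m)}\neq\bm{t}$, the cost sequence is strictly decreasing up to the point where the iterate first coincides with $\bm{t}$. A strictly decreasing sequence of costs cannot revisit any configuration, so the configurations visited before reaching $\bm{t}$ are all distinct; as only finitely many are available, the update must reach $\bm{t}$ after finitely many iterations.

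Next I would confirm that the stopping test actually fires at $\bm{t}$, so that the algorithm outputs $\bm{t}$ rather than passing through it. At $\bm{\tau}^{(m)}=\bm{t}$ every cluster equals a true region; merging clusters $k$ and $k+1$ yields a single cluster over $(t_{k-1},t_{k+1}]$ containing exactly one true boundary $t_k$, while all remaining clusters are boundary-free. By Lemma~\ref{lem:binary partition}, the split attaining the largest cost reduction $\triangle F_*^{(m,k,j)}$ is the split of this boundary-containing cluster, and by the unimodality of $F_k$ in Proposition~\ref{prop:Unimodality} the unique minimizer of $F_k$ over $(t_{k-1},t_{k+1})$ is precisely $t_k$. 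Hence each $\hat{\bm{\tau}}^{(m,k,j^*(k))}$ reproduces $\bm{t}$, so $\bm{\tau}^{(m+1)}=\bm{t}$ and the cost is unchanged, triggering the stopping test.

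Finally, the contrapositive of the strict-decrease property rules out stopping at any $\bm{\tau}\neq\bm{t}$. Here a short counting argument suffices: a mismatched segmentation must contain at least one cluster enclosing a true boundary in its interior together with at least one cluster lying strictly inside a single true region, since otherwise every cluster would coincide with a distinct true region and force $\bm{\tau}=\bm{t}$. By Lemma~\ref{lem:binary partition} (with Propositions~\ref{prop:Unimodality} and \ref{prop:flatness} as the underlying mechanism), merging the boundary-free cluster and optimally splitting the boundary-containing one strictly lowers $\tilde{F}$, so the stopping test cannot fire there. Combining the three steps gives termination precisely at $\bm{\tau}^{*}=\bm{t}$. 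The step I expect to be most delicate is the stationarity check: it requires the optimal split of the unique-boundary cluster to return $t_k$ \emph{exactly} rather than approximately, which relies on working with the deterministic proxy $F_k$ and on the exact $\beta\to0$ unimodality of Proposition~\ref{prop:Unimodality}, and it is the same reasoning that pins $\bm{t}$ down as the unique global minimizer of $\tilde{F}$ so that the non-increasing update cannot drift away from it.
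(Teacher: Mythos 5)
Your three-step architecture (termination, stationarity at $\bm{t}$, no stopping elsewhere) is sound, and your Step 2 makes explicit a point the paper glosses over; but Step 3 --- the crux --- has a genuine gap. Your counting argument produces one cluster $\mathcal{C}_b$ with a true boundary strictly inside and one boundary-free cluster $\mathcal{C}_a$, but it does \emph{not} ensure that $\mathcal{C}_a$ has a boundary-free neighbor. Your prescribed move (merge $\mathcal{C}_a$ with a neighbor, optimally split $\mathcal{C}_b$) certifies a strict decrease only when the deleted split was worth almost nothing, i.e., when the entire merged interval $(\tau_{a-1},\tau_{a+1})$ contains no true boundary, so that Proposition~\ref{prop:flatness} bounds the merge loss and Lemma~\ref{lem:binary partition} can do the rest. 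If $\mathcal{C}_a$ is sandwiched between two boundary-containing clusters, the split you delete was itself providing an $\Theta(1)$ cost reduction, which Lemma~\ref{lem:binary partition} does not control. Concretely, take $K=3$, $N=30$, $t_1=10$, $t_2=20$, $d_k=0$, and $\bm{\tau}=(12,15)$: the only boundary-free cluster is $(12,15]$, both of its neighbors contain boundaries, and under the deterministic proxy your move changes the total cost by
\begin{equation*}
\tfrac{10}{9}\|\bm{\mu}_{2}-\bm{\mu}_{3}\|_{2}^{2}-\tfrac{5}{3}\|\bm{\mu}_{1}-\bm{\mu}_{2}\|_{2}^{2},
\end{equation*}
which is \emph{positive} whenever $\|\bm{\mu}_{2}-\bm{\mu}_{3}\|_{2}^{2}>\tfrac{3}{2}\|\bm{\mu}_{1}-\bm{\mu}_{2}\|_{2}^{2}$. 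So your move can strictly \emph{increase} $\tilde{F}$, and it therefore cannot be used to show the stopping test fails at such a configuration.

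What rules out these configurations is not Lemma~\ref{lem:binary partition} but Proposition~\ref{prop:Unimodality}, and this is precisely the step of the paper's proof (Appendix~\ref{appendix:Prop7}) that your argument omits. In the example, the pair interval $(\tau_{0},\tau_{2})=(0,15)$ contains exactly one boundary $t_{1}=10\neq\tau_{1}$; merging clusters $1$ and $2$ and re-splitting the \emph{same} merged cluster at the unique minimizer of $F_{1}$ over $(0,15)$ --- which is $t_{1}$ by unimodality --- strictly lowers the cost. Accordingly, the paper organizes Step 3 as a case analysis on the pair intervals $(\tau_{l-1},\tau_{l+1})$: (i) if some pair interval is boundary-free, Lemma~\ref{lem:binary partition} applies exactly as you intend; (ii) a pair interval with two or more boundaries is reduced to case (i) by counting; (iii) in the remaining case every pair interval contains exactly one boundary, and unimodality forces $\tau_{l}=t_{l}$ at any point where Algorithm~\ref{alg:merge-and-split} stops. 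To repair your proof, Step 3 must admit the additional strict-decrease move of case (iii), ``merge a pair and re-split it at the unimodal minimizer,'' rather than relying solely on ``merge a boundary-free cluster and split a different boundary-containing one.''
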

\begin{proof}
See Appendix~\ref{appendix:Prop7}.
\end{proof}
As a result, Algorithm~\ref{alg:merge-and-split} can converge to
the globally optimal solution $t_{k}$ under the proxy cost $F_{k}(\cdot)$
for $d_{k}=0$ and $s_{k}^{2}=s^{2}$ despite the problem being non-convex.

\subsection{Merge-and-Split Clustering for Sequential Data}

\label{subsec:Seg-Opt-direct}

We now extend Algorithm~\ref{alg:merge-and-split} to the general
case for $d_{k}\geq0$. Recall from Proposition~\ref{prop:Asymptotic-Consistency}
that the segment boundary estimator $\hat{\tau}_{k}^{(N)}$ obtained
from minimizing $f_{k}(\tau_{k};\bm{\tau}_{-k})$ is asymptotically
consistent with the minimizer $\tau_{k}^{*}$ of the proxy cost $F_{k}(\tau_{k};\bm{\tau}_{-k})$
for asymptotically small $\beta$ and large $N$. Therefore, it is
encouraged to extend Algorithm~\ref{alg:merge-and-split} for minimizing
the actual cost $f_{k}(\tau;\bm{\tau}_{-k})$. It is clear that $f_{k}$,
which can be computed directly from the data $\{\mathbf{x}_{i}\}$,
is a stochastic approximation of the proxy function $F_{k}(\tau_{k};\bm{\tau}_{-k})$.

Specifically, based on the probability model (\ref{eq:pkxi}), for
$d_{k}\geq0$, we define
\begin{align}
\mathcal{F}_{k}(\bm{\Theta},\bm{\tau}) & \triangleq\frac{1}{N}\sum_{i=\tau_{k-1}+1}^{\tau_{k+1}}\Big[\big(1-\sigma_{\beta}(i-\tau_{k})\big)\Big(\mathrm{ln}|\mathbf{C}_{k}|\label{eq:new_F_k}\\
 & \qquad\qquad+(\mathbf{x}_{i}-\bm{\mu}_{k})^{\text{{T}}}\mathbf{C}_{k}^{-1}(\mathbf{x}_{i}-\bm{\mu}_{k})\Big)\nonumber \\
 & \qquad\qquad+\sigma_{\beta}(i-\tau_{k})\Big(\mathrm{ln}|\mathbf{C}_{k+1}|\nonumber \\
 & \qquad\qquad+(\mathbf{x}_{i}-\bm{\mu}_{k+1})^{\text{{T}}}\mathbf{C}_{k+1}^{-1}(\mathbf{x}_{i}-\bm{\mu}_{k+1})\Big)\Big]\nonumber 
\end{align}
for $k=1,2,\dots,K-1$, and
\begin{align*}
\mathcal{F}_{0}(\bm{\Theta},\bm{\tau}) & =\frac{1}{N}\sum_{i=1}^{\tau_{1}}\sigma_{\beta}(i-\tau_{0})\Big(\mathrm{ln}|\mathbf{C}_{1}|\\
 & \qquad\qquad+(\mathbf{x}_{i}-\bm{\mu}_{1})^{\text{{T}}}\mathbf{C}_{1}^{-1}(\mathbf{x}_{i}-\bm{\mu}_{1})\Big)\\
\mathcal{F}_{K}(\bm{\Theta},\bm{\tau}) & =\frac{1}{N}\sum_{i=\tau_{K-1}+1}^{N}\big(1-\sigma_{\beta}(i-\tau_{K})\big)\Big(\mathrm{ln}|\mathbf{C}_{K}|\\
 & \qquad\qquad+(\mathbf{x}_{i}-\bm{\mu}_{K})^{\text{{T}}}\mathbf{C}_{K}^{-1}(\mathbf{x}_{i}-\bm{\mu}_{K})\Big)
\end{align*}
in the same way as (\ref{eq:f_k})\textendash (\ref{eq:f_K}). Following
the same argument as in Proposition~\ref{prop:Asymptotic-Hardening},
it is observed that maximizing $\mathcal{J}(\bm{\Theta},\bm{\tau})$
in (\ref{eq:J}) is asymptotically equivalent to minimizing $\frac{1}{2}\sum_{k=0}^{K}\mathcal{F}_{k}(\bm{\Theta},\bm{\tau})$
for $\beta\to0$.

\begin{algorithm}
Initialize $\tau_{k}^{(0)}$ as the output of Algorithm~\ref{alg:merge-and-split}
by assuming $d_{k}=0$.

Loop for the $(m+1)$th iteration:
\begin{itemize}
\item Compute $\bm{\Theta}^{(m)}=\hat{\bm{\Theta}}(\bm{\tau}^{(m)})$ from
(\ref{eq:hat-mu-k}), (\ref{eq:SU})--(\ref{eq:sk});
\item Define the function $f_{k}(\tau_{k};\bm{\tau}_{-k})=\mathcal{F}_{k}(\bm{\Theta}^{(m)},\bm{\tau})$
by fixing $\bm{\Theta}^{(m)}$;
\item Implement one iteration in Algorithm~\ref{alg:merge-and-split} to
obtain the update $\bm{\tau}^{(m+1)}$ by replacing the function $F_{k}(\tau_{k};\bm{\tau}_{-k})$
in Algorithm~\ref{alg:merge-and-split} with $f_{k}(\tau_{k};\bm{\tau}_{-k})$
defined in the previous step.
\end{itemize}
Repeat until \footnotesize$\sum_{k=1}^{K-1}\mathcal{F}_{k}(\bm{\Theta}^{(m)},\bm{\tau}^{(m+1)})=\sum_{k=1}^{K-1}\mathcal{F}_{k}(\bm{\Theta}^{(m)},\bm{\tau}^{(m)})$.

\caption{Alternating optimization with merge-and-split \label{alg:alter-opt}}
\end{algorithm}

In Section \ref{subsec:Subspace-Feature-via-ML}, it has been shown
that, for a given segmentation variable $\bm{\tau}$, the solution
$\hat{\bm{\Theta}}(\bm{\tau})$ can be constructed from (\ref{eq:hat-mu-k}),
(\ref{eq:SU})--(\ref{eq:sk}). Therefore, the $f_{k}$ function
(\ref{eq:f_k}) studied in Section~\ref{subsec:Asymptotic-Property-for}
to Section~\ref{subsec:Convergence-and-Optimality} can be obtained
in a similar way as
\begin{equation}
f_{k}(\tau_{k};\bm{\tau}_{-k})=\mathcal{F}_{k}(\hat{\bm{\Theta}}(\bm{\tau}),\bm{\tau}).\label{eq:new_f_k}
\end{equation}
As a result, a direct extension of Algorithm~\ref{alg:merge-and-split}
to solve for the segmentation $\bm{\tau}$ is to replace the function
$F_{k}(\tau_{k};\bm{\tau}_{-k})$ in Algorithm~\ref{alg:merge-and-split}
with $f_{k}(\tau_{k};\bm{\tau}_{-k})$ defined in (\ref{eq:new_f_k}).
In the following, we call the extended version Algorithm~\ref{alg:merge-and-split}
for convenience.

Note that the complexity of evaluating (\ref{eq:new_f_k}) is $\mathcal{O}(D^{2}N+D^{3})$
because computing (\ref{eq:new_F_k}) requires $\mathcal{O}(D^{2}N/K)$
and computing $\hat{\bm{\Theta}}(\bm{\tau})$ requires $\mathcal{O}(D^{2}N+D^{3})$,
leading to a total complexity of $\mathcal{O}(D^{2}N^{2}K)$ per iteration
in Algorithm~\ref{alg:merge-and-split}.

To reduce the complexity, one may consider to \emph{alternatively}
update $\bm{\Theta}$ and $\bm{\tau}$ in $\mathcal{F}_{k}(\bm{\Theta},\bm{\tau})$.
Such an alternating optimization approach can be developed from Algorithm~\ref{alg:merge-and-split}
in a straight-forward way and is summarized in Algorithm~\ref{alg:alter-opt}.
As such, the per iteration complexity is reduced to $\mathcal{O}(D^{2}N+D^{3}+N^{2})$.

\section{Numerical Experiments}

\label{sec:Experiments}

Our experiments were conducted in an office space measuring 30 meters
by 16 meters (480 square meters), which is divided into 10 non-overlapping
regions as shown in Fig.~\ref{fig:Ten-indoor-zones}. There are 21
sensors installed as indicated by the blue icons, which are \ac{uwb}
sensors operating at 6 GHz frequency band. Measurement data was collected
in 3 different time periods spread across 3 different days to capture
the time-variation of the environment. During the measurement process
in each time period, a mobile device visited all the 10 regions once
without repetition, and the \ac{rss} value measured by the receivers
were recorded. The time interval of the collected data samples is
200 ms. The data collected on the first day was used for clustering
or training (for the baseline schemes), and the data collected on
the second and third days served as test datasets I and II, respectively.
The training dataset comprises 1,455 records, while the two testing
datasets contain 1,294 and 1,055 records, respectively.

We also expanded our experiments to a larger area (53\texttimes 55
m\texttwosuperior ) with 24 regions of interest, including corridors
and rooms with various layouts. A total of 50 sensors are installed
in the whole area. The samples were collected every 200 ms, and the
number of RSS data samples collected in each region ranges from 2,000
to 5,000. The data collected on the first day was used for clustering
or training, and the data collected on the second day was used as
test dataset III.

\subsection{Verification of the Subspace Model from Real Data}

\begin{figure}
\begin{centering}
\includegraphics[width=1\columnwidth]{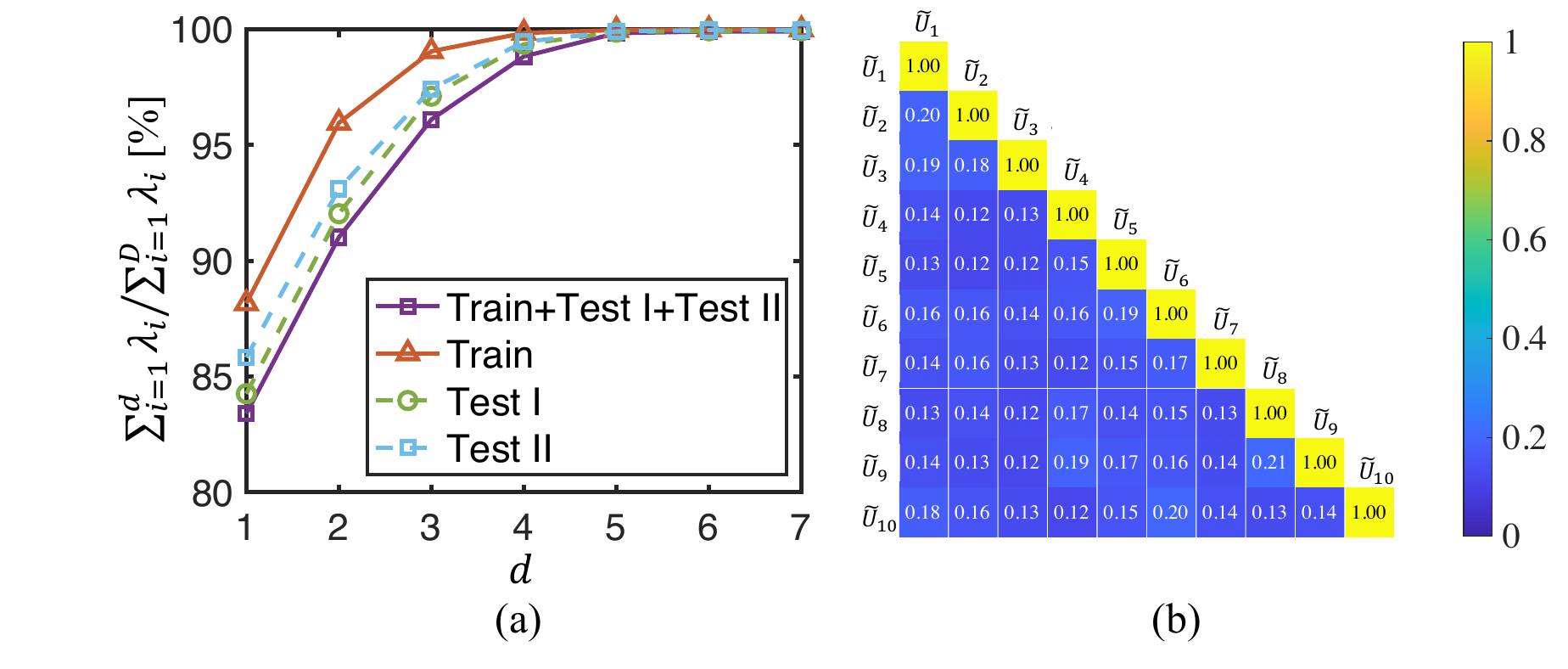}
\par\end{centering}
\caption{a) Percentage of the sum of the first $d$ dominant eigenvalues of
the covariance matrix of the data collected from Region 1. (b) Similarity
scores $[0,1]$ between subspaces, where 0 means the two subspaces
are orthogonal and 1 means identical. \label{fig:low-rank}}
\end{figure}

We first justify the subspace model (1) from real data. Specifically,
we need to verify that the covariance of the data $\mathbf{x}_{i}$
has a low rank structure, and moreover, the subspaces are non-identical
across regions.

In Fig. \ref{fig:low-rank} (a), we extract the subset of data $\{\mathbf{x}_{i},i\in\mathcal{C}_{1}\}$
collected from Region 1 illustrated in Fig. \ref{fig:Ten-indoor-zones},
and compute the corresponding covariance matrix $\hat{\mathbf{C}}_{1}$.
Denote the eigenvalues of $\hat{\mathbf{C}}_{1}$ as $\lambda_{1},\lambda_{2},...,\lambda_{D}$.
Fig. \ref{fig:low-rank} (a) shows the percentage of the sum of the
first $d$ dominant eigenvalues, i.e., $\sum_{i=1}^{d}\lambda_{i}/\sum_{i=1}^{D}\lambda_{i}$.
It is observed that, the top 3 principal components already contains
up to 97.0\% of energy. This indicates that the data collected from
Region 1 exhibits low-rank characteristics. A similar observation
can be made from the data collected from other regions. Hence, a subspace
model can describe the data with a good accuracy.

Fig. \ref{fig:low-rank} (b) plots the similarity between affine subspaces.
Specifically, the affine subspace model (\ref{eq:x1}) can also be
written as a linear subspace model form $\mathbf{x}_{i}=\tilde{\mathbf{U}}_{k}\left[\bm{{\theta}}_{i},\|\tilde{\mathbf{{\rm \bm{\mu}}}}{}_{k}\|_{2}\right]^{\mathrm{T}}+\bm{\epsilon}_{i}$,
where $\tilde{\mathbf{U}}_{k}=\left[\mathbf{U}_{k},\tilde{\mathbf{{\rm \bm{\mu}}}}{}_{k}/\|\tilde{\mathbf{{\rm \bm{\mu}}}}{}_{k}\|_{2}\right]$
is the subspace basis of the linear subspace, and $\tilde{\mathbf{{\rm \bm{\mu}}}}{}_{k}=(\mathbf{I}_{D}-\mathbf{U}_{k}\mathbf{U}_{k}^{\mathrm{T}})\mathbf{{\rm \bm{\mu}}}{}_{k}$
is the offset of the $k$th affine subspace satisfying orthogonality
to the columns of $\mathbf{U}_{k}$. The similarity between the linear
subspaces $\tilde{\mathbf{U}}_{i}$ and $\tilde{\mathbf{U}}_{j}$
is calculated by trace$\{\tilde{\mathbf{U}}_{i}\tilde{\mathbf{U}}_{i}^{\mathrm{T}}\tilde{\mathbf{U}}_{j}\tilde{\mathbf{U}}_{j}^{\mathrm{T}}\}/\mathrm{min}(d_{i}+1,d_{j}+1)$.
It is observed that the subspaces are nearly orthogonal to each other.

\subsection{Clustering Performance}

\begin{table}
\caption{Comparison on the clustering accuracy. \label{tab:Clustering-Performance-Evaluatio}}

\centering{}%
\begin{tabular}{l>{\centering}p{1cm}>{\centering}p{1cm}>{\centering}p{1cm}>{\centering}p{1cm}>{\centering}p{1cm}}
\hline 
Methods & Acc & NMI & F1 & ARI & P\tabularnewline
\hline 
ESC \cite{vidal:J11} & 48.9 & 62.7 & 49.7 & 42.2 & 70.9\tabularnewline
SCS \cite{Bai:J22} & 68.0 & 81.7 & 68.4 & 64.5 & 82.5\tabularnewline
LSE \cite{Xia:J22} & 80.6 & 87.6 & 80.8 & 79.2 & 88.4\tabularnewline
Algorithm~\ref{alg:merge-and-split} & 98.5 & 96.8 & 97.0 & 96.6 & 97.0\tabularnewline
Algorithm~\ref{alg:alter-opt} & 98.5 & 96.8 & 97.0 & 96.6 & 97.0\tabularnewline
\hline 
\end{tabular}
\end{table}

Four clustering metrics are used: clustering accuracy (Acc), normalized
mutual information (NMI), F-Score (F1), adjusted rand index (ARI),
and precision (P) \cite{Bai:J22}. We compare our method with a classical
subspace clustering method: \ac{em}-based subspace clustering (ESC)
\cite{vidal:J11} that iterates between clustering to subspace and
estimating the subspace model under a probabilistic \ac{pca} structure.
We also compare our method to two recently developed clustering algorithms
reported in the literature: self-constrained spectral clustering (SCS)
\cite{Bai:J22}, and local self-expression subspace clustering (LSE)
\cite{Xia:J22}. SCS is an extension of spectral clustering that incorporates
pairwise and label self-constrained terms into the objective function
of spectral clustering to guide the clustering process using some
prior information. LSE incorporates a variational autoencoder (VAE)
neural network with a temporal convolution module to construct a self-expression
affinity matrix with temporally consistent priors, followed by a conventional
graph-based clustering algorithm. The local self-expression layer
in the temporal convolution module only maintains representation relations
with temporally adjacent data to implement the local validity of self-expression.

Table~\ref{tab:Clustering-Performance-Evaluatio} summarizes the
comparison of clustering performance. The classical subspace clustering
algorithm ESC performs poorly due to significant fluctuations in \ac{rss}
measurements. The proposed algorithms, and LSE both outperform SCS,
highlighting the limitations of non-subspace clustering algorithms
when applied to high-dimensional sequential data clustering problems.
Although LSE outperforms SCS by employing high-dimensional feature
extraction with temporal consistency priors, the proposed algorithms
still performs better than LSE. This is because LSE focuses solely
on extracting smooth data features through the construction of a self-expression
matrix using a temporal convolution module, without considering temporal
consistency in identifying cluster structures. The proposed schemes
perform the best among all the schemes and achieve a 98.5\% clustering
accuracy. Note that in real datasets, there exists a transition phase
when a device moves from one region to another. Data in this transition
phase may not belong to any specific subspace, which could contribute
to the 1.5\% clustering error observed in the proposed algorithm.

\subsection{Convergence and Verification of the Theoretical Results}

In practice, the mobile device has non-negligible transition from
one region to another, and therefore, the exact timing at the region
boundary is not well-defined. We thus define the $\varepsilon$-tolerance
error 
\[
E_{\varepsilon}=\frac{1}{N}\sum_{k=1}^{K-1}\mathrm{max}\{|\tau_{k}-t_{k}|-\varepsilon N,0\}
\]
to evaluate the convergence of the algorithms. Here, global optimality
is claimed when $E_{\varepsilon}=0$.

The proposed Algorithms~\ref{alg:merge-and-split} and \ref{alg:alter-opt}
and their variants, marked as \textquotedbl R\textquotedbl , are
evaluated. The \textquotedbl R\textquotedbl{} version of the proposed
algorithms are randomly initialized following a uniform distribution
for $\tau_{k}^{(0)}$ in the first line of Algorithm~\ref{alg:merge-and-split}.
Algorithm~\ref{alg:alter-opt} is initialized by running Algorithm~\ref{alg:merge-and-split}
for 15 iterations, which have been counted in the total iterations
of Algorithm~\ref{alg:alter-opt} in Fig.~\ref{fig:converge}. In
Fig.~\ref{fig:converge} (a), the cost value reduction of Algorithm~\ref{alg:alter-opt}
tends to saturate at the 15th iteration (initial phase of Algorithm~\ref{alg:alter-opt}),
but the cost value continues to quickly decreases starting from the
$16$th iteration (main loop of Algorithm~\ref{alg:alter-opt}).
The convergence is benchmarked with a gradient-based subspace clustering
(GSC) method \cite{Xing:C22}, which employs stochastic gradient descent
to search for $\bm{\tau}$.

\begin{figure}[t]
	\begin{centering}
		\includegraphics[width=1\columnwidth]{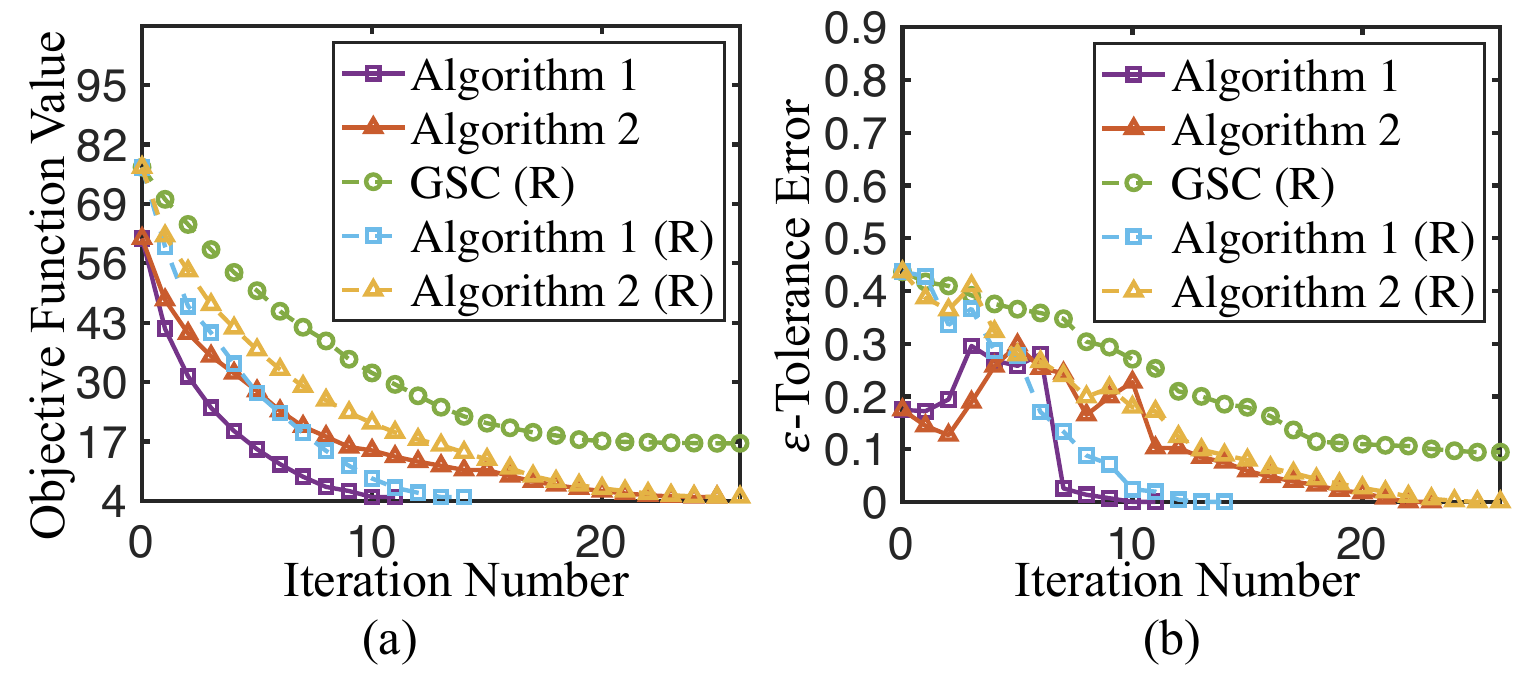}
		\par\end{centering}
	\caption{Convergence of the proposed algorithms, where the curves marked with
		\textquotedbl R\textquotedbl{} represents the mean trajectories for
		20 independent random initializations.\label{fig:converge}}
\end{figure}
Fig.~\ref{fig:converge} shows the objective function value $\sum_{k=1}^{K-1}\mathcal{F}_{k}(\hat{\bm{\Theta}}(\bm{\tau}),\bm{\tau})$
and the $\varepsilon$-tolerance error $E_{\varepsilon}$ against
the iteration number, where $\varepsilon$ is chosen as $\varepsilon=0.3$\%.
Both Algorithms~\ref{alg:merge-and-split} and \ref{alg:alter-opt},
as well as their variants with random initialization, converge to
the globally optimal solution. By contrast, the GSC baseline is not
guaranteed to converge to $E_{\varepsilon}=0$, as it is easily trapped
at poor local optimum. In addition, Algorithm~\ref{alg:merge-and-split}
requires fewer iterations than Algorithm~\ref{alg:alter-opt}, but
it has a higher computational complexity per iteration. Specifically,
for the dataset we used, the total computational time of Algorithm~\ref{alg:merge-and-split}
is 1033 seconds, whereas, that of Algorithm~\ref{alg:alter-opt}
is 245 seconds, 4X faster than Algorithm~\ref{alg:merge-and-split}.
Nevertheless, Algorithm~\ref{alg:merge-and-split} is guaranteed
to globally converge in a special case as stated in Theorem~\ref{prop:optimal}.

Next, we verify the theoretical properties in Propositions~\ref{prop:Unimodality}--\ref{prop:monotonicity-near-boundary}
through two numerical examples. In Fig.~\ref{fig:fk} (a), a simulated
dataset is constructed based on the subspace model (\ref{eq:x1})
with $D=40$, $d_{k}=0$, and $K$ clusters. The ratio $\|\bm{\mu}_{i}-\bm{\mu}_{j}\|_{2}^{2}/s^{2}$
of the squared-distance between the cluster centers over the noise
variance $s^{2}$ is set as 2.5. The data is segmented into two parts
by $\tau_{1}$ and the cost function $f_{1}(\tau_{1})$ is plotted.
First, as the number of samples $N$ increases, the cost function
$f_{1}(\tau_{1})$ eventually becomes the deterministic proxy $F_{1}(\tau_{1})=\mathbb{E}\{f_{1}(\tau_{1})\}$
as shown by the group of curves for $K=2$ clusters. Second, $f_{1}(\tau_{1})$
appears as unimodal for $K=2$ clusters under large $N$, which agrees
with the results in Proposition~\ref{prop:Unimodality}. Third, for
$K=1$ cluster, $f_{1}(\tau_{1})$ is an almost flat function under
large $N$ as discussed in Proposition~\ref{prop:flatness}. For
$K=3$ clusters under large $N$, $f_{1}(\tau_{1})$ appears as monotonic
near the left and right boundaries as discussed in Proposition~\ref{prop:monotonicity-near-boundary}.

In Fig.~\ref{fig:fk} (b), the experiment is extended to real data.
We extract a subset of data samples belonging to $K=1,2,3$ consecutive
clusters from the measurement dataset, and plot the cost function
$f_{1}(\tau_{1})$ under parameter $d_{k}=2$ and different $\beta$
values. It is observed that the cost function appears as unimodal
disturbed by noise for $K=2$ clusters, which is consistent with the
results in Proposition~\ref{prop:Unimodality}. This also provides
a justification that the proposed subspace model is essentially accurate
in real data. In addition, while a small $\beta$ may help amplify
the unimodality of the cost function as implied by Proposition~\ref{prop:Unimodality},
the cost function is prone to be disturbed by the modeling noise,
resulting in multiple local minimizers. On the contrary, a medium
to large $\beta$ may help eliminate the noise for a unique local
minimizer.

\subsection{Cluster to Physical Region Matching}
\begin{table}[t]
	\caption{Average matching error ($\times100$\%) performance of our subspace
		matching. \label{tab:Performance-of-matching}}
	
	\centering{}%
	\begin{tabular}{>{\raggedright}p{0.9cm}>{\centering}p{0.3cm}>{\centering}p{0.3cm}>{\centering}p{0.3cm}>{\centering}p{0.3cm}>{\centering}p{0.3cm}>{\centering}p{0.3cm}>{\centering}p{0.3cm}>{\centering}p{0.3cm}>{\centering}p{0.3cm}>{\centering}p{0.3cm}}
		\hline 
		$|\mathcal{E}|$ & 18 & 19 & 20 & 21 & 22 & 23 & 24 & 25 & 26 & 27\tabularnewline
		\hline 
		$\alpha=1$ & 0.3 & 0.5 & 1.0 & 1.3 & 1.4 & 1.5 & 2.0 & 1.9 & 2.8 & 1.9\tabularnewline
		$\alpha=2$ & 0.3 & 0.5 & 1.0 & 2.5 & 3.0 & 3.3 & 3.8 & 3.0 & 3.3 & 3.8\tabularnewline
		$\alpha=4$ & 0.5 & 1.0 & 0.8 & 2.5 & 4.0 & 3.8 & 4.0 & 4.5 & 4.0 & 3.5\tabularnewline
		\hline 
	\end{tabular}
\end{table}
\begin{figure}[t]
	\begin{centering}
		\includegraphics[width=1\columnwidth]{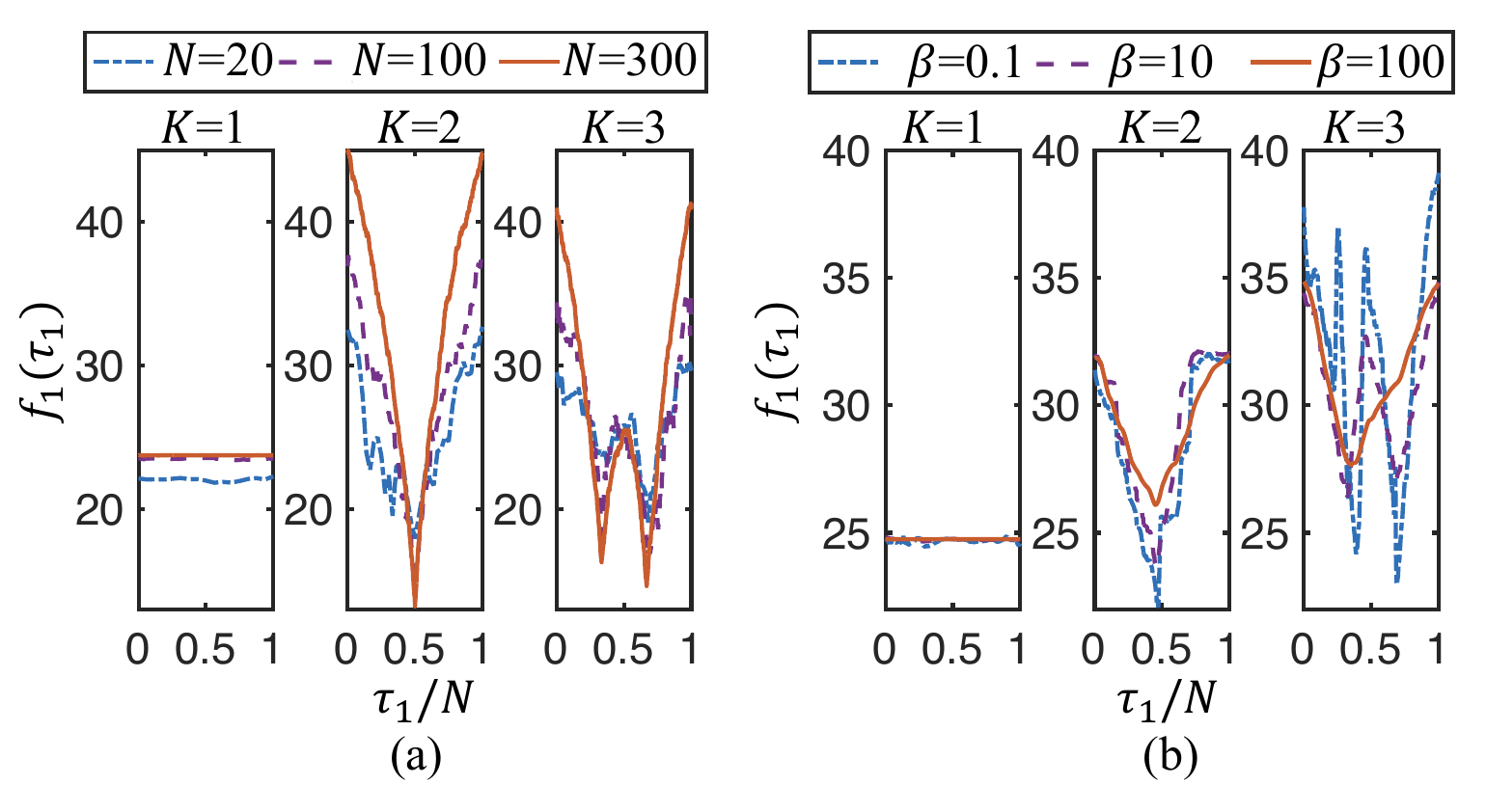}
		\par\end{centering}
	\caption{Verification of the unimodality, the flatness, and the monotonicity
		near boundary of the cost function. \label{fig:fk}}
\end{figure}

The proposed cluster-to-region matching is based on a graph $\mathcal{G}$,
which may be generated from the floor plan in practice. For evaluation
purpose here, we generate a set of graphs by randomly assigning edges
between regions. The probability of an edge joining two regions $j$
and $k$ is $q_{jk}=C_{\text{e}}\cdot\mathrm{exp}(-\|\mathbf{o}_{j}-\mathbf{o}_{k}\|_{2}^{2})$,
{\em i.e.}, the smaller the distance, the higher the probability,
where $C_{\text{e}}$ is a normalizing factor for the expected number
of edges in the graph, and $\mathbf{o}_{j}$ and $\mathbf{o}_{k}$
are the reference locations of the $j$th and $k$th region, respectively,
as shown in Fig.~\ref{fig:Ten-indoor-zones}. The matching error
is computed as $E_{\text{m}}=\frac{1}{K}\sum_{k}\mathbb{I}\{\pi^{*}(k)\neq\pi(k)\}$
where $\bm{\pi}^{*}$ is the desired matching.

Table~\ref{tab:Performance-of-matching} summarizes the matching
error for different graphs. It is observed that the fewer the edges,
the lower the matching error. This is because the constraint set in
(\ref{eq:constraint1}) is smaller for fewer edges. For the graphs
with 18 edges, the average number of eligible routes satisfying constraint
(\ref{eq:constraint1}) is 287 in our region topology; for the graphs
with 27 edges, that number is above 13,000. Nevertheless, the overall
matching error is mostly below 3\% under parameter $\alpha=1$ and
below 1\% for $|\mathcal{E}|\leq20$ under all parameter values $\alpha=1,2,4$
when computing the reference centroid in (\ref{eq:ref-cent}). In
addition, we compare the performance under different parameters $\alpha$.
It is shown that the performance is not sensitive with $\alpha$,
and the overall matching error is mostly below 4\%.

\subsection{Localization Performance}

We evaluate the localization performance of the region-based radio
map using test datasets I and II, which have not been used for the
radio map construction. A maximum-likelihood approach is used based
on the conditional probability function (\ref{eq:pkxi}), and the
estimated region $\hat{k}$ given the \ac{rss} measurement vector
$\mathbf{x}$ is given by 
\begin{equation}
\hat{k}=\underset{k\in\{1,2,\dots,K\}}{{\rm argmax}}\:p_{k}(\mathbf{x};\bm{\Theta}).\label{eq:region-assign}
\end{equation}

We compare the localization performance with two unsupervised schemes,
max-\ac{rss} (MR), which picks the location of the sensor that observes
the largest \ac{rss} as the target location, and \ac{wcl}\cite{PhoSon:J18},
which estimates the location as (\ref{eq:ref-cent}). For performance
benchmarking, we also evaluate three supervised localization approaches
\ac{knn}\cite{SadSeb:J20,Hu:J18}, \ac{svm}\cite{Wu:J07}, and \ac{dnn}\cite{Hao:J21,CheChu:J22},
which are trained using the training set with region labels that were
not available to the proposed scheme. The parameters of baseline methods
are determined and tuned using a ten-fold cross validation. For \ac{knn},
the optimal number of neighbors was found to be 8. A Gaussian kernel
was used for \ac{svm}. For \ac{dnn}, we adopt a three layer \ac{mlp}
neural network with 30 nodes in each layer to train the localization
classifier. The parameter $\beta$ in (\ref{eq:P0}) is set to be
1. The subspace dimension $d_{k}$ is chosen from 1 to 3 according
to the number of sensors located in the region as shown in Fig.~\ref{fig:Ten-indoor-zones}.
The performance is evaluated using the mean {\em region localization error}
defined as $\mathbb{E}\{\|\mathbf{o}_{\hat{k}}-\mathbf{o}_{k}\|\}$,
where $\mathbf{o}_{k}$ is the reference location of the $k$th region.

\begin{table}
\begin{centering}
\caption{Region localization error {[}meters{]} on 10-region Test Dataset I
and II.\label{tab:Space-classification-performance}}
\par\end{centering}
\centering{}%
\begin{tabular}{>{\raggedright}m{2.7cm}|>{\centering}p{0.4cm}>{\centering}p{0.4cm}>{\centering}p{0.4cm}|>{\centering}p{0.4cm}>{\centering}p{0.4cm}>{\centering}p{1cm}}
\hline 
\multirow{2}{2.7cm}{Method} & \multicolumn{3}{c|}{Supervised} & \multicolumn{3}{c}{Unsupervised}\tabularnewline
 & KNN & SVM & DNN & MR & \ac{wcl} & Proposed\tabularnewline
\hline 
Test Dataset I (Day 2) & 0.81 & 0.81 & 0.79 & 1.08 & 1.67 & \textbf{0.49}\tabularnewline
Test Dataset II (Day 3) & 0.92 & 0.91 & 0.88 & 1.40 & 1.90 & \textbf{0.68}\tabularnewline
\hline 
\end{tabular}
\end{table}

\subsubsection{Region Localization Performance on 10-Region Dataset}

Although we assume that the mobile device visits each divided region
only once, which imposes limitations on the applicability of our model
in certain scenarios, this assumption ensures that we achieve a good
localization performance. Table \ref{tab:Space-classification-performance}
summarizes the region localization error. Somewhat surprisingly, the
proposed scheme, trained without labels, performs even better then
the supervised methods; in fact, it performs the best among all the
schemes tested. The major performance gain is contributed by the signal
subspace model (\ref{eq:x1}), which allows more fluctuation of the
data, and tolerates the slight change of the environment. Moreover,
SVM does not fit the features of RSS data but instead searches for
the boundaries of RSS clusters. DNN attempts to fit the features of
RSS data using a general non-linear function, allowing it to capture
complex patterns and intricate relationships in the data. Consequently,
DNN and SVM are highly sensitive to noise, leading to potential overfitting
issues. SVM achieves optimal performance on test dataset I when the
slack variable is configured to $0.0007$, resulting in a localization
error of 0.60 meters. DNN achieves its best performance on test dataset
I when the regularization hyperparameter is configured to 1.67, resulting
in a localization error of 0.54 meters. Despite alleviating overfitting
by tuning overfitting parameters, finding the optimal parameter setting
remains challenging. Therefore, although they are supervised methods
with labels, the performance of SVM and DNN is slightly inferior to
the proposed one.

\begin{figure}[t]
\begin{centering}
\includegraphics[width=1\columnwidth]{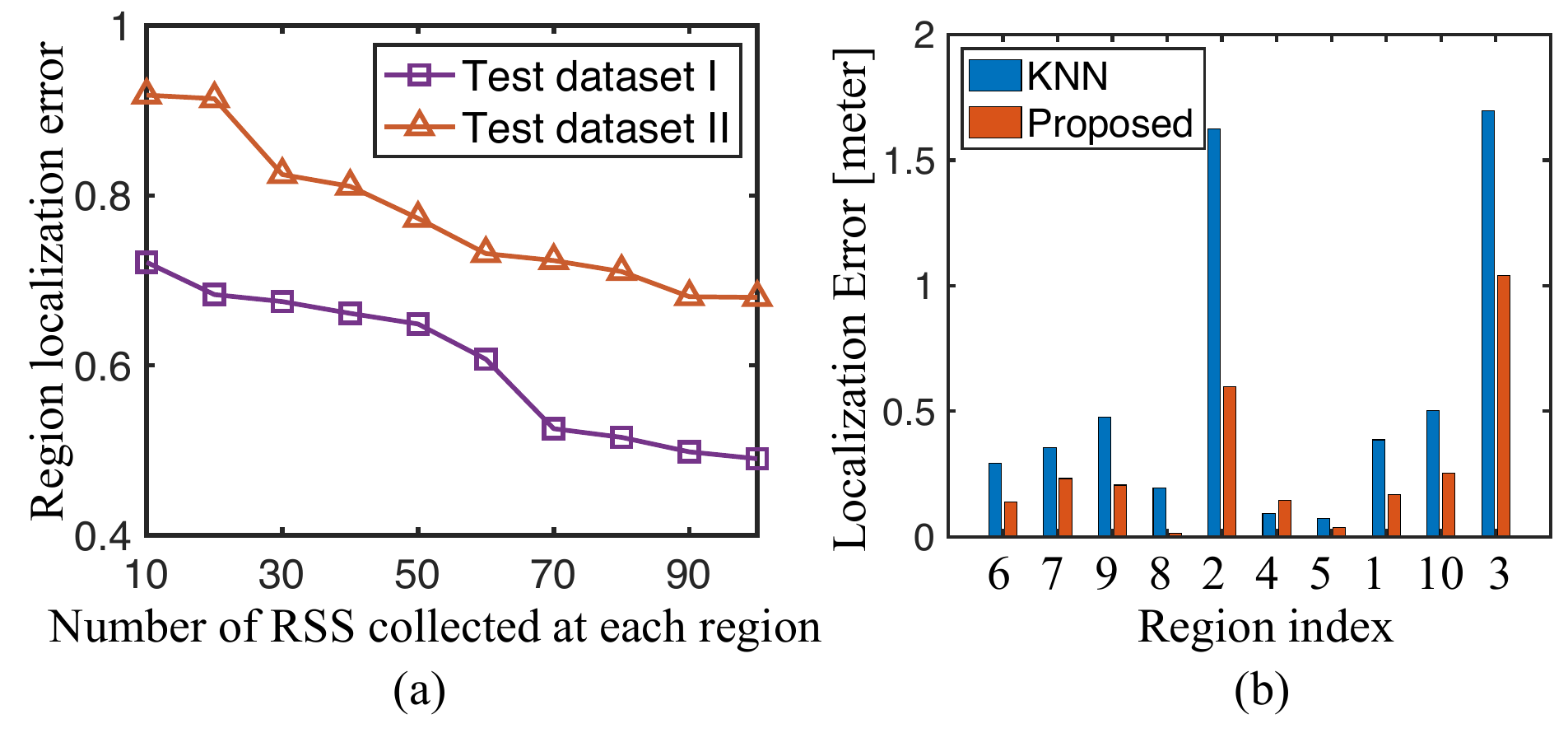}
\par\end{centering}
\caption{(a) Region localization error of our method versus the time interval
for collecting the training data. (b) Region localization error at
each region, where the regions are arranged in an increasing order
of their sizes. \label{fig:effect-region-size}}
\end{figure}

The decrease in positioning accuracy from Day 2 to Day 3 by more than
10\% is because we slightly change the environment to verify the robustness
of the algorithm. Specifically, the environment undergoes changes
as we gather data during regular working hours, simulating real-world
positioning scenarios, where we vary the numbers and locations of
personnel, and we vary the obstacles such as chairs and decorations
on tables. Moreover, the pose of the mobile device also changes during
measurements for test dataset I and II. For the training dataset and
the test dataset I, the mobile device is held in the hand, and it
swings with the arm. For the test dataset II, it is placed in a backpack.
Therefore, the fluctuation in the height of the mobile device and
variations in signal transmission strength result in a series of impacts.
Nevertheless, the proposed method is the least affected.

\subsubsection{Effect of Region Size and Measurement Quantity in Each Region}

Note that the limitation of the proposed work is to provide only a
region-based localization, and therefore, the localization accuracy
depends on the size of the region and the amount of data collected
from each region. We numerically evaluate the localization performance
versus the amount of training data as shown in Fig. \ref{fig:effect-region-size}
(a). We vary the number of RSS data points collected in each region
within the training set. As the quantity of RSS data increases, there
is a reduction in localization error as expected.

To evaluate the impact from the size of the area, we sorted 10 regions
based on their sizes in increasing order and calculated the localization
error for each region in test dataset I. As shown in Fig. \ref{fig:effect-region-size}
(b), there is no clear trend of an increase in localization error
with the increase in the area of the region. However, regions 2 and
3 exhibit relatively large localization errors for all methods, attributed
to the fact that these areas lack distinct separation by large furniture
or walls. Moreover, in our measurement, the mobile device traveled
around a region, and also stayed for a while at a spot. The measurement
was recorded for every 200 ms. Hence, at least, a subset of data is
highly correlated. From our results, we did not notice any significant
performance impact from the data correlation so far.

\begin{table}
\begin{centering}
\caption{Region localization error {[}meters{]} on 24-region Test Dataset III.\label{tab:Space-classification-performance-1}}
\par\end{centering}
\centering{}%
\begin{tabular}{>{\raggedright}m{2cm}|>{\centering}p{0.4cm}>{\centering}p{0.4cm}>{\centering}p{0.4cm}|>{\centering}p{0.4cm}>{\centering}p{0.4cm}>{\centering}p{1cm}}
\hline 
\multirow{2}{2cm}{Dataset III} & \multicolumn{3}{c|}{Supervised} & \multicolumn{3}{c}{Unsupervised}\tabularnewline
 & KNN & SVM & DNN & MR & \ac{wcl} & \multicolumn{1}{c}{Proposed}\tabularnewline
\hline 
Corridor & 0.69 & 0.60 & 0.59 & 0.99 & 1.15 & 0.36\tabularnewline
Room & 0.62 & 0.58 & 0.55 & 0.95 & 1.08 & 0.32\tabularnewline
All & 0.65 & 0.59 & 0.58 & 0.97 & 1.12 & \textbf{0.34}\tabularnewline
\hline 
\end{tabular}
\end{table}

\subsubsection{Region Localization Performance on 24-Region Dataset}

While the algorithm was tested in a relatively small area (but real
office environment), the proposed method is scalable to a larger indoor
area, such as shopping malls. From a computational complexity perspective,
a large indoor area typically has more APs; but since the complexity
of the proposed algorithm is $\mathcal{O}(DN^{2})$, i.e., linearly
in the number of APs $D$, the proposed method can be easily scaled
to a large indoor area. As shown in Table \ref{tab:Space-classification-performance-1},
our method in the extended scenario acquires a 0.34-meter region localization
error, and outperforms all baseline methods.

In Table \ref{tab:Space-classification-performance-1}, the localization
errors of our algorithm in corridor regions (including 8 out of 24
regions) were 0.36 meters, while the localization error in room regions
was measured at 0.32 meters. The localization error in corridor regions
is slightly higher than in room regions. The primary reason is the
absence of clear obstacles between corridors, such as doors or walls,
making it challenging for our signal subspace model to classify the
RSS data measured between two corridors.

\section{Conclusion}

\label{sec:Conclusion}

In this paper, a subspace clustering method with a sequential prior
is proposed to construct a region-based radio map from sequentially
collected \ac{rss} measurements. A maximum-likelihood estimation
problem with a sequential prior is formulated, and solved by a proposed
merge-and-split algorithm that is proven to converge to a globally
optimal solution for a special case. Furthermore, a graph model for
a set of possible routes is constructed which leads to a Viterbi algorithm
for the region matching and achieves less than 1\% matching error.
The numerical results demonstrated that the proposed unsupervised
scheme achieves an even better localization performance than several
supervised learning schemes, including \ac{knn}, \ac{svm}, and \ac{dnn}
which use location labels during the training.


\appendices{}


\section{Proof of Proposition \ref{prop:Asymptotic-Hardening} \label{appendix:Prop1}}

We have 
\begin{align*}
\tilde{f}(\bm{\tau}) & =\frac{1}{N}\sum_{i=1}^{N}\sum_{\substack{j=1,j\neq k,k+1}
}^{K}z_{\beta}(i,\tau_{j-1},\tau_{j})\big\|\mathbf{x}_{i}\\
 & \qquad-\hat{\mathbf{{\rm \bm{\mu}}}}_{k}(\tau_{k-1},\tau_{k})\big\|_{2}^{2}+\tilde{f}_{k}(\tau_{k})
\end{align*}
where
\begin{align*}
 & \tilde{f}_{k}(\tau_{k})=\frac{1}{N}\sum_{i=1}^{N}\Big[z_{\beta}(i,\tau_{k-1},\tau_{k})\big\|\mathbf{x}_{i}-\hat{\mathbf{{\rm \bm{\mu}}}}_{k}(\tau_{k-1},\tau_{k})\big\|_{2}^{2}\\
 & \qquad\qquad+z_{\beta}(i,\tau_{k},\tau_{k+1})\big\|\mathbf{x}_{i}-\hat{\mathbf{{\rm \bm{\mu}}}}_{k+1}(\tau_{k},\tau_{k+1})\big\|_{2}^{2}\Big].
\end{align*}

Recall the uniform convergence for $\hat{\mathbf{{\rm \bm{\mu}}}}_{k}(\tau_{k-1},\tau_{k})$
as $\beta\to0$ in (\ref{eq:tilde-mu-k}), i.e., $\hat{\mathbf{{\rm \bm{\mu}}}}_{k}(\tau_{k-1},\tau_{k})\to\tilde{\bm{\mu}}(\tau_{k-1},\tau_{k})$.
We have
\begin{align*}
\tilde{f}_{k}(\tau_{k}) & \to\bar{f}_{k}(\tau_{k})\\
 & \triangleq\frac{1}{N}\sum_{i=1}^{N}(z_{\beta}(i,\tau_{k-1},\tau_{k})\big\|\mathbf{x}_{i}-\tilde{\bm{\mu}}(\tau_{k-1},\tau_{k})\big\|_{2}^{2}\\
 & \qquad+z_{\beta}(i,\tau_{k},\tau_{k+1})\big\|\mathbf{x}_{i}-\tilde{\bm{\mu}}(\tau_{k},\tau_{k+1})\big\|_{2}^{2})
\end{align*}
where $\bar{f}_{k}(\tau_{k})$ is the asymptotic form of $\tilde{f}_{k}(\tau_{k})$
as $\beta\to0$.

Since $z_{\beta}(i,t_{k-1},t_{k})\to\mathbb{I}\{t_{k-1}<i\leq t_{k}\}$
as $\beta\rightarrow0$, where the indicator function $\mathbb{I}\{t_{k-1}<i\leq t_{k}\}=1$
if $t_{k-1}<i\leq t_{k}$, and 0 otherwise, we have
\begin{align*}
 & \sum_{i=1}^{N}z_{\beta}(i,\tau_{k-1},\tau_{k})\big\|\mathbf{x}_{i}-\tilde{\bm{\mu}}(\tau_{k-1},\tau_{k})\big\|_{2}^{2}\\
 & \rightarrow\sum_{i=\tau_{k-1}+1}^{\tau_{k+1}}(1-\sigma_{\beta}(i-\tau_{k}))\big\|\mathbf{x}_{i}-\tilde{\bm{\mu}}(\tau_{k-1},\tau_{k})\big\|_{2}^{2}
\end{align*}
 and
\begin{align*}
 & \sum_{i=1}^{N}z_{\beta}(i,\tau_{k},\tau_{k+1})\big\|\mathbf{x}_{i}-\tilde{\bm{\mu}}(\tau_{k},\tau_{k+1})\big\|_{2}^{2}\\
 & \rightarrow\sum_{i=\tau_{k-1}+1}^{\tau_{k+1}}\sigma_{\beta}(i-\tau_{k})\big\|\mathbf{x}_{i}-\tilde{\bm{\mu}}(\tau_{k},\tau_{k+1})\big\|_{2}^{2}
\end{align*}
as $\beta\rightarrow0$.

Thus, as $\beta\rightarrow0$, we have $\tilde{f}_{k}(\tau_{k})$$\to f_{k}(\tau_{k})$.
So, we have $\tilde{f}(\bm{\tau})\to\frac{1}{2}\sum_{k=0}^{K}f_{k}(\tau_{k})$
as $\beta\rightarrow0$.

\section{Proof of Proposition \ref{prop:Asymptotic-Consistency} \label{appendix:Prop2}}

Without loss of generality, we study the case $K=2$ and $k=1$. From
(\ref{eq:f_k}) and (\ref{eq:Fk-def}), $\bar{F}_{k}(\gamma_{k})$
and $\bar{f}_{k}(\gamma_{k})$ can be written as $\bar{F}_{1}(\gamma_{1})=\mathbb{E}\{\bar{f}_{1}(\gamma_{1})\}$
and
\begin{align}
\bar{f}_{1}(\gamma_{1}) & =\underset{\beta\rightarrow0}{\mathrm{lim}}\enskip\frac{1}{N}\sum_{i=1}^{N}\Big[\Big(1-\sigma_{\beta}(i-\gamma_{1}N)\Big)\big\|\mathbf{x}_{i}-\tilde{\bm{\mu}}(0,\gamma_{1}N)\big\|_{2}^{2}\nonumber \\
 & \qquad+\sigma_{\beta}(i-\gamma_{1}N)\big\|\mathbf{x}_{i}-\tilde{\bm{\mu}}(\gamma_{1}N,N)\big\|_{2}^{2}\Big].\label{eq:f1_tau1}
\end{align}

Firstly, we prove that $\bar{f}_{1}(\gamma_{1})\overset{\mathrm{P}}{\rightarrow}\bar{F}_{1}(\gamma_{1})$
uniformly for all $\gamma_{1}$, i.e., $\mathrm{sup}_{\gamma_{1}\in\Gamma}\enskip|\bar{f}_{1}(\gamma_{1})-\bar{F}_{1}(\gamma_{1})|\overset{\mathrm{P}}{\rightarrow}0$
as $N\to\infty$, where $\overset{\mathrm{P}}{\rightarrow}$ denotes
convergence in probability.

For any $\gamma_{1}\leq\bar{\gamma}_{1}=t_{1}/N$, from (\ref{eq:tilde-mu-k}), and (\ref{eq:x1}), the cost function
$f_{1}(\tau_{1})$ can be written as
\begin{align*}
	& f_{k}(\tau_{1})\\
	& =\frac{1}{N}\sum_{i=1}^{t_{1}}\sigma_{\beta}\left(i-\tau_{1}\right)\|-\frac{1}{N-\tau_{1}}\sum_{j=\tau_{1}+1}^{N}\bm{\epsilon}_{j}+\frac{N-t_{1}}{N-\tau_{1}}\\
	& \qquad\times(\bm{\mu}_{1}-\bm{\mu}_{2})+\bm{\epsilon}_{i}\|_{2}^{2}+\frac{1}{N}\sum_{i=t_{1}+1}^{N}\sigma_{\beta}\left(i-\tau_{1}\right)\\
	& \qquad\times\|-\frac{1}{N-\tau_{1}}\sum_{j=\tau_{1}+1}^{N}\bm{\epsilon}_{j}+\frac{t_{1}-\tau_{1}}{N-\tau_{1}}(\bm{\mu}_{2}-\bm{\mu}_{1})+\bm{\epsilon}_{i}\|_{2}^{2}\\
	& \qquad+\frac{1}{N}\sum_{i=t_{1}+1}^{N}\left(1-\sigma_{\beta}\left(i-\tau_{1}\right)\right)\|\bm{\mu}_{2}-\bm{\mu}_{1}-\frac{1}{\tau_{1}}\sum_{j=1}^{\tau_{1}}\bm{\epsilon}_{j}\\
	& \qquad+\bm{\epsilon}_{i}\|_{2}^{2}+\frac{1}{N}\sum_{i=1}^{t_{1}}\left(1-\sigma_{\beta}\left(i-\tau_{1}\right)\right)\|-\frac{1}{\tau_{1}}\sum_{j=1}^{\tau_{1}}\bm{\epsilon}_{j}+\bm{\epsilon}_{i}\|_{2}^{2}.
\end{align*}
and the cost function $F_{1}(\tau_{1})$ can be written as

\begin{align*}
	& F_{1}(\tau)\\
	& =\frac{1}{N}\sum_{i=t_{1}+1}^{N}\bigg\{\sigma_{\beta}\left(i-\tau_{1}\right)\left(\frac{t_{1}-\tau_{1}}{N-\tau_{1}}\right)^{2}\|\mathbf{{\rm \bm{\mu}}}_{1}-\mathbf{{\rm \bm{\mu}}}_{2}\|_{2}^{2}\\
	& \qquad+\Big(1-\sigma_{\beta}\left(i-\tau_{1}\right)\Big)||\mathbf{{\rm \bm{\mu}}}_{1}-\mathbf{{\rm \bm{\mu}}}_{2}||_{2}^{2}\bigg\}\\
	& \qquad+\frac{1}{N}\left(\frac{N-t_{1}}{N-\tau_{1}}\right)^{2}\sum_{i=1}^{t_{1}}\sigma_{\beta}\left(i-\tau_{1}\right)\|\mathbf{{\rm \bm{\mu}}}_{1}-\mathbf{{\rm \bm{\mu}}}_{2}\|_{2}^{2}\\
	& \qquad+s_{0}^{2}\bigg\{1+\frac{1}{\tau_{1}}-\frac{2}{N}+\frac{1}{N(N-\tau_{1})}\sum_{i=1}^{\tau_{1}}\sigma_{\beta}\left(i-\tau_{1}\right)\\
	& \qquad-\frac{1}{N\tau_{1}}\sum_{i=\tau_{1}+1}^{N}\sigma_{\beta}\left(i-\tau_{1}\right)+\frac{1}{N\tau_{1}}\sum_{i=1}^{\tau_{1}}\sigma_{\beta}\left(i-\tau_{1}\right)\bigg\}.
\end{align*}
Then,
\begin{align*}
	& \underset{\beta\rightarrow0}{\mathrm{lim}}f_{1}(\tau_{1})-\underset{\beta\rightarrow0}{\mathrm{lim}}F_{1}(\tau_{1})\\
	& =2\frac{N-t_{1}}{N}(\bm{\mu}_{1}-\bm{\mu}_{2})^{\mathrm{T}}\Big((\frac{1}{N-\tau_{1}}-\frac{1}{N-t_{1}})\sum_{i=t_{1}+1}^{N}\bm{\epsilon}_{i}\\
	& \qquad+\frac{1}{N-\tau_{1}}\sum_{i=\tau_{1}+1}^{t_{1}}\bm{\epsilon}_{i}\Big)+\frac{1}{N}\sum_{i=1}^{N}\bm{\epsilon}_{i}^{\mathrm{T}}\bm{\epsilon}_{i}-s_{0}^{2}.
\end{align*}
Thus, with
$s_{0}^{2}\overset{\triangle}{=}Ds^{2}$,
\begin{align}
 & \bar{f}_{1}(\gamma_{1})-\bar{F}_{1}(\gamma_{1})\label{eq:diff_f_F}\\
 & =\underset{\beta\rightarrow0}{\mathrm{lim}}\enskip f_{1}(\gamma_{1}N)-\underset{\beta\rightarrow0}{\mathrm{lim}}\enskip F_{1}(\gamma_{1}N)\nonumber \\
 & =2(\bm{\mu}_{1}-\bm{\mu}_{2})^{\mathrm{T}}\frac{1}{N}\Big[\frac{\gamma_{1}-\bar{\gamma}_{1}}{1-\gamma_{1}}\sum_{i=\bar{\gamma}_{1}N+1}^{N}\bm{\epsilon}_{i}\nonumber \\
 & \qquad+\frac{1-\bar{\gamma}_{1}}{1-\gamma_{1}}\sum_{i=\gamma_{1}N+1}^{\bar{\gamma}_{1}N}\bm{\epsilon}_{i}\Big]+\frac{1}{N}\sum_{i=1}^{N}\bm{\epsilon}_{i}^{\mathrm{T}}\bm{\epsilon}_{i}-s_{0}^{2}.\nonumber 
\end{align}

\begin{lem}
\label{lem:N-inf}It holds that $\mathrm{sup}_{\gamma_{1}\in\bar{\Gamma}_{1}}\enskip|\bar{f}_{1}(\gamma_{1})-\bar{F}_{1}(\gamma_{1})|\overset{\mathrm{P}}{\rightarrow}0$
with $N\rightarrow\infty$, where $\bar{\Gamma}_{1}\overset{\triangle}{=}\{1/N,2/N,...,\bar{\gamma}_{1}\}$.
\end{lem}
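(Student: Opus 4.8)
The plan is to work directly from the exact expression for $\bar{f}_{1}(\gamma_{1})-\bar{F}_{1}(\gamma_{1})$ derived in (\ref{eq:diff_f_F}), which already decomposes the difference into three additive pieces: two terms that are linear in the noise through the inner product $(\bm{\mu}_{1}-\bm{\mu}_{2})^{\mathrm{T}}\bm{\epsilon}_{i}$, and one quadratic term $\frac{1}{N}\sum_{i=1}^{N}\bm{\epsilon}_{i}^{\mathrm{T}}\bm{\epsilon}_{i}-s_{0}^{2}$. By the triangle inequality, $\mathrm{sup}_{\gamma_{1}}|\bar{f}_{1}(\gamma_{1})-\bar{F}_{1}(\gamma_{1})|$ is bounded by the sum of the suprema of the three pieces, so it suffices to show each supremum tends to $0$ in probability.

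The quadratic term is the easiest: it does not depend on $\gamma_{1}$ at all, and since $\bm{\epsilon}_{i}\sim\mathcal{N}(\bm{0},s^{2}\mathbf{I}_{D})$ from (\ref{eq:x1}) are i.i.d. with $\mathbb{E}\{\bm{\epsilon}_{i}^{\mathrm{T}}\bm{\epsilon}_{i}\}=Ds^{2}=s_{0}^{2}$, the weak law of large numbers gives $\frac{1}{N}\sum_{i=1}^{N}\bm{\epsilon}_{i}^{\mathrm{T}}\bm{\epsilon}_{i}\overset{\mathrm{P}}{\rightarrow}s_{0}^{2}$. The first linear term is similarly tractable: the random summation $\sum_{i=\bar{\gamma}_{1}N+1}^{N}\bm{\epsilon}_{i}$ has a fixed index range independent of $\gamma_{1}$, its prefactor $\frac{\gamma_{1}-\bar{\gamma}_{1}}{1-\gamma_{1}}$ is uniformly bounded for $\gamma_{1}\leq\bar{\gamma}_{1}<1$, and the single random variable $\frac{1}{N}(\bm{\mu}_{1}-\bm{\mu}_{2})^{\mathrm{T}}\sum_{i=\bar{\gamma}_{1}N+1}^{N}\bm{\epsilon}_{i}$ has variance of order $1/N$ and hence vanishes in probability; its supremum over $\gamma_{1}$ is just this quantity multiplied by the bounded prefactor.

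The main obstacle is the second linear term, $\frac{2}{N}\frac{1-\bar{\gamma}_{1}}{1-\gamma_{1}}(\bm{\mu}_{1}-\bm{\mu}_{2})^{\mathrm{T}}\sum_{i=\gamma_{1}N+1}^{\bar{\gamma}_{1}N}\bm{\epsilon}_{i}$, because here the summation range itself depends on $\gamma_{1}$, so a pointwise law of large numbers is insufficient and a genuinely uniform control is required. My plan is to set $Y_{i}=(\bm{\mu}_{1}-\bm{\mu}_{2})^{\mathrm{T}}\bm{\epsilon}_{i}$, which are i.i.d. mean-zero with variance $\|\bm{\mu}_{1}-\bm{\mu}_{2}\|_{2}^{2}s^{2}$, and write the partial sum as $T_{\bar{\gamma}_{1}N}-T_{m}$ with $T_{\ell}=\sum_{i=1}^{\ell}Y_{i}$ and $m=\gamma_{1}N$. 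Since $\frac{1-\bar{\gamma}_{1}}{1-\gamma_{1}}\leq 1$ and $|T_{\bar{\gamma}_{1}N}-T_{m}|\leq 2\,\mathrm{sup}_{\ell\leq\bar{\gamma}_{1}N}|T_{\ell}|$, the supremum of this term over $\gamma_{1}$ is bounded by $\frac{4}{N}\mathrm{sup}_{1\leq\ell\leq\bar{\gamma}_{1}N}|T_{\ell}|$. I would then invoke Kolmogorov's maximal inequality (equivalently Doob's $L^{2}$ inequality for the martingale $T_{\ell}$) to obtain, for any $\delta>0$, $P(\mathrm{sup}_{\ell\leq\bar{\gamma}_{1}N}|T_{\ell}|\geq\delta N)\leq\mathrm{Var}(T_{\bar{\gamma}_{1}N})/(\delta N)^{2}=\bar{\gamma}_{1}\|\bm{\mu}_{1}-\bm{\mu}_{2}\|_{2}^{2}s^{2}/(\delta^{2}N)\to 0$, so that $\frac{1}{N}\mathrm{sup}_{\ell}|T_{\ell}|\overset{\mathrm{P}}{\rightarrow}0$. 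Combining the three bounds through the triangle inequality then yields the uniform convergence $\mathrm{sup}_{\gamma_{1}\in\bar{\Gamma}_{1}}|\bar{f}_{1}(\gamma_{1})-\bar{F}_{1}(\gamma_{1})|\overset{\mathrm{P}}{\rightarrow}0$ claimed in the lemma.
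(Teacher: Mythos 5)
Your proposal is correct and follows essentially the same route as the paper: the same three-term decomposition of $\bar{f}_{1}(\gamma_{1})-\bar{F}_{1}(\gamma_{1})$, the law of large numbers for the two $\gamma_{1}$-independent pieces, and Kolmogorov's maximal inequality to get uniform control of the partial sum whose range depends on $\gamma_{1}$. The only difference is cosmetic: you project the noise onto $\bm{\mu}_{1}-\bm{\mu}_{2}$ first and apply the maximal inequality once to the scalar sums $T_{\ell}$, whereas the paper bounds the $\ell_{1}$ norm coordinatewise and applies Kolmogorov's inequality to each of the $D$ coordinate sequences.
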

\begin{proof}
The absolute value of $\bar{f}_{1}(\gamma_{1})-\bar{F}_{1}(\gamma_{1})$
in (\ref{eq:diff_f_F}) can be upper bounded by
\begin{align}
 & |\bar{f}_{1}(\gamma_{1})-\bar{F}_{1}(\gamma_{1})|\label{eq:dfF}\\
 & \leq\Big|\frac{2(\gamma_{1}-\bar{\gamma}_{1})}{1-\gamma_{1}}\Big|\cdot|\bm{\mu}_{1}-\bm{\mu}_{2}|^{\mathrm{T}}\Big|\frac{1}{N}\sum_{i=\bar{\gamma}_{1}N+1}^{N}\bm{\epsilon}_{i}\Big|+\Big|\frac{2(1-\bar{\gamma}_{1})}{1-\gamma_{1}}\Big|\nonumber \\
 & \qquad\times|\bm{\mu}_{1}-\bm{\mu}_{2}|^{\mathrm{T}}\Big|\frac{1}{N}\sum_{i=\gamma_{1}N+1}^{\bar{\gamma}_{1}N}\bm{\epsilon}_{i}\Big|+\Big|\frac{1}{N}\sum_{i=1}^{N}\bm{\epsilon}_{i}^{\mathrm{T}}\bm{\epsilon}_{i}-s_{0}^{2}\Big|.\nonumber 
\end{align}
where $|\mathbf{x}|$ means $(|x_{1}|,|x_{2}|,...,|x_{n}|)^{\mathrm{T}}$,
i.e., taking absolute values for each element of the vector $\mathbf{x}$.

For the first term on the \ac{rhs} of (\ref{eq:dfF}), we have 
\begin{align*}
 & \underset{\gamma_{1}\in\bar{\Gamma}_{1}}{\mathrm{sup}}\Big|\frac{2(\bar{\gamma}_{1}-\gamma_{1})}{1-\gamma_{1}}\Big|\cdot|\bm{\mu}_{1}-\bm{\mu}_{2}|^{\mathrm{T}}\Big|\frac{1}{N}\sum_{i=\bar{\gamma}_{1}N+1}^{N}\bm{\epsilon}_{i}\Big|\\
 & \leq\frac{2(\bar{\gamma}_{1}N-1)}{N-1}\Big(\underset{j}{\mathrm{max}}|\mu_{1,j}-\mu_{2,j}|\Big)\Big\|\frac{1}{N}\sum_{i=\bar{\gamma}_{1}N+1}^{N}\bm{\epsilon}_{i}\Big\|_{1}\rightarrow0
\end{align*}
as $N\rightarrow\infty$. This is because $\frac{2(\bar{\gamma}_{1}N-1)}{N-1}(\mathrm{max}_{j}|\mu_{1,j}-\mu_{2,j}|)$
is bounded and 
\[
\frac{1}{N}\sum_{i=\bar{\gamma}_{1}N+1}^{N}\bm{\epsilon}_{i}=\frac{(1-\bar{\gamma}_{1})N}{N}\frac{1}{(1-\bar{\gamma}_{1})N}\sum_{i=\bar{\gamma}_{1}N+1}^{N}\bm{\epsilon}_{i}\rightarrow\mathbf{0}
\]
due to the strong law of large numbers, where we recall that $\bm{\epsilon}_{i}$
are i.i.d. with zero mean and bounded variance.

For the second term on the \ac{rhs} of (\ref{eq:dfF}), we have 
\begin{align*}
 & \underset{\gamma_{1}\in\bar{\Gamma}_{1}}{\mathrm{sup}}\Big|\frac{2(1-\bar{\gamma}_{1})}{1-\gamma_{1}}\Big|\cdot|\bm{\mu}_{1}-\bm{\mu}_{2}|^{\mathrm{T}}\Big|\frac{1}{N}\sum_{i=\gamma_{1}N+1}^{\bar{\gamma}_{1}N}\bm{\epsilon}_{i}\Big|\\
 & \leq2\Big(\underset{j}{\mathrm{max}}|\mu_{1,j}-\mu_{2,j}|\Big)\underset{\gamma_{1}\in\bar{\Gamma}_{1}}{\mathrm{sup}}\Big\|\frac{1}{N}\sum_{i=\gamma_{1}N+1}^{\bar{\gamma}_{1}N}\bm{\epsilon}_{i}\Big\|_{1}\rightarrow0.
\end{align*}
To see this, we compute
\begin{align*}
\underset{\gamma_{1}\in\bar{\Gamma}_{1}}{\mathrm{sup}}\Big\|\frac{1}{N}\sum_{i=\gamma_{1}N+1}^{\bar{\gamma}_{1}N}\bm{\epsilon}_{i}\Big\|_{1} & =\underset{\gamma_{1}\in\bar{\Gamma}_{1}}{\mathrm{sup}}\sum_{j=1}^{D}\Big|\frac{1}{N}\sum_{i=\gamma_{1}N+1}^{\bar{\gamma}_{1}N}\epsilon_{i,j}\Big|\\
 & \leq\sum_{j=1}^{D}\underset{\gamma_{1}\in\bar{\Gamma}_{1}}{\mathrm{sup}}\Big|\frac{1}{N}\sum_{i=\gamma_{1}N+1}^{\bar{\gamma}_{1}N}\epsilon_{i,j}\Big|.
\end{align*}
Note that, $\forall\:\lambda>0$, 
\begin{align*}
 & \mathbb{P}\Bigg\{\mathrm{sup}_{\gamma_{1}\in\bar{\Gamma}_{1}}\Big|\frac{1}{N}\sum_{i=\gamma_{1}N+1}^{\bar{\gamma}_{1}N}\epsilon_{i,j}\Big|>\lambda\Bigg\}\\
 & =\mathbb{P}\Bigg\{\underset{2\leq k\leq\bar{\gamma}_{1}N}{\mathrm{max}}\Big|\sum_{i=k}^{\bar{\gamma}_{1}N}\frac{\epsilon_{i,j}}{N}\Big|>\lambda\Bigg\}\\
 & \leq\frac{1}{\lambda^{2}}\sum_{i=2}^{\bar{\gamma}_{1}N}\mathbb{V}\Big\{\frac{\epsilon_{i,j}}{N}\Big\}=\frac{1}{\lambda^{2}}\frac{1}{N^{2}}(\bar{\gamma}_{1}N-1)s^{2}\rightarrow0
\end{align*}
as $N\rightarrow\infty$, where the inequality above is due to the
Kolmogorov's inequality. This confirms that the second term of the
\ac{rhs} of (\ref{eq:dfF}) indeed converges to 0 in probability
as $N\rightarrow\infty$.

For the last term on the \ac{rhs} of (\ref{eq:dfF}) that is independent
of $\gamma_{1}$, it is easy to verify $\frac{1}{N}\sum_{i=1}^{N}\bm{\epsilon}_{i}^{\mathrm{T}}\bm{\epsilon}_{i}-s_{0}^{2}\rightarrow0$
as $N\rightarrow\infty$.

Since $\mathrm{sup}_{\gamma_{1}\in\bar{\Gamma}_{1}}\enskip|\bar{f}_{1}(\gamma_{1})-\bar{F}_{1}(\gamma_{1})|$
is less than or equal to the supreme of the \ac{rhs} of (\ref{eq:dfF}),
and the \ac{rhs} of (\ref{eq:dfF}) converges to 0 in probability
as $N\rightarrow\infty$, we have the conclusion that $\mathrm{sup}_{\gamma_{1}\in\bar{\Gamma}_{1}}|\enskip\bar{f}_{1}(\gamma_{1})-\bar{F}_{1}(\gamma_{1})|\overset{\mathrm{P}}{\rightarrow}0$
as $N\rightarrow\infty$.
\end{proof}
For any $\bar{\gamma}_{1}\leq\gamma_{1}\leq1$, we can derive a formula
similar to (\ref{eq:diff_f_F}) and a result similar to Lemma \ref{lem:N-inf},
based on which, the convergence of $\mathrm{sup}_{\gamma_{1}\in\{\bar{\gamma}_{1},\bar{\gamma}_{1}+1/N,...,1\}}|\bar{f}_{1}(\gamma_{1})-\bar{F}_{1}(\gamma_{1})|\overset{\mathrm{P}}{\rightarrow}0$
as $N\rightarrow\infty$ can be established in a similar way.

Recall that $\gamma_{1}^{*}$ is the minimizer of $\bar{F}_{1}(\gamma_{1})$,
and $\hat{\gamma}_{1}$ is the minimizer of $\bar{f}_{1}(\gamma_{1})$.
We have $\bar{f}_{1}(\hat{\gamma}_{1})\leq\bar{f}_{1}(\gamma_{1}^{*})$
and $\bar{F}_{1}(\gamma_{1}^{*})\leq\bar{F}_{1}(\hat{\gamma}_{1})$.
Recall that $\bar{f}_{1}(\gamma_{1})\overset{\mathrm{P}}{\rightarrow}\bar{F}_{1}(\gamma_{1})$
uniformly for all $\gamma_{1}$. We have
\begin{align*}
\bar{f}_{1}(\hat{\gamma}_{1}) & =\bar{F}_{1}(\hat{\gamma}_{1})+o_{p}(1)\\
 & \geq\bar{F}_{1}(\gamma_{1}^{*})+o_{p}(1)=\bar{f}_{1}(\gamma_{1}^{*})+o_{p}(1)
\end{align*}
where $o_{p}(1)$ is a term that converges to $0$ in probability
as $N\rightarrow\infty$. Therefore, we must have $\bar{f}_{1}(\hat{\gamma}_{1})=\bar{f}_{1}(\gamma_{1}^{*})+o_{p}(1)$.

In addition, as is studied in Proposition \ref{prop:Unimodality},
$F_{1}(\tau)$ has the property that it has a unique local minimizer
with a small enough $\beta$, and thus, so is $\bar{F}_{1}(\gamma)=F_{1}(\gamma N)$
for all finite $N$. Finally, we invoke the following result from
\cite{VanDer:b00}.
\begin{lem}
[Theorem 5.7, \cite{VanDer:b00}]\label{lem:Asymptotic}With $\beta\rightarrow0$,
if $\mathrm{sup}_{\gamma_{1}\in\Gamma}|\bar{f}_{1}(\gamma_{1})-\bar{F}_{1}(\gamma_{1})|\overset{\mathrm{P}}{\rightarrow}0$,
and $\mathrm{inf}_{\gamma_{1}:d(\gamma_{1},\gamma_{1}^{*})\geq\varepsilon_{1}}\bar{F}_{1}(\gamma_{1})>\bar{F}_{1}(\gamma_{1}^{*})$
for every $\varepsilon_{1}>0$, where $d(\gamma_{1},\gamma_{1}^{*})=|\gamma_{1}-\gamma_{1}^{*}|$.
Then any sequence of estimators $\hat{\gamma}_{1}$ with $\bar{f}_{1}(\hat{\gamma}_{1})\leq\bar{f}_{1}(\gamma_{1}^{*})+o_{p}(1)$
converges in probability to $\gamma_{1}^{*}$.
\end{lem}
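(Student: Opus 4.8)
The plan is to run the standard argmax (M-estimation) consistency argument: first show that the \emph{value gap} $\bar{F}_1(\hat{\gamma}_1)-\bar{F}_1(\gamma_1^{*})$ tends to zero in probability, and then use the well-separation hypothesis to upgrade this value gap into the \emph{distance} $d(\hat{\gamma}_1,\gamma_1^{*})$ tending to zero. Throughout I treat $\gamma_1^{*}$ as a fixed minimizer of the deterministic proxy $\bar{F}_1$ and $\hat{\gamma}_1$ as the (random) near-minimizer of $\bar{f}_1$ supplied in the statement, working over the finite grid $\Gamma=\{i/N:i=1,\dots,N\}$ on which the uniform-convergence hypothesis $\sup_{\gamma_1\in\Gamma}|\bar{f}_1(\gamma_1)-\bar{F}_1(\gamma_1)|\overset{\mathrm{P}}{\rightarrow}0$ has already been established in Lemma~\ref{lem:N-inf}.

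The first step is a three-term add-and-subtract decomposition
\begin{align*}
\bar{F}_1(\hat{\gamma}_1)-\bar{F}_1(\gamma_1^{*}) &= \big[\bar{F}_1(\hat{\gamma}_1)-\bar{f}_1(\hat{\gamma}_1)\big] + \big[\bar{f}_1(\hat{\gamma}_1)-\bar{f}_1(\gamma_1^{*})\big] \\
&\quad + \big[\bar{f}_1(\gamma_1^{*})-\bar{F}_1(\gamma_1^{*})\big].
\end{align*}
The first and third brackets are each bounded in absolute value by $\sup_{\gamma_1\in\Gamma}|\bar{f}_1(\gamma_1)-\bar{F}_1(\gamma_1)|$, which is $o_p(1)$ by the uniform-convergence hypothesis; crucially, the uniform bound is what lets me control the first bracket even though its argument $\hat{\gamma}_1$ is random. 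The middle bracket is $\le o_p(1)$ by the near-minimizer property $\bar{f}_1(\hat{\gamma}_1)\le \bar{f}_1(\gamma_1^{*})+o_p(1)$. Since $\gamma_1^{*}$ minimizes $\bar{F}_1$, the left-hand side is also nonnegative, so I conclude $0\le \bar{F}_1(\hat{\gamma}_1)-\bar{F}_1(\gamma_1^{*})\le o_p(1)$, i.e. the value gap is $o_p(1)$.

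The second step converts this into the desired convergence. Fix $\varepsilon_1>0$ and set $\eta \triangleq \inf_{\gamma_1:\,d(\gamma_1,\gamma_1^{*})\ge \varepsilon_1}\bar{F}_1(\gamma_1)-\bar{F}_1(\gamma_1^{*})$, which is strictly positive by the well-separation hypothesis. On the event $\{d(\hat{\gamma}_1,\gamma_1^{*})\ge \varepsilon_1\}$ one has $\bar{F}_1(\hat{\gamma}_1)\ge \inf_{d\ge\varepsilon_1}\bar{F}_1$, hence $\bar{F}_1(\hat{\gamma}_1)-\bar{F}_1(\gamma_1^{*})\ge \eta$. Therefore $\{d(\hat{\gamma}_1,\gamma_1^{*})\ge\varepsilon_1\}\subseteq\{\bar{F}_1(\hat{\gamma}_1)-\bar{F}_1(\gamma_1^{*})\ge\eta\}$, and taking probabilities gives $\mathbb{P}\{d(\hat{\gamma}_1,\gamma_1^{*})\ge\varepsilon_1\}\le \mathbb{P}\{\bar{F}_1(\hat{\gamma}_1)-\bar{F}_1(\gamma_1^{*})\ge\eta\}\to 0$ because the value gap is $o_p(1)$ and $\eta>0$ is fixed. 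As $\varepsilon_1>0$ was arbitrary, this is exactly $\hat{\gamma}_1\overset{\mathrm{P}}{\rightarrow}\gamma_1^{*}$.

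I expect the only real subtlety --- and the reason the hypotheses are stated as they are --- to be the handling of the random argument in the first bracket: pointwise convergence $\bar{f}_1(\gamma)\to\bar{F}_1(\gamma)$ at each fixed $\gamma$ would \emph{not} suffice to control $\bar{F}_1(\hat{\gamma}_1)-\bar{f}_1(\hat{\gamma}_1)$, since $\hat{\gamma}_1$ depends on the same noise; uniform convergence over $\Gamma$ is precisely what removes this gap. In the present setting this is not an obstacle in practice because $\Gamma$ is a finite grid of size $N$ and the uniform bound was already secured in Lemma~\ref{lem:N-inf} via Kolmogorov's inequality. The well-separation condition plays the complementary identifiability role, ruling out distinct $\gamma_1$ with nearly equal proxy cost, and its strictness is what furnishes the fixed margin $\eta>0$ used above.
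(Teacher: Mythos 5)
Your proof is correct, and it is essentially the paper's own route in the only sense available: the paper gives no proof of this lemma at all, quoting it verbatim as Theorem 5.7 of van der Vaart \cite{VanDer:b00}, and your argument --- the add-and-subtract decomposition controlled by the uniform convergence $\sup_{\gamma_{1}\in\Gamma}|\bar{f}_{1}(\gamma_{1})-\bar{F}_{1}(\gamma_{1})|\overset{\mathrm{P}}{\rightarrow}0$ to get the value gap $0\leq\bar{F}_{1}(\hat{\gamma}_{1})-\bar{F}_{1}(\gamma_{1}^{*})\leq o_{p}(1)$, followed by the well-separation margin $\eta>0$ to convert the value gap into $d(\hat{\gamma}_{1},\gamma_{1}^{*})\overset{\mathrm{P}}{\rightarrow}0$ --- is exactly the canonical proof of that theorem, correctly adapted from the maximization form in the reference to the minimization form stated here. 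Your closing remarks about why uniform (rather than pointwise) convergence is needed for the random argument $\hat{\gamma}_{1}$, and why strict well-separation supplies the fixed margin, identify precisely the two hypotheses' roles.
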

Therefore, we have $\hat{\gamma}_{1}\to\gamma_{1}^{*}$ in probability
as $N\to\infty$.

\section{Proof of Proposition \ref{prop:Unimodality}, \ref{prop:flatness},
and \ref{prop:monotonicity-near-boundary}}

\subsection*{1. Proof of Proposition \ref{prop:Unimodality}\label{appendix:Prop3}}

Following the signal model (\ref{eq:x1}), under $d_{j}=d_{j+1}=0$,
$s_{j}=s_{j+1}=s$, we have
\begin{equation}
\mathbf{x}_{i}=\begin{cases}
\mathbf{{\rm \bm{\mu}}}_{j}+\bm{\epsilon}_{i}, & \tau_{k-1}<i\leq t_{j}\\
\mathbf{{\rm \bm{\mu}}}_{j+1}+\bm{\epsilon}_{i}, & t_{j}<i\leq\tau_{k+1}.
\end{cases}\label{eq:xii-1}
\end{equation}

Case 1: Consider $\tau_{k-1}<i\leq t_{j}$. Using (\ref{eq:xii-1}),
one can compute the expectation of $f_{k}(\tau)$ and then take the
difference, i.e., $\triangle F_{k}(\tau)=F_{k}(\tau)-F_{k}(\tau-1)$
can be computed as
\begin{align}
\triangle F_{k}(\tau) & =\frac{1}{N}\|\mathbf{{\rm \bm{\mu}}}_{j}-\mathbf{{\rm \bm{\mu}}}_{j+1}\|_{2}^{2}u(\tau,t_{j},\beta)+\frac{1}{N}s_{0}^{2}\gamma(\tau,\beta)\label{eq:dF-1}
\end{align}
where $u(\tau,t_{j},\beta)$ and $\gamma(\tau,\beta)$ are functions
\ac{wrt} the segmentation variable $\tau$, the segment boundary
$t_{j}$, and the parameter $\beta$. For a detailed derivation of equation (\ref{eq:dF-1}), please refer to Appendix \ref{app:proof-prop-uni}.

To simplify the expression (\ref{eq:dF-1}), consider the definition
of $\sigma_{\beta}\left(x\right)=\sigma\left((x-1/2)/\beta\right)$
based on the sigmoid function $\sigma(x)=(1+\mathrm{exp}(-x))^{-1}$.
It follows that
\begin{equation}
\begin{cases}
1-\varepsilon<\sigma_{\beta}\left(i-\tau\right)<1, & i\geq\tau+1\\
0<\sigma_{\beta}\left(i-\tau\right)<\varepsilon, & i\leq\tau
\end{cases}\label{eq:bound_signoid_beta}
\end{equation}
for $\beta<(2\mathrm{ln}(1/\varepsilon-1))^{-1}$. Using the bounds
in (\ref{eq:bound_signoid_beta}), there exist positive constants
$B_{1},B_{2}<\infty$, such that $u(\tau,t_{j},\beta)\leq\varepsilon B_{1}-B_{2}$.

Likewise, using (\ref{eq:bound_signoid_beta}), the term $\gamma(\tau,\beta)$
can be upper bounded as $\gamma(\tau,\beta)\leq\varepsilon(4+\frac{(\tau_{k+1}-\tau_{k-1})^{2}}{\tau_{k+1}-\tau_{k-1}-1})$.
As a result, there exist constants $C_{1},C_{2}<\infty$, such that
the difference in (\ref{eq:dF-1}) $\bigtriangleup F_{k}(\tau)<\varepsilon C_{1}-C_{2}<0$
for a small enough $\beta$.

Case 2: Now, consider $t_{j}<i\leq\tau_{k+1}$. The derivation for
a lower bound of $\triangle F_{k}(\tau)$ in (\ref{eq:dF-1}) is similar
to the derivation of the upper bound of $\triangle F_{k}(\tau)$ in
Case 1 for $\tau_{k-1}<i\leq t_{j}$. The lower bound of $u(\tau,t_{j},\beta)$
is given by $B_{4}-\varepsilon B_{3}$ for some positive and finite
$B_{3},B_{4}$. In addition, the lower bound of $\gamma(\tau,\beta)$
is $\gamma(\tau,\beta)\geq-\frac{1}{2}(\tau_{k+1}-\tau_{k-1})\varepsilon$.
As a result, there exist constants $C_{1}',C_{2}'<\infty$, such that
$\bigtriangleup F_{k}(\tau)>C_{2}'-\varepsilon C_{1}'>0$ for a small
enough $\beta$. 

For a comprehensive derivation of the bound of $\bigtriangleup F_{k}(\tau)$ for case 1 and case 2, please consult Appendix \ref{app:proof-prop-uni}. Combining the above two cases, it follows that if 
\begin{align*}
	\beta & <\frac{1}{2}\mathrm{log}^{-1}\Big(\mathrm{max}\{\frac{B_{1}}{B_{2}}-1,\frac{B_{3}}{B_{4}}-1\}\\
	& \qquad+\mathrm{max}\{\frac{D}{B_{2}}\big(4+\frac{(\tau_{k+1}-\tau_{k-1})^{2}}{\tau_{k+1}-\tau_{k-1}-1}\big)\\
	& \qquad,\frac{D(\tau_{k+1}-\tau_{k-1})}{2B_{4}}\}\frac{s^{2}}{\|\mathbf{{\rm \bm{\mu}}}_{j}-\mathbf{{\rm \bm{\mu}}}_{j+1}\|_{2}^{2}}\Big)
\end{align*}
we must have $\triangle F_{k}(\tau)>0$ for $\tau\leq t_{j}$ and
$\triangle F_{k}(\tau)<0$ for $\tau>t_{j}$.

\subsection*{2. Proof of Proposition \ref{prop:flatness} \label{appendix:Prop4}}

Under $d_{k}=0$, $s_{k}=s$, and no partition index in $(\tau_{k-1},\tau_{k+1}]$,
it corresponds to the case where there exists a $t_{j}$ in $\{t_{1},t_{2},\dots,t_{K-1}\}$
such that $t_{j}\geq t_{k+1}$. Thus, the derivation of $\triangle F_{k}(\tau)$
follows (\ref{eq:diff_f_F}) by replacing $t_{j}$ with $\tau_{k+1}$.
Then, following the same derivation as in Appendix \ref{appendix:Prop3}-1,
one can easily arrive at $\triangle F_{k}(\tau)<\varepsilon s^{2}\frac{D(\tau_{k+1}-\tau_{k-1})^{2}}{(\tau-\tau_{k-1})(\tau_{k+1}-\tau)}$,
and $\triangle F_{k}(\tau)>-\varepsilon s^{2}\frac{D(\tau_{k+1}-\tau_{k-1})^{2}}{(\tau_{k+1}-\tau+1)(\tau-\tau_{k-1}-1)}$,
which are bounded since $\tau\in(\tau_{k-1},\tau_{k+1}]$. As a result,
$|\triangle F_{k}(\tau)|<\varepsilon s^{2}C_{0}$, for some finite
constant $C_{0}>0$.

\subsection*{3. Proof of Proposition \ref{prop:monotonicity-near-boundary} \label{appendix:Prop5}}

Following the signal model (\ref{eq:x1}) under $d_{k}=0$, $s_{k}=s$
for any $k\in\{1,2,...,K\}$, where $K\geq2$, we have
\begin{equation}
\mathbf{x}_{i}=\mathbf{{\rm \bm{\mu}}}_{k}+\bm{\epsilon}_{i},t_{k-1}<i\leq t_{k}\label{eq:xii}
\end{equation}

Case 1: Consider $t_{k-1}<\tau\leq t_{j}$. Define $\bm{\eta}(o,l)=\frac{1}{t_{l}-t_{o-1}}\sum_{a=o}^{l}(t_{a}-t_{a-1})\mathbf{{\rm \bm{\mu}}}_{a}$
as the mean of the sample located in the $k$th subspace with $s_{k}=s$,
and $d_{k}=0$, $k\in[o,l]$. Using (\ref{eq:xii}), one can compute
the expectation of $f_{k}(\tau)$ and then take the difference, i.e.,
$\triangle F_{k}(\tau)=F_{k}(\tau)-F_{k}(\tau-1)$. Using the bounds
in (\ref{eq:bound_signoid_beta}), there exist constants $C_{3},C_{4}$,
such that $\triangle F_{k}(\tau)$ is upper bounded by $\bigtriangleup F(\tau)<\varepsilon C_{3}-C_{4}$,
which is smaller than zero if $\beta$ is small enough, and $\|\mathbf{{\rm \bm{\mu}}}_{j}-\bm{\eta}(j+1,j+J+1)\|_{2}^{2}\neq0$,
i.e., $\mathbf{{\rm \bm{\mu}}}_{j},\mathbf{{\rm \bm{\mu}}}_{2},...,\mathbf{{\rm \bm{\mu}}}_{j+J+1}$
are linearly independent.

Case 2: Now, consider $t_{j+J}<\tau\leq\tau_{k+1}$. The derivation
for a lower bound of $\bigtriangleup F_{k}(\tau)$ is similar to the
derivation of the upper bound of $\bigtriangleup F_{k}(\tau)$ in
Case 1 for $t_{j+J}<\tau\leq\tau_{k+1}$. Using the bounds in (\ref{eq:bound_signoid_beta}),
there exist constants $C_{3}',C_{4}'$, such that $\triangle F_{k}(\tau)$
is lower bounded by $\bigtriangleup F(\tau)>-\varepsilon C_{3}'+C_{4}'$,
which is larger than zero if $\beta$ is small enough and $\|\bm{\eta}(j,j+J)-\mathbf{{\rm \bm{\mu}}}_{j+J+1}\|_{2}^{2}\neq0$.

For a detailed derivation of the bound of $\bigtriangleup F_{k}(\tau)$ for case 1 and case 2, please refer to Appendix \ref{app:deri-prop-mono}.
Combining the above two cases. It follows that if 
\begin{align*}
	\beta & <\frac{1}{2}\Big[\mathrm{log}\Big(\mathrm{max}\{\frac{B_{1}}{B_{2}}-1,\frac{B_{3}}{B_{4}}-1\}+\frac{D}{\Phi(\mathbf{{\rm \bm{\mu}}}_{j},\mathbf{{\rm \bm{\mu}}}_{2},...,\mathbf{{\rm \bm{\mu}}}_{j+J+1})}\\
	& \qquad\times\mathrm{max}\{\frac{(\tau_{k+1}-\tau_{k-1})^{2}}{(\tau-\tau_{k-1})(\tau_{k+1}-\tau)B_{2}}s^{2}\\
	& \qquad,\frac{(\tau_{k+1}-\tau_{k-1})^{2}}{(\tau_{k+1}-\tau+1)(\tau-\tau_{k-1}-1)B_{4}}\}\Big)\Big]^{-1}
\end{align*}
$\phi(\mathbf{{\rm \bm{\mu}}}_{j},\bm{\eta}(j+1,j+J+1))\neq0$, and
$\phi(\bm{\eta}(j,j+J),\mathbf{{\rm \bm{\mu}}}_{j+J+1})\neq0$, where
\begin{align*}
	& \Phi(\mathbf{{\rm \bm{\mu}}}_{j},\mathbf{{\rm \bm{\mu}}}_{j+1},...,\mathbf{{\rm \bm{\mu}}}_{j+J+1})\\
	& =\mathrm{min}\{\|\mathbf{{\rm \bm{\mu}}}_{j}-\frac{1}{\tau_{k+1}-t_{1}}\sum_{a=j+1}^{j+J+1}(t_{a}-t_{a-1})\mathbf{{\rm \bm{\mu}}}_{a}\|_{2}^{2}\\
	& \qquad,\|\frac{1}{t_{j+J}-\tau_{k-1}}\sum_{a=j}^{j+J}(t_{a}-t_{a-1})\mathbf{{\rm \bm{\mu}}}_{a}-\mathbf{{\rm \bm{\mu}}}_{j+J+1}\|_{2}^{2}\}
\end{align*}
We have $\bigtriangleup F_{k}(\tau)>0$ for $\tau_{k-1}+1<\tau\leq t_{j}$
and $\bigtriangleup F_{k}(\tau)<0$ for $t_{j+J}<\tau\leq\tau_{k+1}$.
The inequality $\phi(\mathbf{{\rm \bm{\mu}}}_{j},\bm{\eta}(j+1,j+J+1))\neq0$,
and $\phi(\bm{\eta}(j,j+J),\mathbf{{\rm \bm{\mu}}}_{j+J+1})\neq0$
hold if $\mathbf{{\rm \bm{\mu}}}_{j},\mathbf{{\rm \bm{\mu}}}_{2},...,\mathbf{{\rm \bm{\mu}}}_{j+J+1}$
are linear independent.


\section{Proof of Lemma \ref{lem:binary partition} \label{appendix:Prop6}}

Since there exists at least one index $t_{k'}\in\{t_{1},t_{2},...,t_{K-1}\}$
in the cluster $\tilde{\mathcal{C}}_{j}^{(m,k)}$, it follows that
there are at least one such index $t_{k'}$ within the interval $(\tau_{j-1}^{(m,k,j)},\tau_{j+1}^{(m,k,j)}]$.
It thus follows from Proposition \ref{prop:Asymptotic-Consistency}
and \ref{prop:monotonicity-near-boundary} that for any $\varepsilon>0$,
there exists a small enough $\beta$ and some finite constants $C$
and $C'>0$, such that 
\begin{align*}
\triangle F_{*}^{(m,k,j)} & =F_{j}(\tau_{j-1};\bm{\tau}_{-j}^{(m,k,j)})-F_{j}(\tau_{j}^{*};\bm{\tau}_{-j}^{(m,k,j)})\\
 & >-\varepsilon C+C'
\end{align*}
which is positive from a small enough $\varepsilon$ (hence, small
enough $\beta$). For the cluster $\tilde{\mathcal{C}}_{j'}^{(m,k)}$,
since there is no partition such index $t_{k'}$ in the interval $(\tau_{j'-1}^{(m,k,j')},\tau_{j'+1}^{(m,k,j')}]$,
Proposition \ref{prop:flatness} suggests that $|F_{j'}(\tau;\bm{\tau}_{-j'}^{(m,k,j')})-F_{j'}(\tau_{j}^{*};\bm{\tau}_{-j'}^{(m,k,j')})|<\varepsilon B$
which leads to $|\triangle F_{*}^{(m,k,j')}|<\varepsilon B$.

As a result, for a small enough $\beta$ (hence, small enough $\varepsilon$),
we must have $\triangle F_{*}^{(m,k,j)}>\triangle F_{*}^{(m,k,j')}$.

\section{Proof of Theorem \ref{prop:optimal} \label{appendix:Prop7}}

If $\exists k\in\{1,2,...K-1\}$ satisfying $\tau_{k}^{(m)}\neq t_{k}$,
then for any partition assignment $\{\tau_{1},\tau_{2},...,\tau_{k-1}$,$\tau_{k+1},...,\tau_{K-1}\}$,
there always exists $l\in\{1,2,...,K-1\}$ such that the interval
$(\tau_{l-1},\tau_{l})$ contains at least one of $\{t_{k}\}_{k=1}^{K-1}$.
We first prove that the following two cases will not occur if $\tilde{F}(\bm{\tau}^{(m+1)})=\tilde{F}(\bm{\tau}^{(m)})$.

1) If there exists $l\in\{1,2,...,K-1\}$ such that the interval $(\tau_{l-1},\tau_{l+1})$
contains none of $\{t_{k}\}_{k=1}^{K-1}$, then, there must exist
a interval $(\tau_{a-1},\tau_{a})$, $a\neq l,l+1$ containing at
least one of $\{t_{k}\}_{k=1}^{K-1}$. According to Lemma \ref{lem:binary partition},
we will obtain \textit{a lower total cost $\tilde{F}(\cdot)$} if
we replace the partition $\tau_{l}$ with one of the partition in
$(\tau_{a-1},\tau_{a})$. Thus, the algorithm will not stop if there
exists $l\in\{1,2,...,K-1\}$ such that the interval $(\tau_{l-1},\tau_{l+1})$
contains none of $\{t_{k}\}_{k=1}^{K-1}$.

2) If there exists $l\in\{1,2,...,K-1\}$ such that the interval $(\tau_{l-1},\tau_{l+1})$
contains at least two of $\{t_{k}\}_{k=1}^{K-1}$. Then, there must
exist an interval among $(\tau_{i-1},\tau_{i}]$, $i\in\{1,2,...l-1,l+2,...,K\}$
containing none of $\{t_{k}\}_{k=1}^{K-1}$. This is conflicted with
the conclusion of the above case 1), i.e., the interval $(\tau_{l-1},\tau_{l}]$
and $[\tau_{l},\tau_{l+1})$, for any $l\in\{1,2,...K-1\}$, aways
exist at least one of $\{t_{k}\}_{k=1}^{K-1}$ if the algorithm is
convergent. So, there exists none interval containing at least two
of $\{t_{k}\}_{k=1}^{K-1}$ if $\tilde{F}(\bm{\tau}^{(m+1)})=\tilde{F}(\bm{\tau}^{(m)})$.

Based on the above analysis, we have the conclusion that the interval
$(\tau_{l-1},\tau_{l+1})$ for any $l\in\{1,2,...K-1\}$ contains
exactly one of $\{t_{k}\}_{k=1}^{K-1}$ if $\tilde{F}(\bm{\tau}^{(m+1)})=\tilde{F}(\bm{\tau}^{(m)})$.

However, when Algorithm \ref{alg:merge-and-split} converges, Proposition
\ref{prop:Unimodality} implies that $\tau_{l}=t_{l}$ if $t_{l}$
is the only $\{t_{k}\}_{k=1}^{K-1}$ in $(\tau_{l-1},\tau_{l+1})$.
With this, we conclude that if $\tau_{k}^{(m+1)}=\tau_{k}^{(m)}=t_{k}$
for all $k$ when Algorithm \ref{alg:merge-and-split} has converged.

\section{Detail Derivation of Proof of Proposition \ref{prop:Unimodality}}
\label{app:proof-prop-uni}
\subsection{Detail Derivation of Equation (\ref{eq:dF-1})}

Consider $\tau_{k-1}<i\leq t_{j}$. The cost function $f_{k}(\tau)$
can be written as 
\begin{align*}
	& f_{k}(\tau)=\frac{1}{N}\sum_{i=\tau_{k-1}+1}^{t_{j}}\left(1-\sigma_{\beta}\left(i-\tau\right)\right)g_{11}(\tau)\\
	& \qquad+\frac{1}{N}\sum_{i=t_{j}+1}^{\tau_{k+1}}\left(1-\sigma_{\beta}\left(i-\tau\right)\right)g_{12}(\tau)+\frac{1}{N}\sum_{i=\tau_{k-1}+1}^{t_{j}}\\
	& \qquad\times\sigma_{\beta}\left(i-\tau\right)g_{21}(\tau)+\frac{1}{N}\sum_{i=t_{j}+1}^{\tau_{k+1}}\sigma_{\beta}\left(i-\tau\right)g_{22}(\tau)
\end{align*}
where $g_{11}(\tau)=\|\bm{\epsilon}_{i}-\frac{1}{\tau-\tau_{k-1}}\sum_{l=\tau_{k-1}+1}^{\tau}\bm{\epsilon}_{l}\|_{2}^{2}$,
$g_{12}(\tau)=\|\mathbf{{\rm \bm{\mu}}}_{j+1}-\mathbf{{\rm \bm{\mu}}}_{j}+\bm{\epsilon}_{i}-\frac{1}{\tau-\tau_{k-1}}\sum_{l=\tau_{k-1}+1}^{\tau}\bm{\epsilon}_{l}\|_{2}^{2}$,
$g_{21}(\tau)=\|\frac{\tau_{k+1}-t_{j}}{\tau_{k+1}-\tau}(\mathbf{{\rm \bm{\mu}}}_{j}-\mathbf{{\rm \bm{\mu}}}_{j+1})+\bm{\epsilon}_{i}-\frac{1}{\tau_{k+1}-\tau}\sum_{l=\tau+1}^{\tau_{k+1}}\bm{\epsilon}_{l}\|_{2}^{2}$,
and $g_{22}(\tau)=\|\frac{t_{j}-\tau}{\tau_{k+1}-\tau}(\mathbf{{\rm \bm{\mu}}}_{j}-\mathbf{{\rm \bm{\mu}}}_{j+1})+\bm{\epsilon}_{i}-\frac{1}{\tau_{k+1}-\tau}\sum_{l=\tau+1}^{\tau_{k+1}}\bm{\epsilon}_{l}\|_{2}^{2}$.
Since $\bm{\epsilon}_{l}$ are assumed to be i.i.d. following $\mathcal{N}(0,s^{2}\mathbf{I})$,
the expectation of $g_{11}(\tau)$ can be computed as 
\begin{align*}
	\mathbb{E}\left\{ g_{11}(\tau)\right\}  & =\mathbb{E}\{\bm{\epsilon}_{i}^{\mathrm{T}}\bm{\epsilon}_{i}\}-2\frac{1}{\tau-\tau_{k-1}}\mathbb{E}\Bigg\{\bm{\epsilon}_{i}^{\mathrm{T}}\sum_{l=\tau_{k-1}+1}^{\tau}\bm{\epsilon}_{l}\Bigg\}\\
	& \qquad\qquad+\mathbb{E}\Bigg\{\frac{1}{(\tau-\tau_{k-1})^{2}}\sum_{o,l=1}^{\tau}\bm{\epsilon}_{o}^{\mathrm{T}}\bm{\epsilon}_{l}\Bigg\}\\
	& =s_{0}^{2}-2s_{0}^{2}\frac{\mathbb{I}\{i\leq\tau\}}{\tau-\tau_{k-1}}+\frac{s_{0}^{2}}{\tau-\tau_{k-1}}\\
	& =s_{0}^{2}\Big(1+\frac{1}{\tau-\tau_{k-1}}-\frac{2\mathbb{I}\{i\leq\tau\}}{\tau-\tau_{k-1}}\Big).
\end{align*}

Likewise, we obtain $\mathbb{E}\{g_{21}(\tau)\}=\left(\frac{\tau_{k+1}-t_{j}}{\tau_{k+1}-\tau}\right)^{2}\|\mathbf{{\rm \bm{\mu}}}_{j}-\mathbf{{\rm \bm{\mu}}}_{j+1}\|_{2}^{2}+s_{0}^{2}\left(1+\frac{1}{\tau_{k+1}-\tau}-\frac{2\mathbb{I}\{i>\tau\}}{\tau_{k+1}-\tau}\right)$,
$\mathbb{E}\left\{ g_{12}(\tau)\right\} =\|\mathbf{{\rm \bm{\mu}}}_{j}-\mathbf{{\rm \bm{\mu}}}_{j+1}\|_{2}^{2}+s_{0}^{2}\left(1+\frac{1}{\tau-\tau_{k-1}}-\frac{2\mathbb{I}\{i\leq\tau\}}{\tau-\tau_{k-1}}\right)$,
and $\mathbb{E}\{g_{22}(\tau)\}=\left(\frac{t_{j}-\tau}{\tau_{k+1}-\tau}\right)^{2}\|\mathbf{{\rm \bm{\mu}}}_{j}-\mathbf{{\rm \bm{\mu}}}_{j+1}\|_{2}^{2}+s_{0}^{2}\Big(1+\frac{1}{\tau_{k+1}-\tau}-\frac{2\mathbb{I}\{i>\tau\}}{\tau_{k+1}-\tau}\Big)$.
Thus, the expectation of the cost function $f_{k}(\tau)$ can be computed
as
\begin{align}
	& F_{k}(\tau)=\frac{1}{N}\|\mathbf{{\rm \bm{\mu}}}_{j}-\mathbf{{\rm \bm{\mu}}}_{j+1}\|_{2}^{2}\Bigg\{(\tau_{k+1}-t_{j})+\Big(\frac{\tau_{k+1}-t_{j}}{\tau_{k+1}-\tau}\Big)^{2}\nonumber \\
	& \qquad\times\sum_{i=\tau_{k-1}+1}^{t_{j}}\sigma_{\beta}\left(i-\tau\right)+\Big(\Big(\frac{t_{j}-\tau}{\tau_{k+1}-\tau}\Big)^{2}-1\Big)\nonumber \\
	& \qquad\times\sum_{i=t_{j}+1}^{\tau_{k+1}}\sigma_{\beta}\left(i-\tau\right)\Bigg\}+\frac{1}{N}s_{0}^{2}\Bigg\{\tau_{k+1}\left(1+\frac{1}{\tau-\tau_{k-1}}\right)\nonumber \\
	& \qquad-2+\sum_{i=\tau_{k-1}+1}^{\tau_{k+1}}\sigma_{\beta}\left(i-\tau\right)\Big(\frac{1}{\tau_{k+1}-\tau}-\frac{1}{\tau-\tau_{k-1}}\Big)\nonumber \\
	& \qquad+2\sum_{i=\tau_{k-1}+1}^{\tau}\frac{\sigma_{\beta}\left(i-\tau\right)}{\tau-\tau_{k-1}}-2\sum_{i=\tau+1}^{\tau_{k+1}}\frac{\sigma_{\beta}\left(i-\tau\right)}{\tau_{k+1}-\tau}\Bigg\}.\label{eq:F K=00003D2-1}
\end{align}
In addition, the difference $\triangle F_{k}(\tau)=F_{k}(\tau)-F_{k}(\tau-1)$
can be computed as (\ref{eq:dF-1}), i.e., 
\begin{align*}
	\triangle F_{k}(\tau) & =\frac{1}{N}\|\mathbf{{\rm \bm{\mu}}}_{j}-\mathbf{{\rm \bm{\mu}}}_{j+1}\|_{2}^{2}u(\tau,t_{j},\beta)+\frac{1}{N}s_{0}^{2}\gamma(\tau,\beta)
\end{align*}
where

\begin{align*}
	& u(\tau,t_{j},\beta)=\left(\frac{\tau_{k+1}-t_{j}}{\tau_{k+1}-\tau}\right)^{2}\sum_{i=\tau_{k-1}+1}^{t_{j}}\sigma_{\beta}\left(i-\tau\right)\\
	& +\left(1-\frac{\tau_{k+1}-t_{j}}{\tau_{k+1}-\tau}\right)^{2}\sum_{i=t_{j}+1}^{\tau_{k+1}}\sigma_{\beta}\left(i-\tau\right)-\left(\frac{\tau_{k+1}-t_{j}}{\tau_{k+1}-\tau+1}\right)^{2}\\
	& \times\sum_{i=\tau_{k-1}+2}^{t_{j}+1}\sigma_{\beta}\left(i-\tau\right)-\left(1-\frac{\tau_{k+1}-t_{j}}{\tau_{k+1}-\tau+1}\right)^{2}\\
	& \times\sum_{i=t_{j}+2}^{\tau_{k+1}+1}\sigma_{\beta}\left(i-\tau\right)+\sum_{i=t_{j}+2}^{\tau_{k+1}+1}\sigma_{\beta}\left(i-\tau\right)-\sum_{i=t_{j}+1}^{\tau_{k+1}}\sigma_{\beta}\left(i-\tau\right)
\end{align*}
and
\begin{align*}
	\gamma(\tau,\beta) & =(\tau_{k+1}-\tau_{k-1})\left(\frac{1}{\tau-\tau_{k-1}}-\frac{1}{\tau-1-\tau_{k-1}}\right)\\
	& +\frac{\sum_{i=\tau_{k-1}+1}^{\tau}\sigma_{\beta}\left(i-\tau\right)}{\tau_{k+1}-\tau}-\frac{\sum_{i=\tau_{k-1}+2}^{\tau}\sigma_{\beta}\left(i-\tau\right)}{\tau_{k+1}-\tau+1}\\
	& \qquad+\frac{\sum_{i=\tau+1}^{\tau_{k+1}+1}\sigma_{\beta}\left(i-\tau\right)}{\tau-1-\tau_{k-1}}-\frac{\sum_{i=\tau+1}^{\tau_{k+1}}\sigma_{\beta}\left(i-\tau\right)}{\tau-\tau_{k-1}}\\
	& \qquad+\frac{\sum_{i=\tau_{k-1}+1}^{\tau}\sigma_{\beta}\left(i-\tau\right)}{\tau-\tau_{k-1}}-\frac{\sum_{i=\tau+1}^{\tau_{k+1}}\sigma_{\beta}\left(i-\tau\right)}{\tau_{k+1}-\tau}\\
	& \qquad+\frac{\sum_{i=\tau+1}^{\tau_{k+1}+1}\sigma_{\beta}\left(i-\tau\right)}{\tau_{k+1}-\tau+1}-\frac{\sum_{i=\tau_{k-1}+2}^{\tau}\sigma_{\beta}\left(i-\tau\right)}{\tau-1-\tau_{k-1}}.
\end{align*}

\subsection{Detail Derivation of the Bound of $\bigtriangleup F_{k}(\tau)$ for
	Case 1}

The term $u(\tau,t_{j},\beta)$ in (\ref{eq:dF-1}) can be upper bounded
as $u(\tau,t_{j},\beta)\leq\varepsilon B_{1}-B_{2}$, where 
\begin{align*}
	B_{1} & =(\tau_{k+1}-t_{j})^{2}\Big(\frac{\tau-\tau_{k-1}}{(\tau_{k+1}-\tau)^{2}}+\frac{t_{j}-\tau+1}{(\tau_{k+1}-\tau+1)(\tau_{k+1}-t_{j})}\\
	& \qquad\qquad+\frac{1}{\tau_{k+1}-t_{j}}\Big)
\end{align*}
and $B_{2}=(\tau_{k+1}-t_{j})^{2}\left(\frac{1}{\tau_{k+1}-\tau}-\frac{1}{\tau_{k+1}-\tau+1}\right)$.
Since $\tau_{k-1}<\tau<t_{j}<\tau_{k+1}$, it can be easily verified
that $B_{1}>0$ and $B_{2}>0$.

Recall the upper bound of $\gamma(\tau,\beta)$, i.e., $\gamma(\tau,\beta)\leq\varepsilon\left(4+\frac{(\tau_{k+1}-\tau_{k-1})^{2}}{\tau_{k+1}-\tau_{k-1}-1}\right)$.
As a result, the difference in (\ref{eq:dF-1}) can be upper bounded
as 
\begin{align*}
	\bigtriangleup F_{k}(\tau) & <\frac{1}{N}\|\mathbf{{\rm \bm{\mu}}}_{j}-\mathbf{{\rm \bm{\mu}}}_{j+1}\|_{2}^{2}\left(\varepsilon B_{1}-B_{2}\right)+\frac{1}{N}s_{0}^{2}\varepsilon\Big(4\\
	& \qquad\qquad+\frac{(\tau_{k+1}-\tau_{k-1})^{2}}{\tau_{k+1}-\tau_{k-1}-1}\Big)\\
	& =\frac{1}{N}\varepsilon\Bigg(\|\mathbf{{\rm \bm{\mu}}}_{j}-\mathbf{{\rm \bm{\mu}}}_{j+1}\|_{2}^{2}B_{1}+s_{0}^{2}\Big(4\\
	& +\frac{(\tau_{k+1}-\tau_{k-1})^{2}}{\tau_{k+1}-\tau_{k-1}-1}\Big)\Bigg)-\frac{1}{N}\|\mathbf{{\rm \bm{\mu}}}_{j}-\mathbf{{\rm \bm{\mu}}}_{j+1}\|_{2}^{2}B_{2}\\
	& =\varepsilon C_{1}-C_{2}<0
\end{align*}
where $C_{1}=\frac{1}{N}(\|\mathbf{{\rm \bm{\mu}}}_{j}-\mathbf{{\rm \bm{\mu}}}_{j+1}\|_{2}^{2}B_{1}+s_{0}^{2}(4+(\tau_{k+1}-\tau_{k-1})^{2}/(\tau_{k+1}-\tau_{k-1}-1)))$,
and $C_{2}=\frac{1}{N}\|\mathbf{{\rm \bm{\mu}}}_{j}-\mathbf{{\rm \bm{\mu}}}_{j+1}\|_{2}^{2}B_{2}$,
if $\varepsilon<\left(\frac{B_{1}}{B_{2}}+\frac{1}{B_{2}}\big(4+\frac{(\tau_{k+1}-\tau_{k-1})^{2}}{\tau_{k+1}-1-\tau_{k-1}}\big)\cdot\frac{s_{0}^{2}}{\|\mathbf{{\rm \bm{\mu}}}_{j}-\mathbf{{\rm \bm{\mu}}}_{j+1}\|_{2}^{2}}\right)^{-1}$,
or, equivalently, if $\beta<\frac{1}{2}\mathrm{log}^{-1}\Big(\frac{B_{1}}{B_{2}}+\frac{1}{B_{2}}\big(4+\frac{(\tau_{k+1}-\tau_{k-1})^{2}}{\tau_{k+1}-1-\tau_{k-1}}\big)\cdot\frac{s_{0}^{2}}{\|\mathbf{{\rm \bm{\mu}}}_{j}-\mathbf{{\rm \bm{\mu}}}_{j+1}\|_{2}^{2}}-1\Big)$.

\subsection{Detail Derivation of the Bound of $\bigtriangleup F_{k}(\tau)$ for
	Case 2}

Consider $t_{j}<i\leq\tau_{k+1}$. The lower bound of $u(\tau,t_{j},\beta)$
is given by $a(\tau,t_{j},\beta)\geq B_{4}-\varepsilon B_{3}$, where
$B_{3}=(t_{j}-\tau_{k-1})^{2}\left(\frac{\tau_{k+1}-\tau}{(\tau-\tau_{k-1})^{2}}+\frac{\tau-t_{j}-1}{(\tau-\tau_{k-1}-1)(t_{j}-\tau_{k-1})}+\frac{1}{t_{j}-\tau_{k-1}}\right)$,
and $B_{4}=(t_{j}-\tau_{k-1})^{2}\left(\frac{1}{\tau-\tau_{k-1}-1}-\frac{1}{\tau-\tau_{k-1}}\right)$.
Since $\tau_{k-1}<t_{j}<\tau<\tau_{k+1}$, one can verify that $B_{3}>0$
and $B_{4}>0$. In addition, the lower bound of $\gamma(\tau,\beta)$
is $\gamma(\tau,\beta)\geq-\frac{(\tau_{k+1}-\tau_{k-1})\varepsilon}{2}$.
As a result, the difference in (\ref{eq:dF-1}) can be lower bounded
as 
\begin{align*}
	\bigtriangleup F_{k}(\tau) & >\frac{1}{N}\|\mathbf{{\rm \bm{\mu}}}_{j}-\mathbf{{\rm \bm{\mu}}}_{j+1}\|_{2}^{2}\left(B_{4}-\varepsilon B_{3}\right)-\frac{(\tau_{k+1}-\tau_{k-1})\varepsilon}{2N}\\
	& =\frac{1}{N}\|\mathbf{{\rm \bm{\mu}}}_{j}-\mathbf{{\rm \bm{\mu}}}_{j+1}\|_{2}^{2}B_{4}-\frac{1}{N}\varepsilon\Big(\|\mathbf{{\rm \bm{\mu}}}_{j}-\mathbf{{\rm \bm{\mu}}}_{j+1}\|_{2}^{2}B_{3}\\
	& \qquad\qquad\qquad\qquad\qquad+\frac{\tau_{k+1}-\tau_{k-1}}{2}\Big)\\
	& =C_{2}'-\varepsilon C_{1}'>0
\end{align*}
where $C_{1}'=\frac{1}{N}(\|\mathbf{{\rm \bm{\mu}}}_{j}-\mathbf{{\rm \bm{\mu}}}_{j+1}\|_{2}^{2}B_{3}+\frac{\tau_{k+1}-\tau_{k-1}}{2})$,
and $C_{2}'=\frac{1}{N}\|\mathbf{{\rm \bm{\mu}}}_{j}-\mathbf{{\rm \bm{\mu}}}_{j+1}\|_{2}^{2}B_{4}$,
if $\varepsilon<\left(\frac{B_{3}}{B_{4}}+\frac{\tau_{k+1}-\tau_{k-1}}{2B_{4}}\cdot\frac{s_{0}^{2}}{\|\mathbf{{\rm \bm{\mu}}}_{j}-\mathbf{{\rm \bm{\mu}}}_{j+1}\|_{2}^{2}}\right)^{-1}$,
or, equivalently, if $\beta<\frac{1}{2}\mathrm{ln}^{-1}\Big(\frac{B_{3}}{B_{4}}+\frac{\tau_{k+1}-\tau_{k-1}}{2B_{4}}\cdot\frac{s_{0}^{2}}{\|\mathbf{{\rm \bm{\mu}}}_{j}-\mathbf{{\rm \bm{\mu}}}_{j+1}\|_{2}^{2}}-1\Big)$.


.

\section{Detail Derivation of Proof of Proposition \ref{prop:monotonicity-near-boundary}}
\label{app:deri-prop-mono}
\subsection{Detail Derivation of the Bound of $\bigtriangleup F_{k}(\tau)$ for
	Case 1}

Consider $t_{k-1}<\tau\leq t_{j}$. The cost function $f_{k}(\tau)$
can be written as 
\begin{align*}
	& f_{k}(\tau)=\frac{1}{N}\sum_{i=\tau_{k-1}+1}^{t_{j}}\left(1-\sigma_{\beta}\left(i-\tau\right)\right)g_{11}(\tau)\\
	& +\sum_{a=j+1}^{j+J+1}\sum_{i=t_{a-1}+1}^{t_{a}}\frac{\left(1-\sigma_{\beta}\left(i-\tau\right)\right)g_{12}(\tau)}{N}+\sum_{i=\tau_{k-1}+1}^{t_{j}}g_{21}(\tau)\\
	& \times\frac{\sigma_{\beta}\left(i-\tau\right)}{N}+\frac{1}{N}\sum_{a=j+1}^{j+J+1}\sum_{i=t_{a-1}+1}^{t_{a}}\sigma_{\beta}\left(i-\tau\right)g_{22}(\tau)
\end{align*}
where $g_{11}(\tau)=\|\bm{\epsilon}_{i}-\frac{1}{\tau-\tau_{k-1}}\sum_{l=\tau_{k-1}+1}^{\tau}\bm{\epsilon}_{l}\|_{2}^{2}$,
$g_{12}(\tau)=\|\mathbf{{\rm \bm{\mu}}}_{a}-\mathbf{{\rm \bm{\mu}}}_{j}+\bm{\epsilon}_{i}-\frac{1}{\tau-\tau_{k-1}}\sum_{l=\tau_{k-1}+1}^{\tau}\bm{\epsilon}_{l}\|_{2}^{2}$,
$g_{21}(\tau)=\|\frac{\tau_{k+1}-t_{j}}{\tau_{k+1}-\tau}\left(\mathbf{{\rm \bm{\mu}}}_{j}-\bm{\eta}(j+1,j+J+1)\right)+\bm{\epsilon}_{i}-\frac{1}{\tau_{k+1}-\tau}\sum_{l=\tau+1}^{\tau_{k+1}}\bm{\epsilon}_{l}\|_{2}^{2}$,
and $g_{22}(\tau)=\|\mathbf{{\rm \bm{\mu}}}_{a}-\frac{(t_{j}-\tau)\mathbf{{\rm \bm{\mu}}}_{j}+(\tau_{k+1}-t_{j})\bm{\eta}(j+1,j+J+1)}{\tau_{k+1}-\tau}+\bm{\epsilon}_{i}-\frac{1}{\tau_{k+1}-\tau}\sum_{l=\tau+1}^{\tau_{k+1}}\bm{\epsilon}_{l}\|_{2}^{2}$.

Since $\bm{\epsilon}_{l}$ are assumed to be i.i.d. following $\mathcal{N}(0,s^{2}\mathbf{I})$,
the expectation of $g_{11}(\tau)$ can be computed as 
\begin{align*}
	\mathbb{E}\left\{ g_{11}(\tau)\right\}  & =\mathbb{E}\{\bm{\epsilon}_{i}^{\mathrm{T}}\bm{\epsilon}_{i}\}-2\frac{1}{\tau}\mathbb{E}\Big\{\bm{\epsilon}_{i}^{\mathrm{T}}\sum_{j=\tau_{k-1}+1}^{\tau}\bm{\epsilon}_{j}\Big\}\\
	& \qquad+\mathbb{E}\Big\{\frac{1}{(\tau-\tau_{k-1})^{2}}\sum_{o,l=\tau_{k-1}+1}^{\tau}\bm{\epsilon}_{o}^{\mathrm{T}}\bm{\epsilon}_{l}\Big\}\\
	& =s_{0}^{2}-2s_{0}^{2}\frac{\mathbb{I}(i\leq\tau)}{\tau-\tau_{k-1}}+\frac{s_{0}^{2}}{\tau-\tau_{k-1}}\\
	& =s_{0}^{2}\Big(1+\frac{1}{\tau-\tau_{k-1}}-\frac{2\mathbb{I}(i\leq\tau)}{\tau-\tau_{k-1}}\Big).
\end{align*}

Likewise, define $\phi(\mathbf{{\rm \bm{\mu}}}_{i},\mathbf{{\rm \bm{\mu}}}_{j})=\|\mathbf{{\rm \bm{\mu}}}_{i}-\mathbf{{\rm \bm{\mu}}}_{j}\|_{2}^{2}$.
We obtain $\mathbb{E}\{g_{21}(\tau)\}=\left(\frac{\tau_{k+1}-t_{j}}{\tau_{k+1}-\tau}\right)^{2}\phi(\mathbf{{\rm \bm{\mu}}}_{l},\bm{\eta}(j+1,j+J+1))+s_{0}^{2}\left(1+\frac{1}{\tau_{k+1}-\tau}-\frac{2\mathbb{I}(i>\tau)}{\tau_{k+1}-\tau}\right)$,
$\mathbb{E}\left\{ g_{12}(\tau)\right\} =\phi(\mathbf{{\rm \bm{\mu}}}_{j},\mathbf{{\rm \bm{\mu}}}_{a})+s_{0}^{2}\left(1+\frac{1}{\tau-\tau_{k-1}}-\frac{2\mathbb{I}(i\leq\tau)}{\tau-\tau_{k-1}}\right)$,
and $\mathbb{E}\{g_{22}(\tau)\}=\|\mathbf{{\rm \bm{\mu}}}_{a}-\frac{(t_{j}-\tau)\mathbf{{\rm \bm{\mu}}}_{j}+(\tau_{k+1}-t_{j})\bm{\eta}(j+1,j+J+1)}{\tau_{k+1}-\tau}\|_{2}^{2}+s_{0}^{2}\left(1+\frac{1}{\tau_{k+1}-\tau}-\frac{2\mathbb{I}(i>\tau)}{\tau_{k+1}-\tau}\right)$.

Thus, the expectation of the cost function $F_{k}(\tau)=\mathbb{E}\{f_{k}(\tau;\tau_{k-1},\tau_{k+1})\}$
can be computed as
\begin{align}
	& F_{k}(\tau)=\frac{1}{N}\sum_{a=j+1}^{j+J+1}\sum_{i=t_{a-1}+1}^{t_{a}}\bigg\{\sigma_{\beta}\left(i-\tau\right)\Big\|\mathbf{{\rm \bm{\mu}}}_{a}-\frac{(t_{j}-\tau)\mathbf{{\rm \bm{\mu}}}_{j}}{\tau_{k+1}-\tau}\label{eq:F K=00003D2}\\
	& -\frac{(\tau_{k+1}-t_{j})\bm{\eta}(j+1,j+J+1)}{\tau_{k+1}-\tau}\Big\|_{2}^{2}+\Big(1-\sigma_{\beta}\left(i-\tau\right)\Big)\nonumber \\
	& \times||\mathbf{{\rm \bm{\mu}}}_{j}-\mathbf{{\rm \bm{\mu}}}_{a}||_{2}^{2}\bigg\}+\frac{1}{N}\left(\frac{\tau_{k+1}-t_{j}}{\tau_{k+1}-\tau}\right)^{2}\sum_{i=\tau_{k-1}+1}^{t_{j}}\sigma_{\beta}\left(i-\tau\right)\nonumber \\
	& \times\|\mathbf{{\rm \bm{\mu}}}_{j}-\bm{\eta}(j+1,j+J+1)\|_{2}^{2}+\frac{s_{0}^{2}}{N}\bigg\{\left(\tau_{k+1}+\frac{\tau_{k+1}}{\tau-\tau_{k-1}}\right)\nonumber \\
	& -2+\sum_{i=\tau_{k-1}+1}^{\tau_{k+1}}\sigma_{\beta}\left(i-\tau\right)\left(\frac{1}{\tau_{k+1}-\tau}-\frac{1}{\tau-\tau_{k-1}}\right)\nonumber \\
	& +2\sum_{i=\tau_{k-1}+1}^{\tau}\frac{\sigma_{\beta}\left(i-\tau\right)}{\tau-\tau_{k-1}}-2\sum_{i=\tau+1}^{\tau_{k+1}}\frac{\sigma_{\beta}\left(i-\tau\right)}{\tau_{k+1}-\tau}\bigg\}.\nonumber 
\end{align}
In addition, the difference $\triangle F_{k}(\tau)=F_{k}(\tau)-F_{k}(\tau-1)$
can be computed as
\begin{align}
	\triangle F_{k}(\tau) & =\frac{1}{N}\phi(\mathbf{{\rm \bm{\mu}}}_{j},\bm{\eta}(j+1,j+J+1))u(\tau,\{t_{k}\}_{k=j}^{j+J},\beta)\label{eq:dF}\\
	& \qquad\qquad+\frac{1}{N}s_{0}^{2}\gamma(\tau,\beta)\nonumber 
\end{align}
where
\begin{align*}
	& u(\tau,\{t_{k}\}_{k=j}^{j+J},\beta)\\
	& =\bigg\{\Big(\frac{\tau_{k+1}-t_{j}}{\tau_{k+1}-\tau}\Big)^{2}\sum_{a=j+1}^{j+J+1}\sigma_{\beta}\left(t_{a-1}+1\right)-\Big(\frac{\tau_{k+1}-t_{j}}{\tau_{k+1}-\tau+1}\Big)^{2}\\
	& \times\sum_{a=j+1}^{j+J+1}\sigma_{\beta}\left(t_{a}+1\right)+\bigg(\Big(\frac{\tau_{k+1}-t_{j}}{\tau_{k+1}-\tau}\Big)^{2}-\Big(\frac{\tau_{k+1}-t_{j}}{\tau_{k+1}-\tau+1}\Big)^{2}\bigg)\\
	& \times\sum_{i=t_{j}+2}^{\tau_{k+1}}\sigma_{\beta}\left(i-\tau\right)\bigg\}+\bigg\{\Big(\frac{\tau_{k+1}-t_{j}}{\tau_{k+1}-\tau}\Big)^{2}\sum_{i=\tau_{k-1}+1}^{t_{j}}\sigma_{\beta}\left(i-\tau\right)\\
	& -\left(\frac{\tau_{k+1}-t_{j}}{\tau_{k+1}-\tau+1}\right)^{2}\sum_{i=\tau_{k-1}+2}^{t_{j}+1}\sigma_{\beta}\left(i-\tau\right)\bigg\}\\
	& -\frac{2(\tau_{j+J+1}-t_{j})}{\phi(\mathbf{{\rm \bm{\mu}}}_{j},\bm{\eta}(j+1,j+J+1))}\left(\mathbf{{\rm \bm{\mu}}}_{j}-\bm{\eta}(j+1,j+J+1)\right)^{\mathrm{T}}\\
	& \times\sum_{a=j+1}^{j+J+1}\left(\mathbf{{\rm \bm{\mu}}}_{j}-\mathbf{{\rm \bm{\mu}}}_{a}\right)\bigg\{\frac{\sigma_{\beta}\left(t_{a-1}+1\right)}{\tau_{k+1}-\tau}-\frac{\sigma_{\beta}\left(t_{a}+1\right)}{\tau_{k+1}-\tau+1}\\
	& +\Big(\frac{1}{\tau_{k+1}-\tau}-\frac{1}{\tau_{k+1}-\tau+1}\Big)\sum_{i=t_{a-1}+2}^{t_{a}}\sigma_{\beta}\left(i-\tau\right)\bigg\}
\end{align*}
and
\begin{align*}
	& \gamma(\tau,\beta)=(\tau_{k+1}-\tau_{k-1})\left(\frac{1}{\tau-\tau_{k-1}}-\frac{1}{\tau-1-\tau_{k-1}}\right)\\
	& +\frac{\sum_{i=\tau_{k-1}+1}^{\tau}\sigma_{\beta}\left(i-\tau\right)}{\tau_{k+1}-\tau}-\frac{\sum_{i=\tau_{k-1}+2}^{\tau}\sigma_{\beta}\left(i-\tau\right)}{\tau_{k+1}-\tau+1}\\
	& +\frac{\sum_{i=\tau+1-\tau_{k-1}}^{\tau_{k+1}+1}\sigma_{\beta}\left(i-\tau\right)}{\tau-1-\tau_{k-1}}-\frac{\sum_{i=\tau+1}^{\tau_{k+1}}\sigma_{\beta}\left(i-\tau\right)}{\tau-\tau_{k-1}}\\
	& +\frac{\sum_{i=\tau_{k-1}+1}^{\tau}\sigma_{\beta}\left(i-\tau\right)}{\tau-\tau_{k-1}}-\frac{\sum_{i=\tau+1}^{\tau_{k+1}}\sigma_{\beta}\left(i-\tau\right)}{\tau_{k+1}-\tau}\\
	& +\frac{\sum_{i=t+1}^{\tau_{k+1}+1}\sigma_{\beta}\left(i-\tau\right)}{\tau_{k+1}-\tau+1}-\frac{\sum_{i=\tau_{k-1}+2}^{\tau}\sigma_{\beta}\left(i-\tau\right)}{\tau-\tau_{k-1}-1}.
\end{align*}

The term $u(\tau,\{t_{k}\}_{k=j}^{j+J},\beta)$ can be upper bounded
as $u(\tau,\{t_{k}\}_{k=j}^{j+J},\beta)\leq\varepsilon B_{1}-B_{2}$,
where $B_{1}=(\tau_{k+1}-t_{j})^{2}\left(\frac{\tau}{\left(\tau_{k+1}-\tau\right)^{2}}+\frac{2}{\tau_{k+1}-\tau}+\frac{1}{\tau_{k+1}-\tau+1}+\frac{J}{\left(\tau_{k+1}-\tau+1\right)^{2}}\right)$,
and $B_{2}=(\tau_{k+1}-t_{j})^{2}\left(\frac{\tau_{k+1}-\tau-J}{\left(\tau_{k+1}-\tau\right)^{2}}-\frac{\tau_{k+1}-\tau+1-J}{\left(\tau_{k+1}-\tau+1\right)^{2}}\right)$.
Since $\tau_{k-1}<\tau<t_{j}<\tau_{k+1}$, it can be easily verified
that $B_{1}>0$ and $B_{2}>0$ for $\tau<\tau_{k+1}-J$. The equality
$\tau<\tau_{k+1}-J$ always holds for Case 1.

Likewise, the term $\gamma(\tau,\beta)$ can be bounded as $-\varepsilon\frac{(\tau_{k+1}-\tau_{k-1})^{2}}{(\tau_{k+1}-\tau+1)(\tau-1-\tau_{k-1})}<\gamma(\tau,\beta)<\varepsilon\frac{(\tau_{k+1}-\tau_{k-1})^{2}}{(\tau-\tau_{k-1})(\tau_{k+1}-\tau)}$.

As a result, the difference in (\ref{eq:dF}) can be upper bounded
as 
\begin{align}
	\bigtriangleup F_{k}(\tau) & <\phi(\mathbf{{\rm \bm{\mu}}}_{j},\bm{\eta}(j+1,j+J+1))\left(\varepsilon B_{1}-B_{2}\right)\nonumber \\
	& \qquad\qquad+s_{0}^{2}\varepsilon\frac{(\tau_{k+1}-\tau_{k-1})^{2}}{\tau(\tau_{k+1}-\tau)}\nonumber \\
	& =\varepsilon\Big(\phi(\mathbf{{\rm \bm{\mu}}}_{j},\bm{\eta}(j+1,j+J+1))B_{1}\nonumber \\
	& \qquad+\frac{s_{0}^{2}}{\tau-\tau_{k-1}}\frac{(\tau_{k+1}-\tau_{k-1})^{2}}{\tau_{k+1}-\tau}\Big)\nonumber \\
	& \qquad-\phi(\mathbf{{\rm \bm{\mu}}}_{j},\bm{\eta}(j+1,j+J+1))B_{2}\nonumber \\
	& =\varepsilon C_{3}-C_{4}<0.\label{eq:dF_1}
\end{align}
where $C_{3}=\phi(\mathbf{{\rm \bm{\mu}}}_{j},\bm{\eta}(j+1,j+J+1))B_{1}+\frac{s_{0}^{2}}{\tau-\tau_{k-1}}\frac{(\tau_{k+1}-\tau_{k-1})^{2}}{\tau_{k+1}-\tau}$,
and $C_{4}=\phi(\mathbf{{\rm \bm{\mu}}}_{j},\bm{\eta}(j+1,j+J+1))B_{2}$,
if $\varepsilon<\left(\frac{B_{1}}{B_{2}}+\frac{(\tau_{k+1}-\tau_{k-1})^{2}}{(\tau-\tau_{k-1})(\tau_{k+1}-\tau)B_{2}}\cdot\frac{s_{0}^{2}}{\phi(\mathbf{{\rm \bm{\mu}}}_{j},\bm{\eta}(j+1,j+J+1))}\right)^{-1}$,
or, equivalently, if $\beta<\frac{1}{2}\mathrm{ln}^{-1}\Big(\frac{B_{1}}{B_{2}}+\frac{(\tau_{k+1}-\tau_{k-1})^{2}}{(\tau-\tau_{k-1})(\tau_{k+1}-\tau)B_{2}}\cdot\frac{s_{0}^{2}}{\phi(\mathbf{{\rm \bm{\mu}}}_{j},\bm{\eta}(j+1,j+J+1))}-1\Big)$,
and $\phi(\mathbf{{\rm \bm{\mu}}}_{j},\bm{\eta}(j+1,j+J+1))\neq0$.
In addition, 
\[
\bigtriangleup F(\tau)>-\phi(\mathbf{{\rm \bm{\mu}}}_{j},\bm{\eta}(j+1,j+J+1))B_{2}
\]
and 
\begin{align*}
	& \bigtriangleup F(\tau)<\frac{1}{1+\mathrm{exp}(\frac{1}{2\beta})}\Big(\phi(\mathbf{{\rm \bm{\mu}}}_{j},\bm{\eta}(j+1,j+J+1))B_{1}\\
	& +s_{0}^{2}\frac{(\tau_{k+1}-\tau_{k-1})^{2}}{(\tau-\tau_{k-1})(\tau_{k+1}-\tau)}\Big)-\phi(\mathbf{{\rm \bm{\mu}}}_{j},\bm{\eta}(j+1,j+J+1))B_{2}
\end{align*}
for $\forall\beta>0$.

\subsection{Detail Derivation of the Bound of $\bigtriangleup F_{k}(\tau)$ for
	Case 2}

Consider $t_{j+J}<\tau\leq\tau_{k+1}$. The difference in (\ref{eq:dF})
can be lower bounded as
\begin{align}
	\bigtriangleup F_{k}(\tau) & >\phi(\bm{\eta}(j,j+J),\mathbf{{\rm \bm{\mu}}}_{K})\left(B_{4}-\varepsilon B_{3}\right)\nonumber \\
	& \qquad-\varepsilon\frac{(\tau_{k+1}-\tau_{k-1})^{2}}{(\tau_{k+1}-\tau+1)(\tau-\tau_{k-1}-1)}\nonumber \\
	& =-\varepsilon\Bigg(\phi(\bm{\eta}(j,j+J),\mathbf{{\rm \bm{\mu}}}_{j+J+1})B_{3}+\frac{(\tau_{k+1}-\tau_{k-1})^{2}}{(\tau_{k+1}-\tau+1)}\nonumber \\
	& \qquad\times\frac{1}{(\tau-\tau_{k-1}-1)}\Bigg)+\phi(\bm{\eta}(j,j+J),\mathbf{{\rm \bm{\mu}}}_{K})B_{4}\nonumber \\
	& =-\varepsilon C_{3}'+C_{4}'>0\label{eq:dF_2}
\end{align}
where $C_{3}'=\phi(\bm{\eta}(j,j+J),\mathbf{{\rm \bm{\mu}}}_{j+J+1})B_{3}+\frac{(\tau_{k+1}-\tau_{k-1})^{2}}{(\tau_{k+1}-\tau+1)}\frac{1}{(\tau-\tau_{k-1}-1)}$,
and $C_{4}'=\phi(\bm{\eta}(j,j+J),\mathbf{{\rm \bm{\mu}}}_{K})B_{4}$,
if $\varepsilon<\left(\frac{B_{3}}{B_{4}}+\frac{(\tau_{k+1}-\tau_{j})^{2}}{(\tau_{k+1}-t+1)(\tau-\tau_{k-1}-1)B_{4}}\cdot\frac{s_{0}^{2}}{\phi(\bm{\eta}(j,j+J),\mathbf{{\rm \bm{\mu}}}_{j+J+1})}\right)^{-1}$,
or, equivalently, if 
\begin{align*}
	\beta & <\frac{1}{2}\Big[\mathrm{ln}\Big(\frac{B_{3}}{B_{4}}+\frac{(\tau_{k+1}-\tau_{k-1})^{2}}{(\tau_{k+1}-\tau+1)(\tau-\tau_{k-1}-1)B_{4}}\\
	& \qquad\qquad\times\frac{s_{0}^{2}}{\phi(\bm{\eta}(j,j+J),\mathbf{{\rm \bm{\mu}}}_{j+J+1})}-1\Big)\Big]^{-1}
\end{align*}
and $\phi(\bm{\eta}(j,j+J),\mathbf{{\rm \bm{\mu}}}_{j+J+1})\neq0$.
In addition, $\bigtriangleup F(t)<\phi(\bm{\eta}(j,j+J),\mathbf{{\rm \bm{\mu}}}_{j+J+1})B_{4}$
and
\begin{align*}
	\bigtriangleup F(\tau) & >-\frac{1}{1+\mathrm{exp}(\frac{1}{2\beta})}\Big(\phi(\bm{\eta}(j,j+J),\mathbf{{\rm \bm{\mu}}}_{j+J+1})B_{3}\\
	& \qquad+\frac{(\tau_{k+1}-\tau_{j})^{2}}{(\tau_{k+1}-\tau+1)(\tau-\tau_{k-1}-1)}\Big)\\
	& \qquad\qquad+\phi(\bm{\eta}(j,j+J),\mathbf{{\rm \bm{\mu}}}_{j+J+1})B_{4}
\end{align*}
for $\forall\beta>0$.

\bibliographystyle{IEEEtran}
\bibliography{my_ref}

\begin{IEEEbiography}[{\includegraphics[width=1in,height=1.25in,clip,keepaspectratio]{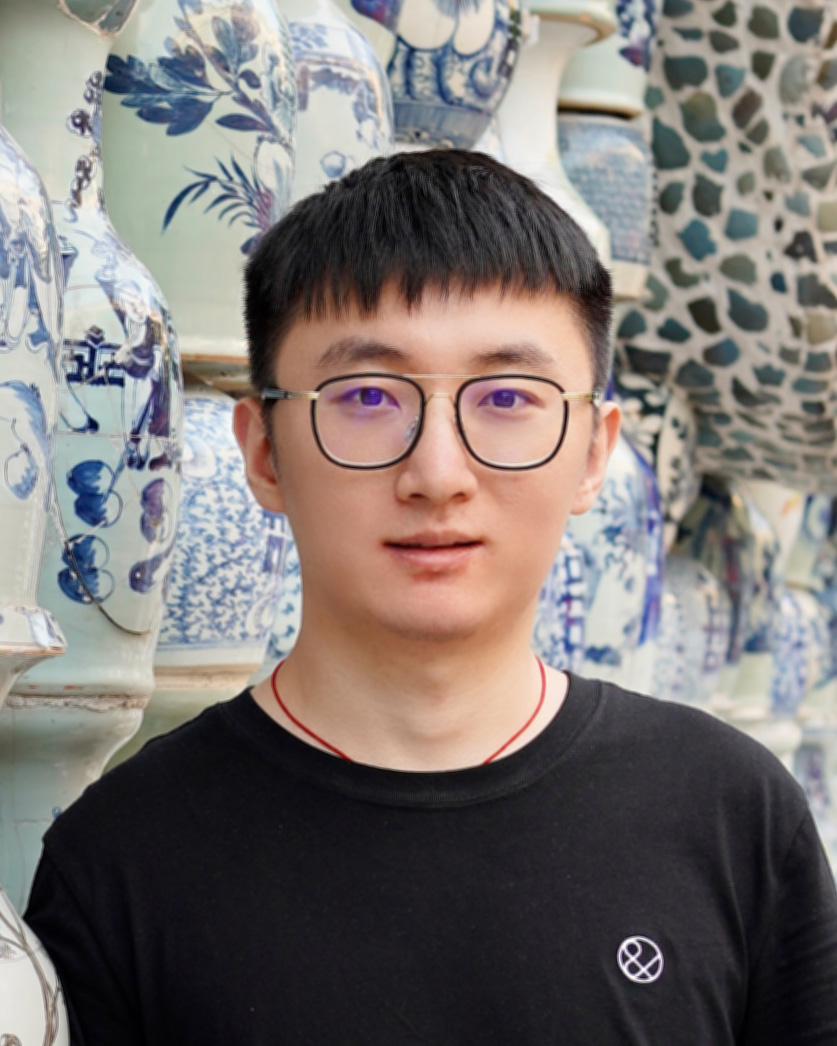}}]{Zheng Xing}(Student Member, IEEE)  received the B.Eng. degree from Ocean University of China, Qingdao, China, in 2017, and the M.Eng. degree from Beihang University, Beijing, China, in 2020. He is currently working toward the Ph.D.\ degree with the School of Science and Engineering and the Future Network of Intelligence Institute at the Chinese University of Hong Kong, Shenzhen, Guangdong, China. His research interests include channel estimation, MIMO beamforming, machine learning, and optimization for radio map reconstruction and localization.

\end{IEEEbiography} 
\begin{IEEEbiography}[{\includegraphics[width=1in,height=1.25in,clip,keepaspectratio]{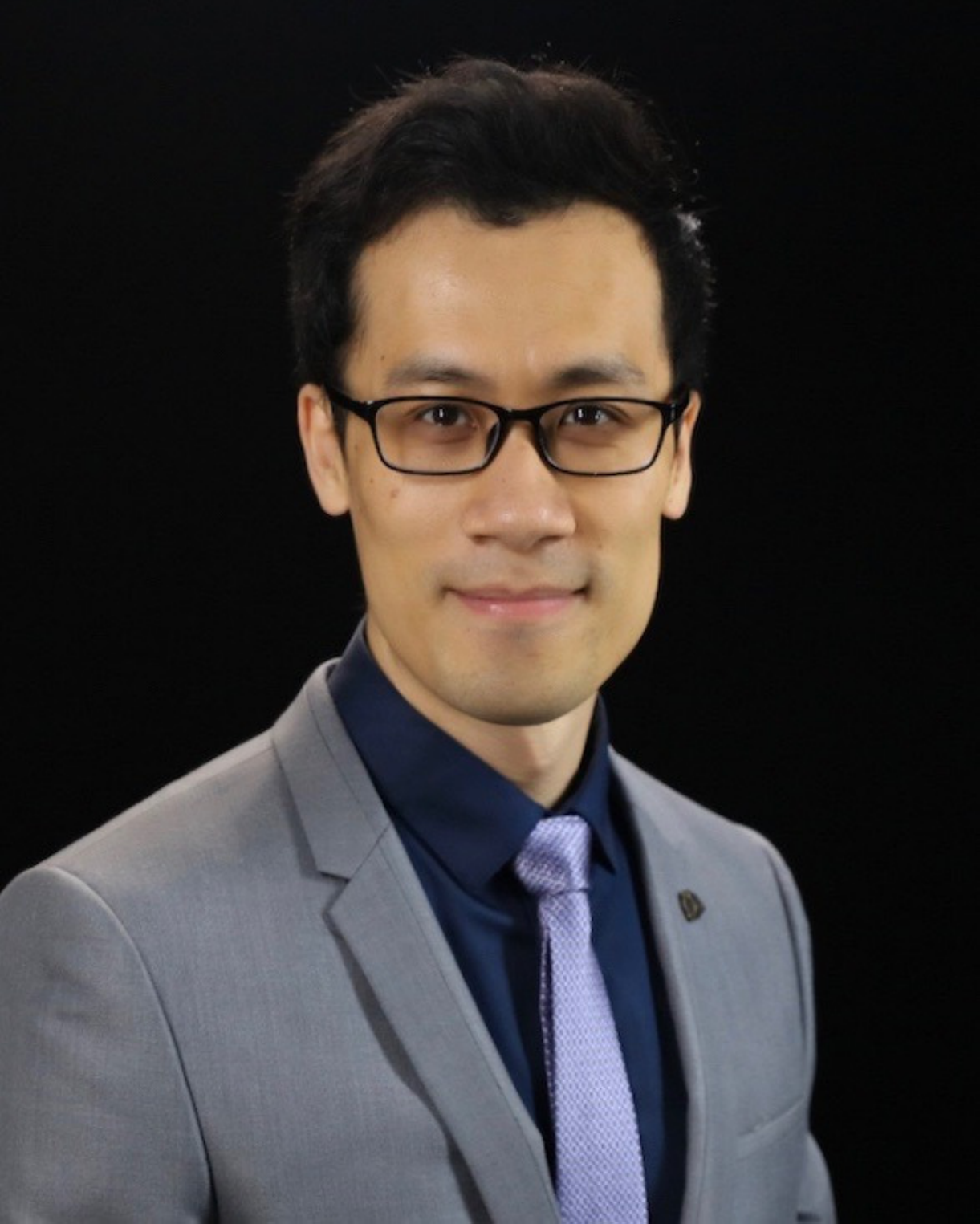}}]{Junting Chen} (S'11--M'16) received the Ph.D.\ degree in Electronic and Computer Engineering from the Hong Kong University of Science and Technology (HKUST), Hong Kong SAR China, in 2015, and the B.Sc.\ degree in Electronic Engineering from Nanjing University, Nanjing, China, in 2009. From 2014--2015, he was a visiting student with the Wireless Information and Network Sciences Laboratory at MIT, Cambridge, MA, USA.  
	
	He is an Assistant Professor with the School of Science and Engineering and the Future Network of Intelligence Institute (FNii) at The Chinese University of Hong Kong, Shenzhen (CUHK--Shenzhen), Guangdong, China. Prior to joining CUHK--Shenzhen, he was a Postdoctoral Research Associate with the Ming Hsieh Department of Electrical Engineering, University of Southern California (USC), Los Angeles, CA, USA, from 2016--2018, and with the Communication Systems Department of EURECOM, Sophia--Antipolis, France, from 2015--2016. His research interests include channel estimation, MIMO beamforming, machine learning, and optimization for wireless communications and localization. His current research focuses on radio map sensing, construction, and application for wireless communications. Dr. Chen was a recipient of the HKTIIT Post-Graduate Excellence Scholarships in 2012. He was nominated as the Exemplary Reviewer of {\scshape IEEE Wireless Communications Letters} in 2018. His paper received the Charles Kao Best Paper Award from WOCC 2022.
	
\end{IEEEbiography}

\end{document}